\definecolor{toccolor}{RGB}{139,0,139}
\newcommand\Ccancel[2][red]{\renewcommand\CancelColor{\color{#1}}\cancel{#2}}
\newcommand*\circled[1]{\tikz[baseline=(char.base)]{
            \node[shape=circle,draw,inner sep=2pt] (char) {#1};}}
\title{Diffusion Models: A Mathematical Introduction}
\author[1]{Sepehr Maleki\thanks{\href{mailto:smaleki@lincoln.ac.uk}{smaleki@lincoln.ac.uk}}}
\author[2]{Negar Pourmoazemi\thanks{\href{mailto:negar.pourmoazemi@thetrainline.com}{negar.pourmoazemi@thetrainline.com}}}
\affil[1]{Lincoln AI Lab, University of Lincoln, Lincoln, The UK}
\affil[2]{Trainline, London, The UK}
\date{}
\begin{document}
\maketitle

\begin{abstract}
\noindent We present a concise, self-contained derivation of diffusion-based generative models. Starting from basic properties of Gaussian distributions (densities, quadratic expectations, re-parameterisation, products, and KL divergences), we construct denoising diffusion probabilistic models from first principles. This includes the forward noising process, its closed-form marginals, the exact discrete reverse posterior, and the related variational bound. This bound simplifies to the standard noise-prediction goal used in practice. We then discuss likelihood estimation and accelerated sampling, covering DDIM, adversarially learned reverse dynamics (DDGAN), and multi-scale variants such as nested and latent diffusion, with Stable Diffusion as a canonical example. A continuous-time formulation follows, in which we derive the probability-flow ODE from the diffusion SDE via the continuity and Fokker-Planck equations, introduce flow matching, and show how rectified flows recover DDIM up to a time re-parameterisation. Finally, we treat guided diffusion, interpreting classifier guidance as a posterior score correction and classifier-free guidance as a principled interpolation between conditional and unconditional scores. Throughout, the focus is on transparent algebra, explicit intermediate steps, and consistent notation, so that readers can both follow the theory and implement the corresponding algorithms in practice.
\end{abstract}

\tableofcontents
\newpage

\section{Introduction}
\label{sec:introduction}

Denoising diffusion models have become a central paradigm in deep generative modelling. They construct a tractable \emph{forward} (diffusion) process that gradually corrupts data with Gaussian noise and then learn a \emph{reverse} (generative) process that denoises step by step. This simple idea—noise in, data out—scales remarkably well, delivering state-of-the-art results across images, text-to-image synthesis, audio, and beyond.\medskip

The mathematics, however, can feel opaque at first glance. One must keep track of coupled Markov chains and their Gaussian conditionals, derive a variational (ELBO) objective, choose and interpret noise schedules, and understand how conditioning mechanisms (classifier-based and classifier-free guidance) alter the reverse dynamics. A continuous-time view further links the discrete model to stochastic differential equations (SDEs) and the associated probability–flow ordinary differential equation (ODE), while recent work reframes sampling as learning a velocity field via flow matching. This tutorial develops these ingredients \emph{from first principles} with a consistent notation and complete intermediate steps.\medskip

We begin with Gaussian preliminaries (densities, affine transforms, products, KL divergence) to fix identities used throughout. We then build Denoising Diffusion Probabilistic Models (DDPMs) carefully: the forward and reverse chains, the exact DDPM posterior, and the variational loss, showing how the standard training objective arises under the $\epsilon$-prediction parameterisation. Along the way we make explicit the relationships among $\epsilon$-, $\hat{\mathbf{x}}_0$-, and velocity parameterisations, clarify the reverse mean factor $\beta_t/\sqrt{1-\bar{\alpha}_t}$, and give the role of the posterior variance $\tilde{\beta}_t$. Next, we connect discrete diffusion to the score-based and probability–flow viewpoints and explain how these perspectives motivate accelerated samplers such as DDIM. We then profile practical accelerations and latent-space methods (e.g., Stable Diffusion), and treat guided diffusion in detail—deriving classifier guidance as a posterior-score modification and classifier-free guidance as a principled conditional–unconditional blend, with remarks on time-varying guidance schedules, numerical stability, and common failure modes. Finally, we cover flow matching, including conditional and marginal objectives and straight-line (rectified) flows, with concise proofs to keep the development accessible yet rigorous.\medskip

The tutorial targets researchers and students seeking a working, mathematical understanding of diffusion-based generators. Prior exposure to probability, basic linear algebra for Gaussians, and deep learning fundamentals is helpful but not strictly required; the text is self-contained and proofs are kept elementary whenever possible. Our goal is to replace folklore and code-first heuristics with transparent algebra and immediately usable formulas.\medskip

\subsection*{Reader’s Roadmap}
If you are new to the topic, start with the Gaussian preliminaries and the DDPM derivation, which establish the notation and the variational objective. Readers primarily interested in faster sampling may jump to the acceleration material (DDIM and related methods) and the latent-space section on Stable Diffusion. Those focused on controllability should see the guided diffusion section (classifier and classifier-free guidance, schedules, and pitfalls). The flow matching section unifies the ODE perspective and offers an alternative training route.

\subsection*{Notation}

All random vectors live in $\mathbb{R}^d$ unless stated. We use bold symbols for vectors/matrices (e.g., $\mathbf{x}, \mathbf{z}, \mathbf{\Sigma}$); the identity is $\mathbf{I}_d$ (we simply write $\mathbf{I}$ when the dimension is clear). The Euclidean norm is $\|\cdot\|_2$, inner products are $\langle \cdot,\cdot\rangle$, the trace is $\mathrm{tr}(\cdot)$, and determinants are $\det(\cdot)$. A Gaussian with mean $\boldsymbol{\mu}$ and covariance $\mathbf{\Sigma}$ is $\mathcal{N}(\boldsymbol{\mu},\mathbf{\Sigma})$. Expectations are $\mathbb{E}[\cdot]$, variances $\mathrm{Var}[\cdot]$, and $\mathrm{KL}(P\|Q)$ denotes Kullback–Leibler divergence. We write $\mathrm{law}(\mathbf{X})$ for the distribution of a random vector $\mathbf{X}$.

Data are $\mathbf{x}_0\!\sim p_{\text{data}}$; corrupted states are $\mathbf{x}_t$ for discrete $t\in\{1,\dots,T\}$. The model distribution is $p_\theta$ and the forward (noising) distribution is $q$. In particular, the forward chain is
\[
q(\mathbf{x}_{1:T}\mid \mathbf{x}_0)=\prod_{t=1}^T q(\mathbf{x}_t\mid \mathbf{x}_{t-1}),
\qquad
q(\mathbf{x}_t\mid \mathbf{x}_{t-1})
= \mathcal{N}\!\big(\sqrt{\alpha_t}\,\mathbf{x}_{t-1},\,\beta_t\,\mathbf{I}\big),
\]
with schedule parameters $\alpha_t\in(0,1)$ and $\beta_t:=1-\alpha_t$. We use the cumulative product
\[
\bar{\alpha}_t := \prod_{i=1}^t \alpha_i,\qquad
\mathrm{SNR}_t := \frac{\bar{\alpha}_t}{1-\bar{\alpha}_t},\qquad
\ell_t := \log \mathrm{SNR}_t.
\]
The closed form marginal and reparameterisation are
\[
q(\mathbf{x}_t\mid \mathbf{x}_0)=\mathcal{N}\!\big(\sqrt{\bar{\alpha}_t}\,\mathbf{x}_0,\,(1-\bar{\alpha}_t)\mathbf{I}\big),
\qquad
\mathbf{x}_t=\sqrt{\bar{\alpha}_t}\,\mathbf{x}_0+\sqrt{1-\bar{\alpha}_t}\,\boldsymbol{\epsilon},\ \boldsymbol{\epsilon}\sim\mathcal{N}(\mathbf{0},\mathbf{I}).
\]

The reverse (denoising) conditionals are
\[
p_\theta(\mathbf{x}_{t-1}\mid \mathbf{x}_t)
=\mathcal{N}\!\big(\mu_\theta(\mathbf{x}_t,t),\,\sigma_t^2\mathbf{I}\big),
\quad
\mu_\theta(\mathbf{x}_t,t)=\frac{1}{\sqrt{\alpha_t}}\!\left(\mathbf{x}_t-\frac{\beta_t}{\sqrt{1-\bar{\alpha}_t}}\ \hat{\boldsymbol{\epsilon}}_\theta(\mathbf{x}_t,t)\right),
\]
where $\hat{\boldsymbol{\epsilon}}_\theta$ is the learned noise predictor and we use the posterior variance
\[
\tilde{\beta}_t := \frac{1-\bar{\alpha}_{t-1}}{1-\bar{\alpha}_t}\,\beta_t,
\qquad
\sigma_t^2\in\{\tilde{\beta}_t,\ 0\}\ \text{for DDPM/DDIM respectively.}
\]
We write $\hat{\mathbf{x}}_0(\mathbf{x}_t,t)$ for an $x_0$-prediction when used, and $s_t(\mathbf{x}) := \nabla_{\mathbf{x}}\log p_t(\mathbf{x})$ for the (true or learned) score at time $t$.

For continuous-time notation, $\mathbf{X}_t$ denotes the state at $t\in[0,T]$, Brownian motion is $\mathbf{W}_t$, and the diffusion SDE is
\[
\mathrm{d}\mathbf{X}_t=\mathbf{f}(\mathbf{X}_t,t)\,\mathrm{d}t+g(t)\,\mathrm{d}\mathbf{W}_t,
\]
with probability–flow ODE velocity $\mathbf{v}(\mathbf{x},t)$ producing the same time marginals. In flow-matching sections we use $\mathbf{v}_\theta(\mathbf{x},t)$ for a learned velocity, and $\psi_t(\cdot)$ for an interpolant between endpoints.

Conditioning variables are $\mathbf{c}$; classifier-free guidance uses a scale $\lambda\ge 0$ and the blended predictor
\[
\hat{\boldsymbol{\epsilon}}_\lambda
= \hat{\boldsymbol{\epsilon}}_{\text{uncond}}
+\lambda\big(\hat{\boldsymbol{\epsilon}}_{\text{cond}}-\hat{\boldsymbol{\epsilon}}_{\text{uncond}}\big),
\]
with the score-form $s_t^{(\lambda)}(\mathbf{x})=-(1/\sqrt{1-\bar{\alpha}_t})\,\hat{\boldsymbol{\epsilon}}_\lambda(\mathbf{x},t,\mathbf{c})$.
When discussing latent-space methods, latents are denoted $\mathbf{z}_t$ and are treated as vectors inside Gaussian densities (implicit flattening).
\medskip

\section{Preliminaries}

\subsection{Density of an isotropic Gaussian}

Let $\x \in \mathbb{R}^d$ and consider the $d$-dimensional Gaussian $\mathcal{N}(\boldsymbol{\mu}, \mathbf{\Sigma})$. Its Probability Density Function (PDF) is
\begin{equation}
\label{eq:normPDF}
p(\x) \;=\; \frac{1}{(2\pi)^{d/2}\,\mathrm{det}(\mathbf{\Sigma})^{1/2}}
\exp\!\Big(-\tfrac{1}{2}\,(\x-\boldsymbol{\mu})^\top \mathbf{\Sigma}^{-1} (\x-\boldsymbol{\mu})\Big),
\end{equation}
where $\boldsymbol{\mu}\in\mathbb{R}^d$ and $\mathbf{\Sigma}\in\mathbb{R}^{d\times d}$ is symmetric positive-definite.\medskip

An \emph{isotropic} Gaussian is the special case with covariance $\mathbf{\Sigma}=\sigma^2 \mathbf{I}$ for some $\sigma>0$, i.e., the variance is the same in every direction (equivalently, the distribution is rotationally invariant). In this case,
\[
\mathrm{det}(\mathbf{\Sigma})=\sigma^{2d},
\qquad
\mathbf{\Sigma}^{-1}=\frac{1}{\sigma^2}\,\mathbf{I},
\]
and the quadratic (Mahalanobis) term in \eqref{eq:normPDF} reduces to
\[
(\x-\boldsymbol{\mu})^\top \mathbf{\Sigma}^{-1} (\x-\boldsymbol{\mu})
= \frac{1}{\sigma^2}\,(\x-\boldsymbol{\mu})^\top(\x-\boldsymbol{\mu})
= \frac{1}{\sigma^2}\,||\x-\boldsymbol{\mu}||_2^2.
\]

where $\|\x-\boldsymbol{\mu}\|_2^2$ is the squared $\ell_2$-norm (Euclidean distance). Therefore,
\begin{equation}
\label{eq:isotropicPDF}
p(\x) \;=\; \frac{1}{(2\pi\sigma^2)^{d/2}}\,
\exp\!\Big(-\frac{1}{2\sigma^2}\,\|\x-\boldsymbol{\mu}\|_2^2\Big).
\end{equation}

\medskip

\subsection{Expectation of a quadratic form}

We will often encounter quadratic expressions of the form
$(\x-\boldsymbol{\mu}_p)^\top \mathbf{A}\,(\x-\boldsymbol{\mu}_p)$
under a Gaussian distribution. This subsection gives the identities we will use later (e.g., in the Gaussian KL derivation) and provides short proofs.

\begin{definition}[Trace]
For a matrix $\mathbf{A}\in\mathbb{R}^{d\times d}$, the \emph{trace} is the sum of its diagonal entries:
\[
\mathrm{tr}(\mathbf{A}) \;=\; \sum_{i=1}^d A_{ii}.
\]
\end{definition}

\begin{lemma}[Trace identities]\label{lem:trace-identities}
Let $\mathbf{A},\mathbf{B}\in\mathbb{R}^{d\times d}$ and $\mathbf{u},\mathbf{v}\in\mathbb{R}^d$. Then:
\begin{enumerate}\itemsep4pt
  \item \textbf{Linearity:} For all scalars $\alpha,\beta\in\mathbb{R}$,
  \[
  \mathrm{tr}(\alpha \mathbf{A} + \beta \mathbf{B}) \;=\; \alpha\,\mathrm{tr}(\mathbf{A}) + \beta\,\mathrm{tr}(\mathbf{B}).
  \]
  \item \textbf{Cyclicity:} Whenever the products are defined,
  \[
  \mathrm{tr}(\mathbf{A}\mathbf{B}) \;=\; \mathrm{tr}(\mathbf{B}\mathbf{A}).
  \]
  \item \textbf{Rank-one case:}
  \[
  \mathrm{tr}(\mathbf{u}\mathbf{v}^\top) \;=\; \mathbf{v}^\top \mathbf{u}.
  \]
\end{enumerate}
\end{lemma}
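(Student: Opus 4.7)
My plan is to handle each of the three items directly from the definition $\mathrm{tr}(\mathbf{A}) = \sum_{i=1}^d A_{ii}$, since linearity and cyclicity are really statements about manipulating double sums of matrix entries. I will take the items in order, and I expect none of them to present a real obstacle: the only thing to be careful about is the order-of-summation swap in the cyclicity proof, which is fine because the sums are finite.

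For \emph{linearity}, I would observe that the $(i,i)$ entry of $\alpha\mathbf{A}+\beta\mathbf{B}$ is simply $\alpha A_{ii} + \beta B_{ii}$; summing over $i$ and applying linearity of finite sums yields the claim in one line. For \emph{cyclicity}, I would expand
\[
\mathrm{tr}(\mathbf{A}\mathbf{B}) = \sum_{i=1}^d (\mathbf{A}\mathbf{B})_{ii} = \sum_{i=1}^d \sum_{j=1}^d A_{ij} B_{ji},
\]
swap the order of summation (legitimate since the double sum is finite), and recognise the inner sum over $i$ as $(\mathbf{B}\mathbf{A})_{jj}$, giving $\mathrm{tr}(\mathbf{B}\mathbf{A})$.

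For the \emph{rank-one case}, I would note that the $(i,i)$ entry of the outer product $\mathbf{u}\mathbf{v}^\top$ is $u_i v_i$, so
\[
\mathrm{tr}(\mathbf{u}\mathbf{v}^\top) = \sum_{i=1}^d u_i v_i = \mathbf{v}^\top \mathbf{u}.
\]
Alternatively, one can derive this as an immediate corollary of cyclicity by writing $\mathrm{tr}(\mathbf{u}\mathbf{v}^\top) = \mathrm{tr}(\mathbf{v}^\top \mathbf{u})$ and observing that $\mathbf{v}^\top \mathbf{u}$ is a $1\times 1$ matrix whose trace is itself; I would mention this alternative briefly since it reinforces the usefulness of cyclicity. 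The entire lemma is essentially bookkeeping with indices, so the main (minor) obstacle is just keeping the index conventions consistent between the two arguments.
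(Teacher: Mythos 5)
Your proof is correct, and it is the standard elementary argument: the paper in fact states these trace identities without proof (they are used as background facts in Lemma~\ref{lem:trace-trick} and Proposition~\ref{prop:quad-centered}), so your index computations simply supply the omitted verification. Both the double-sum swap for cyclicity and the direct diagonal-entry calculation (or the cyclicity shortcut) for the rank-one case are exactly what one would write here, and no step is missing.
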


\begin{lemma}[Trace trick]\label{lem:trace-trick}
For any $\mathbf{A}\in\mathbb{R}^{d\times d}$ and $\mathbf{u},\mathbf{v}\in\mathbb{R}^d$,
\[
\mathbf{v}^\top \mathbf{A}\,\mathbf{u} \;=\; \mathrm{tr}\!\big(\mathbf{A}\,\mathbf{u}\mathbf{v}^\top\big).
\]
\begin{proof}
We start from the right hand side. By Lemma~\ref{lem:trace-identities} (ii) and (iii):

\[
\mathrm{tr}(\mathbf{A}\mathbf{u}\mathbf{v}^\top)=\mathrm{tr}(\mathbf{u}\mathbf{v}^\top \mathbf{A})
= \mathrm{tr}(\mathbf{u} (\mathbf{A}^\top\mathbf{v})^\top)
 =\mathbf{v}^\top \mathbf{A}\mathbf{u}\;.
\]
\end{proof}
\end{lemma}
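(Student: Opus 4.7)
The plan is to reduce the identity to the rank-one trace formula in Lemma~\ref{lem:trace-identities}(iii) via a single substitution, using associativity of matrix multiplication to re-bracket the product. This is arguably cleaner than routing through cyclicity, because the rank-one identity already encodes exactly the transformation we need: it converts an inner product into a trace of an outer product.

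Concretely, I would let $\mathbf{w} := \mathbf{A}\mathbf{u} \in \mathbb{R}^d$, so that the left-hand side becomes the plain inner product $\mathbf{v}^\top \mathbf{w}$. Applying Lemma~\ref{lem:trace-identities}(iii) with the roles $\mathbf{u}\to\mathbf{w}$ and $\mathbf{v}\to\mathbf{v}$ gives $\mathbf{v}^\top\mathbf{w}=\mathrm{tr}(\mathbf{w}\mathbf{v}^\top)$. Substituting back $\mathbf{w}=\mathbf{A}\mathbf{u}$ and invoking associativity, $(\mathbf{A}\mathbf{u})\mathbf{v}^\top = \mathbf{A}(\mathbf{u}\mathbf{v}^\top)$, yields the claim.

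An equivalent alternative is to note that $\mathbf{v}^\top \mathbf{A}\mathbf{u}$ is a scalar, hence equal to its own trace, and then apply the cyclicity property Lemma~\ref{lem:trace-identities}(ii) to rotate $\mathbf{v}^\top$ to the end: $\mathrm{tr}(\mathbf{v}^\top \mathbf{A}\mathbf{u}) = \mathrm{tr}(\mathbf{A}\mathbf{u}\mathbf{v}^\top)$. I would prefer the first route for didactic transparency, since it isolates the rank-one identity as the essential mechanism and treats cyclicity as implicit.

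There is no genuine obstacle here; the only thing to watch is bookkeeping of shapes. The object $\mathbf{u}\mathbf{v}^\top$ must be a $d\times d$ outer product (not the scalar $\mathbf{v}^\top\mathbf{u}$), and the associativity step $(\mathbf{A}\mathbf{u})\mathbf{v}^\top = \mathbf{A}(\mathbf{u}\mathbf{v}^\top)$ is what makes the final expression the trace of a single $d\times d$ matrix rather than the trace of a scalar multiplied by $\mathbf{v}^\top$. Once those dimensional checks are in place, the identity is a one-line consequence of the rank-one trace formula.
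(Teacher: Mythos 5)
Your proposal is correct. It differs mildly from the paper's argument: the paper starts from the right-hand side and uses cyclicity, writing $\mathrm{tr}(\mathbf{A}\mathbf{u}\mathbf{v}^\top)=\mathrm{tr}(\mathbf{u}\mathbf{v}^\top\mathbf{A})$, and then applies the rank-one identity with $\mathbf{A}^\top\mathbf{v}$ in the role of the second vector (i.e.\ it must recognise $\mathbf{v}^\top\mathbf{A}=(\mathbf{A}^\top\mathbf{v})^\top$ before invoking Lemma~\ref{lem:trace-identities}(iii)). Your primary route instead starts from the left-hand side, absorbs $\mathbf{A}\mathbf{u}$ into a single vector $\mathbf{w}$, applies the rank-one identity once, and finishes with associativity $(\mathbf{A}\mathbf{u})\mathbf{v}^\top=\mathbf{A}(\mathbf{u}\mathbf{v}^\top)$; cyclicity is never needed. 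This is a perfectly valid and arguably leaner dependency structure, since it isolates item (iii) as the sole trace fact used, at the cost of making the re-bracketing step explicit. Your alternative route (a scalar equals its own trace, then cycle $\mathbf{v}^\top$ to the back) is the other standard proof and coincides in spirit with the paper's use of cyclicity. All three arguments are one-liners; your shape bookkeeping ($\mathbf{u}\mathbf{v}^\top$ is $d\times d$, not the scalar $\mathbf{v}^\top\mathbf{u}$) is exactly the right thing to flag.
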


\begin{proposition}[Expectation of a centered quadratic form]\label{prop:quad-centered}
Let $p$ denote the Gaussian $\mathcal{N}(\boldsymbol{\mu}_p,\mathbf{\Sigma}_p)$ on $\mathbb{R}^d$, and let $\x\sim p$. For any $\mathbf{A}\in\mathbb{R}^{d\times d}$,
\begin{equation}\label{eq:quad-centered}
\mathbb{E}_{p}\!\left[(\x-\boldsymbol{\mu}_p)^\top \mathbf{A}\,(\x-\boldsymbol{\mu}_p)\right]
\;=\; \mathrm{tr}\!\big(\mathbf{A}\,\mathbf{\Sigma}_p\big).
\end{equation}
\begin{proof}
Let $\mathbf{z}\coloneqq \x-\boldsymbol{\mu}_p\sim\mathcal{N}(\mathbf{0},\mathbf{\Sigma}_p)$.
By Lemma~\ref{lem:trace-trick} and linearity of trace and expectation,
\[
\mathbb{E}_p[\mathbf{z}^\top \mathbf{A}\mathbf{z}]
= \mathbb{E}_p\big[\mathrm{tr}(\mathbf{A}\,\mathbf{z}\mathbf{z}^\top)\big]
= \mathrm{tr}\!\big(\mathbf{A}\,\mathbb{E}_p[\mathbf{z}\mathbf{z}^\top]\big)
= \mathrm{tr}(\mathbf{A}\,\mathbf{\Sigma}_p).
\]
\end{proof}
\end{proposition}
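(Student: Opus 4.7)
The plan is to reduce the quadratic form to a trace and then exchange trace with expectation. First I would introduce the centered variable $\mathbf{z} \coloneqq \x - \boldsymbol{\mu}_p$, which by an affine transformation of a Gaussian has distribution $\mathcal{N}(\mathbf{0}, \mathbf{\Sigma}_p)$. This reduces the statement to showing $\mathbb{E}[\mathbf{z}^\top \mathbf{A}\mathbf{z}] = \mathrm{tr}(\mathbf{A}\mathbf{\Sigma}_p)$, removing the mean offset from the algebra.

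Next I would apply the trace trick from Lemma~\ref{lem:trace-trick} with $\mathbf{u} = \mathbf{v} = \mathbf{z}$ to rewrite the scalar $\mathbf{z}^\top \mathbf{A}\mathbf{z}$ as $\mathrm{tr}(\mathbf{A}\,\mathbf{z}\mathbf{z}^\top)$. The purpose of this rewrite is that, unlike the quadratic form, the quantity inside the trace is \emph{linear} in $\mathbf{z}\mathbf{z}^\top$, so expectation can be pushed through. I would then invoke linearity of expectation together with linearity of the trace (Lemma~\ref{lem:trace-identities} (i), applied entrywise) to swap their order:
\[
\mathbb{E}\!\big[\mathrm{tr}(\mathbf{A}\,\mathbf{z}\mathbf{z}^\top)\big]
 = \mathrm{tr}\!\big(\mathbf{A}\,\mathbb{E}[\mathbf{z}\mathbf{z}^\top]\big).
\]

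Finally I would identify $\mathbb{E}[\mathbf{z}\mathbf{z}^\top]$ as the covariance matrix $\mathbf{\Sigma}_p$, which is immediate since $\mathbf{z}$ is zero-mean with covariance $\mathbf{\Sigma}_p$ by construction. Substituting gives $\mathrm{tr}(\mathbf{A}\,\mathbf{\Sigma}_p)$ and completes the proof.

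The argument has essentially no obstacle: the non-trivial ingredient (the trace trick) is already isolated in the preceding lemma, and Gaussianity itself is barely used—only affine invariance (to deduce $\mathbf{z} \sim \mathcal{N}(\mathbf{0}, \mathbf{\Sigma}_p)$) and the fact that $\mathbb{E}[\mathbf{z}\mathbf{z}^\top] = \mathbf{\Sigma}_p$. The mildest point worth stating carefully is the exchange of expectation and trace, which follows because $\mathrm{tr}$ is a linear functional acting entrywise and each entry of $\mathbf{A}\mathbf{z}\mathbf{z}^\top$ is an integrable polynomial in the components of $\mathbf{z}$ (Gaussian moments of all orders exist). In fact, the proof never uses Gaussianity beyond existence of the second moment, so the identity holds for any distribution with mean $\boldsymbol{\mu}_p$ and covariance $\mathbf{\Sigma}_p$—a remark worth making after the proof for later reuse.
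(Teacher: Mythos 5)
Your proposal is correct and follows essentially the same route as the paper's proof: center via $\mathbf{z}=\x-\boldsymbol{\mu}_p$, apply the trace trick of Lemma~\ref{lem:trace-trick}, exchange trace and expectation by linearity, and identify $\mathbb{E}[\mathbf{z}\mathbf{z}^\top]=\mathbf{\Sigma}_p$. Your closing observation that only the second moment (not Gaussianity) is needed is accurate and a fine optional remark, but it does not change the argument.
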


\subsection{Reparameterisation for Gaussian Sampling}
\label{subsec:gaussian_reparam}

We want a simple, differentiable way to sample from a multivariate Gaussian $\mathcal{N}(\boldsymbol{\mu}, \mathbf{\Sigma}) \subset \mathbb{R}^d$. The key idea is to express a sample as an affine function of a standard normal.

\begin{definition}[Affine re-parameterisation]
Let $\mathbf{z} \sim \mathcal{N}(\mathbf{0}, \mathbf{I}_d)$. For $\boldsymbol{\mu} \in \mathbb{R}^d$ and $\mathbf{A} \in \mathbb{R}^{d \times d}$, define
\[
\mathbf{x} := \boldsymbol{\mu} + \mathbf{A}\mathbf{z}.
\]
\end{definition}

\begin{proposition}[Distribution of an affine transform of a standard Gaussian]
\label{prop:affine_gaussian_easy}
If $\mathbf{z} \sim \mathcal{N}(\mathbf{0}, \mathbf{I}_d)$, then
\[
\mathbf{x}=\boldsymbol{\mu}+\mathbf{A}\mathbf{z}
\;\sim\;
\mathcal{N}\!\big(\boldsymbol{\mu},\, \mathbf{A}\mathbf{A}^\top\big).
\]
\end{proposition}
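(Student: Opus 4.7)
The plan is to verify the identity by computing the characteristic function of $\mathbf{x}$ and matching it to that of $\mathcal{N}(\boldsymbol{\mu},\mathbf{A}\mathbf{A}^\top)$, since a distribution on $\mathbb{R}^d$ is uniquely determined by its characteristic function. This route sidesteps any need to invert $\mathbf{A}$ and therefore handles the possibly degenerate case $\det(\mathbf{A}\mathbf{A}^\top)=0$ in one stroke.

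First I would recall (or briefly re-derive by completing the square on \eqref{eq:isotropicPDF} with $\sigma=1$) that $\varphi_{\mathbf{z}}(\mathbf{t}):=\mathbb{E}[e^{i\mathbf{t}^\top \mathbf{z}}]=e^{-\tfrac{1}{2}\|\mathbf{t}\|_2^2}$ for $\mathbf{z}\sim\mathcal{N}(\mathbf{0},\mathbf{I}_d)$. Then I would substitute $\mathbf{x}=\boldsymbol{\mu}+\mathbf{A}\mathbf{z}$, pull the deterministic shift out of the expectation, and rewrite $(\mathbf{t}^\top\mathbf{A})\mathbf{z}=(\mathbf{A}^\top\mathbf{t})^\top\mathbf{z}$, giving
\[
\varphi_{\mathbf{x}}(\mathbf{t})=e^{i\mathbf{t}^\top\boldsymbol{\mu}}\,\mathbb{E}\bigl[e^{i(\mathbf{A}^\top\mathbf{t})^\top\mathbf{z}}\bigr]=e^{i\mathbf{t}^\top\boldsymbol{\mu}}\,\varphi_{\mathbf{z}}(\mathbf{A}^\top\mathbf{t})=\exp\!\Bigl(i\mathbf{t}^\top\boldsymbol{\mu}-\tfrac{1}{2}\,\mathbf{t}^\top\mathbf{A}\mathbf{A}^\top\mathbf{t}\Bigr),
\]
which is precisely the characteristic function of $\mathcal{N}(\boldsymbol{\mu},\mathbf{A}\mathbf{A}^\top)$. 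Invoking the uniqueness theorem for characteristic functions then closes the argument.

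A purely moment-based alternative that stays inside the toolkit already built in this section would be to (i) argue, via Cram\'er--Wold, that every linear functional $\mathbf{c}^\top\mathbf{x}=\mathbf{c}^\top\boldsymbol{\mu}+(\mathbf{A}^\top\mathbf{c})^\top\mathbf{z}$ is a scalar linear combination of independent standard normals and is therefore univariate Gaussian, so $\mathbf{x}$ is jointly Gaussian; (ii) compute $\mathbb{E}[\mathbf{x}]=\boldsymbol{\mu}$ by linearity; and (iii) compute $\mathrm{Cov}(\mathbf{x})=\mathbb{E}[\mathbf{A}\mathbf{z}\mathbf{z}^\top\mathbf{A}^\top]=\mathbf{A}\mathbf{A}^\top$, reusing $\mathbb{E}[\mathbf{z}\mathbf{z}^\top]=\mathbf{I}_d$ in the spirit of Proposition~\ref{prop:quad-centered}.

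The main obstacle, and the reason I would avoid starting from a Jacobian change-of-variables applied to \eqref{eq:normPDF}, is that $\mathbf{A}$ is not assumed square or invertible; $\mathbf{A}\mathbf{A}^\top$ can be singular and the target law may have no Lebesgue density. Both the characteristic-function route and the Cram\'er--Wold route dodge this issue because they work at the level of Fourier transforms or one-dimensional marginals rather than densities.
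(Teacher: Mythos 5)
Your proof is correct, but it takes a genuinely different route from the paper. The paper's proof is purely moment-based: it computes $\mathbb{E}[\mathbf{x}]=\boldsymbol{\mu}$ by linearity and $\mathrm{Cov}[\mathbf{x}]=\mathbf{A}\,\mathbb{E}[\mathbf{z}\mathbf{z}^\top]\,\mathbf{A}^\top=\mathbf{A}\mathbf{A}^\top$ using $\mathbb{E}[\mathbf{z}\mathbf{z}^\top]=\mathbf{I}_d$, and then simply identifies the result as the stated Gaussian — the fact that an affine image of a Gaussian vector is again Gaussian is taken for granted rather than argued. Your characteristic-function argument supplies exactly that missing piece: by computing $\varphi_{\mathbf{x}}(\mathbf{t})=\exp\!\big(i\mathbf{t}^\top\boldsymbol{\mu}-\tfrac{1}{2}\mathbf{t}^\top\mathbf{A}\mathbf{A}^\top\mathbf{t}\big)$ and invoking uniqueness, you establish both Gaussianity and the parameters in one stroke, and you correctly note that this also covers singular $\mathbf{A}\mathbf{A}^\top$, where no Lebesgue density exists and a change-of-variables argument would break down. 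Your Cramér--Wold alternative is essentially the paper's moment computation augmented with the one-dimensional-marginals argument that justifies calling the law Gaussian. What the paper's version buys is brevity and self-containedness at the level of the toolkit it has built (expectations, the identity $\mathbb{E}[\mathbf{z}\mathbf{z}^\top]=\mathbf{I}_d$); what yours buys is a complete and more general proof, at the cost of importing the characteristic-function machinery, which the paper deliberately avoids in its elementary preliminaries.
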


\begin{proof}
Write $\mathbf{x}=\boldsymbol{\mu}+\mathbf{A}\mathbf{z}$ with $\mathbb{E}[\mathbf{z}]=\mathbf{0}$ and $\mathrm{Cov}[\mathbf{z}]=\mathbf{I}_d$ (i.e., $\mathbf{z}$ is a standard normal vector).
The mean is immediate:
\[
\mathbb{E}[\mathbf{x}]
= \boldsymbol{\mu} + \mathbf{A}\,\mathbb{E}[\mathbf{z}]
= \boldsymbol{\mu}.
\]
For the covariance, use the definition $\mathrm{Cov}[\mathbf{x}]=\mathbb{E}\!\big[(\mathbf{x}-\mathbb{E}[\mathbf{x}])(\mathbf{x}-\mathbb{E}[\mathbf{x}])^\top\big]$:
\begin{align*}
\mathrm{Cov}[\mathbf{x}]
&= \mathbb{E}\!\big[(\boldsymbol{\mu}+\mathbf{A}\mathbf{z}-\boldsymbol{\mu})(\boldsymbol{\mu}+\mathbf{A}\mathbf{z}-\boldsymbol{\mu})^\top\big] \\
&= \mathbb{E}\!\big[(\mathbf{A}\mathbf{z})(\mathbf{A}\mathbf{z})^\top\big] \\
&= \mathbb{E}\!\big[\mathbf{A}\,\mathbf{z}\mathbf{z}^\top\,\mathbf{A}^\top\big] \\
&= \mathbf{A}\,\mathbb{E}[\mathbf{z}\mathbf{z}^\top]\,\mathbf{A}^\top
\quad\text{(linearity of $\mathbb{E}$; $\mathbf{A}$ and $\mathbf{A}^\top$ are constants)} \\
&= \mathbf{A}\,\mathbf{I}_d\,\mathbf{A}^\top
\quad\text{since }\mathbb{E}[\mathbf{z}\mathbf{z}^\top]=\mathrm{Cov}[\mathbf{z}]=\mathbf{I}_d \\
&= \mathbf{A}\mathbf{A}^\top.
\end{align*}
To see $\mathbb{E}[\mathbf{z}\mathbf{z}^\top]=\mathbf{I}_d$ explicitly, write $\mathbf{z}=(z_1,\dots,z_d)^\top$ with each $z_i\sim\mathcal{N}(0,1)$ and uncorrelated. Then
\[
\big[\mathbb{E}[\mathbf{z}\mathbf{z}^\top]\big]_{ij}=\mathbb{E}[z_i z_j]
=\begin{cases}
0, & i\neq j,\\
1, & i=j,
\end{cases}
\]
so the matrix is exactly $\mathbf{I}_d$.
\end{proof}

\begin{corollary}[Sampling $\mathcal{N}(\boldsymbol{\mu}, \mathbf{\Sigma})$]
\label{cor:full_cov_sampling_easy}
If $\mathbf{\Sigma}$ is symmetric positive semidefinite and $\mathbf{A}$ satisfies $\mathbf{A}\mathbf{A}^\top=\mathbf{\Sigma}$ (e.g., Cholesky when $\mathbf{\Sigma}\succ\mathbf{0}$), then with $\mathbf{z}\sim\mathcal{N}(\mathbf{0},\mathbf{I}_d)$,
\[
\mathbf{x}=\boldsymbol{\mu}+\mathbf{A}\mathbf{z} \;\sim\; \mathcal{N}(\boldsymbol{\mu}, \mathbf{\Sigma}).
\]
\end{corollary}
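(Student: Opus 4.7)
The plan is to derive this corollary as an immediate specialisation of Proposition \ref{prop:affine_gaussian_easy}. Once we produce a concrete factor $\mathbf{A}$ with $\mathbf{A}\mathbf{A}^\top=\mathbf{\Sigma}$ and feed it into the proposition, the stated distribution of $\mathbf{x}$ drops out by substitution: the covariance $\mathbf{A}\mathbf{A}^\top$ is exactly $\mathbf{\Sigma}$, and the mean is unchanged. So the proof is essentially a pointer back to the previous result, together with a brief verification that such an $\mathbf{A}$ exists under the stated hypothesis.

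First I would justify existence of the factor. When $\mathbf{\Sigma}\succ\mathbf{0}$, the Cholesky factorisation supplies a lower-triangular $\mathbf{A}$ with $\mathbf{A}\mathbf{A}^\top=\mathbf{\Sigma}$. In the merely symmetric positive semidefinite case, the spectral theorem writes $\mathbf{\Sigma}=\mathbf{U}\mathbf{D}\mathbf{U}^\top$ with $\mathbf{U}$ orthogonal and $\mathbf{D}$ diagonal with nonnegative entries, and $\mathbf{A}:=\mathbf{U}\mathbf{D}^{1/2}$ is then a valid choice. Second, I would apply Proposition \ref{prop:affine_gaussian_easy} directly with this $\mathbf{A}$ and $\mathbf{z}\sim\mathcal{N}(\mathbf{0},\mathbf{I}_d)$ to obtain $\mathbf{x}=\boldsymbol{\mu}+\mathbf{A}\mathbf{z}\sim\mathcal{N}(\boldsymbol{\mu},\mathbf{A}\mathbf{A}^\top)=\mathcal{N}(\boldsymbol{\mu},\mathbf{\Sigma})$.

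The one subtlety — and the only real obstacle — is that the proof of Proposition \ref{prop:affine_gaussian_easy} as written only established the mean and covariance of $\mathbf{x}$, not full Gaussianity. To be fully rigorous I would supplement it with a characteristic-function computation, namely $\varphi_{\mathbf{x}}(\mathbf{t})=e^{i\mathbf{t}^\top\boldsymbol{\mu}}\,\varphi_{\mathbf{z}}(\mathbf{A}^\top\mathbf{t})=\exp\!\big(i\mathbf{t}^\top\boldsymbol{\mu}-\tfrac12\mathbf{t}^\top\mathbf{A}\mathbf{A}^\top\mathbf{t}\big)$, which matches the characteristic function of $\mathcal{N}(\boldsymbol{\mu},\mathbf{\Sigma})$ and hence pins down the distribution by Lévy's uniqueness theorem. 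This route has the added benefit of covering the degenerate case where $\mathbf{\Sigma}$ is singular and no density of the form \eqref{eq:normPDF} exists. If one instead restricts to $\mathbf{\Sigma}\succ\mathbf{0}$, an equivalent elementary route is a direct change of variables applied to the standard normal density, which recovers \eqref{eq:normPDF} with parameters $(\boldsymbol{\mu},\mathbf{\Sigma})$ and confirms the claim.
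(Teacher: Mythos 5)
Your proof is correct and takes essentially the same route as the paper: the corollary is stated there as an immediate consequence of Proposition~\ref{prop:affine_gaussian_easy} with no separate argument, which is exactly the substitution $\mathbf{A}\mathbf{A}^\top=\mathbf{\Sigma}$ you perform. Your closing remark — that the proposition's proof only verifies the mean and covariance and that full Gaussianity should be pinned down via the characteristic function (which also covers singular $\mathbf{\Sigma}$) — is a sound strengthening of a point the paper glosses over, not a different approach.
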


\begin{corollary}[Isotropic case]
\label{cor:isotropic_easy}
For $\sigma>0$, with $\mathbf{z}\sim\mathcal{N}(\mathbf{0},\mathbf{I}_d)$,
\[
\mathbf{x}=\boldsymbol{\mu}+\sigma\,\mathbf{z}
\;\sim\;
\mathcal{N}(\boldsymbol{\mu},\, \sigma^2 \mathbf{I}_d).
\]
\end{corollary}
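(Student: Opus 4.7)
The plan is to recognise this corollary as a one-line specialisation of Proposition~\ref{prop:affine_gaussian_easy} (equivalently, of Corollary~\ref{cor:full_cov_sampling_easy}), with the particular choice $\mathbf{A} = \sigma\mathbf{I}_d$. Concretely, I would first observe that $\sigma\mathbf{I}_d \in \mathbb{R}^{d\times d}$ is a valid matrix to plug into the affine-transform proposition, so $\mathbf{x} = \boldsymbol{\mu} + \sigma\mathbf{I}_d\,\mathbf{z} = \boldsymbol{\mu} + \sigma\mathbf{z}$ is Gaussian with mean $\boldsymbol{\mu}$ and covariance $\mathbf{A}\mathbf{A}^\top$. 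Then I would compute
\[
\mathbf{A}\mathbf{A}^\top = (\sigma\mathbf{I}_d)(\sigma\mathbf{I}_d)^\top = \sigma^2\,\mathbf{I}_d\mathbf{I}_d = \sigma^2\mathbf{I}_d,
\]
using symmetry of the identity. Substituting into the conclusion of Proposition~\ref{prop:affine_gaussian_easy} gives $\mathbf{x} \sim \mathcal{N}(\boldsymbol{\mu}, \sigma^2\mathbf{I}_d)$, which is exactly the claim.

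There is essentially no obstacle to overcome: the only arithmetic needed is the identity $(\sigma\mathbf{I}_d)(\sigma\mathbf{I}_d)^\top = \sigma^2\mathbf{I}_d$, and the positivity requirement $\sigma > 0$ is already built into the hypothesis (which also guarantees that $\sigma^2\mathbf{I}_d$ is symmetric positive definite, so Corollary~\ref{cor:full_cov_sampling_easy} applies equally well if one prefers to route through it). I would therefore keep the write-up to two or three lines, explicitly flagging it as a direct specialisation of the earlier results rather than a new computation.
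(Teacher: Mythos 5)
Your proposal is correct and matches the paper's (implicit) argument: the corollary is stated as an immediate specialisation of Proposition~\ref{prop:affine_gaussian_easy} with $\mathbf{A}=\sigma\mathbf{I}_d$, giving covariance $\mathbf{A}\mathbf{A}^\top=\sigma^2\mathbf{I}_d$, exactly as you wrote. No gaps; nothing further is needed.
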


\begin{proposition}[Reparameterisation identity for expectations]
\label{prop:reparam_identity_easy}
Let $f:\mathbb{R}^d\to\mathbb{R}$ be integrable under $\mathcal{N}(\boldsymbol{\mu}, \mathbf{\Sigma})$. If $\mathbf{\Sigma}=\mathbf{A}\mathbf{A}^\top$ and $\mathbf{z}\sim\mathcal{N}(\mathbf{0},\mathbf{I}_d)$, then
\[
\mathbb{E}_{\mathbf{x}\sim\mathcal{N}(\boldsymbol{\mu}, \mathbf{\Sigma})}[f(\mathbf{x})]
=
\mathbb{E}_{\mathbf{z}\sim\mathcal{N}(\mathbf{0},\mathbf{I}_d)}\!\big[f(\boldsymbol{\mu}+\mathbf{A}\mathbf{z})\big].
\]
\end{proposition}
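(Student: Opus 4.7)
The cleanest route is to reduce this to Corollary \ref{cor:full_cov_sampling_easy} plus the fact that expectations depend only on the law of the integrand. The plan is to introduce a single auxiliary random vector that lives simultaneously on both sides of the claimed identity, and then note that the two expectations are just two equivalent ways of writing the same integral.

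Concretely, I would let $\mathbf{Z}\sim\mathcal{N}(\mathbf{0},\mathbf{I}_d)$ and define $\mathbf{X}:=\boldsymbol{\mu}+\mathbf{A}\mathbf{Z}$. By Corollary \ref{cor:full_cov_sampling_easy}, together with the hypothesis $\mathbf{\Sigma}=\mathbf{A}\mathbf{A}^\top$, we have $\mathrm{law}(\mathbf{X})=\mathcal{N}(\boldsymbol{\mu},\mathbf{\Sigma})$. Hence $\mathbb{E}[f(\mathbf{X})]=\mathbb{E}_{\mathbf{x}\sim\mathcal{N}(\boldsymbol{\mu},\mathbf{\Sigma})}[f(\mathbf{x})]$ since expectation is a functional of the law only. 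On the other hand, by the very definition of $\mathbf{X}$, the same quantity equals $\mathbb{E}_{\mathbf{Z}\sim\mathcal{N}(\mathbf{0},\mathbf{I}_d)}[f(\boldsymbol{\mu}+\mathbf{A}\mathbf{Z})]$. Chaining these two equalities gives the claim, and integrability of $f$ under $\mathcal{N}(\boldsymbol{\mu},\mathbf{\Sigma})$ is exactly what is needed for both sides to be well-defined.

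For readers who prefer a self-contained integral argument (valid when $\mathbf{\Sigma}\succ\mathbf{0}$, so that $\mathbf{A}$ can be taken invertible), I would alternatively perform the change of variables $\mathbf{x}=\boldsymbol{\mu}+\mathbf{A}\mathbf{z}$ directly in
\[
\mathbb{E}_{\mathbf{x}\sim\mathcal{N}(\boldsymbol{\mu},\mathbf{\Sigma})}[f(\mathbf{x})]=\int_{\mathbb{R}^d} f(\mathbf{x})\,\frac{\exp\!\big(-\tfrac12(\mathbf{x}-\boldsymbol{\mu})^\top\mathbf{\Sigma}^{-1}(\mathbf{x}-\boldsymbol{\mu})\big)}{(2\pi)^{d/2}\det(\mathbf{\Sigma})^{1/2}}\,\mathrm{d}\mathbf{x}.
\]
The Jacobian contributes $|\det \mathbf{A}|=\det(\mathbf{\Sigma})^{1/2}$ (cancelling the normaliser), and the quadratic form collapses because $\mathbf{A}^\top\mathbf{\Sigma}^{-1}\mathbf{A}=\mathbf{I}_d$, so the integrand becomes the standard-normal density times $f(\boldsymbol{\mu}+\mathbf{A}\mathbf{z})$, which is the right-hand side.

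The only potential subtlety, and the place I would be most careful, is the singular case $\mathbf{\Sigma}\not\succ\mathbf{0}$, where $\mathcal{N}(\boldsymbol{\mu},\mathbf{\Sigma})$ has no density on $\mathbb{R}^d$ and the change-of-variables computation breaks down. The law-based argument in the previous paragraph handles this uniformly, since Corollary \ref{cor:full_cov_sampling_easy} already covers the positive semidefinite case and the equality of expectations follows purely from equality of distributions (via the pushforward measure, or equivalently by identifying characteristic functions). So I would lead with the law-based proof and mention the explicit change of variables only as a parenthetical sanity check in the non-degenerate case.
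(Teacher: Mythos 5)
Your proof is correct: the paper states Proposition~\ref{prop:reparam_identity_easy} without an explicit proof, and your law-based argument---defining $\mathbf{X}=\boldsymbol{\mu}+\mathbf{A}\mathbf{Z}$, invoking Corollary~\ref{cor:full_cov_sampling_easy} to identify $\mathrm{law}(\mathbf{X})=\mathcal{N}(\boldsymbol{\mu},\mathbf{\Sigma})$, and noting that expectations depend only on the law---is exactly the justification the paper intends. Your supplementary change-of-variables computation and the remark that the law-based route also covers the singular case $\mathbf{\Sigma}\not\succ\mathbf{0}$ are both correct and go slightly beyond what the paper records.
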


\subsection{Product of two Gaussian distributions}

We will often need to combine two Gaussian factors over the same variable (e.g., when merging information from two sources). The key fact is that the product is again (proportional to) a Gaussian. To read off the resulting mean and covariance cleanly, we first note a standard “completing the square” identity.

\begin{lemma}[Completing the square]\label{lem:complete-square}
Let $\mathbf{\Lambda}\in\mathbb{R}^{d\times d}$ be symmetric positive-definite and let $\boldsymbol{\eta}\in\mathbb{R}^d$. Then, for all $\x\in\mathbb{R}^d$,
\[
-\tfrac{1}{2}\,\x^\top \mathbf{\Lambda}\,\x + \x^\top \boldsymbol{\eta}
\;=\;
-\tfrac{1}{2}\,(\x-\boldsymbol{\mu})^\top \mathbf{\Lambda}\,(\x-\boldsymbol{\mu})
\;+\; \tfrac{1}{2}\,\boldsymbol{\mu}^\top \mathbf{\Lambda}\,\boldsymbol{\mu},
\qquad
\text{where }\ \boldsymbol{\mu}=\mathbf{\Lambda}^{-1}\boldsymbol{\eta}.
\]
\emph{Proof.}
Expand the right-hand side:
$-\tfrac{1}{2}(\x^\top \mathbf{\Lambda}\x - 2\,\x^\top \mathbf{\Lambda}\boldsymbol{\mu}+\boldsymbol{\mu}^\top \mathbf{\Lambda}\boldsymbol{\mu})
+\tfrac{1}{2}\boldsymbol{\mu}^\top \mathbf{\Lambda}\boldsymbol{\mu}
= -\tfrac{1}{2}\x^\top \mathbf{\Lambda}\x + \x^\top \mathbf{\Lambda}\boldsymbol{\mu}$.
Since $\boldsymbol{\mu}=\mathbf{\Lambda}^{-1}\boldsymbol{\eta}$, we have $\mathbf{\Lambda}\boldsymbol{\mu}=\boldsymbol{\eta}$, giving the claim.
\hfill$\square$
\end{lemma}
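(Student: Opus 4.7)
The plan is to verify the identity by expanding the right-hand side and showing, via the defining relation $\boldsymbol{\mu}=\mathbf{\Lambda}^{-1}\boldsymbol{\eta}$, that every term either cancels or reproduces a term on the left. This direction is preferable because the right-hand side carries all of the nontrivial structure (a centered quadratic plus a scalar constant), whereas the left-hand side is simply a quadratic and a linear piece in $\x$ that the expansion must reproduce.

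First I would expand the centered quadratic $(\x-\boldsymbol{\mu})^\top\mathbf{\Lambda}(\x-\boldsymbol{\mu})$ into the four terms $\x^\top\mathbf{\Lambda}\x$, $-\x^\top\mathbf{\Lambda}\boldsymbol{\mu}$, $-\boldsymbol{\mu}^\top\mathbf{\Lambda}\x$, and $\boldsymbol{\mu}^\top\mathbf{\Lambda}\boldsymbol{\mu}$. Since each cross term is a scalar and $\mathbf{\Lambda}$ is symmetric, we have $\boldsymbol{\mu}^\top\mathbf{\Lambda}\x=\x^\top\mathbf{\Lambda}\boldsymbol{\mu}$; this is the one place the symmetry hypothesis enters. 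Multiplying the bracket by $-\tfrac{1}{2}$ and adding the free constant $\tfrac{1}{2}\boldsymbol{\mu}^\top\mathbf{\Lambda}\boldsymbol{\mu}$ cancels the stray $-\tfrac{1}{2}\boldsymbol{\mu}^\top\mathbf{\Lambda}\boldsymbol{\mu}$ contributed by the expansion, leaving precisely $-\tfrac{1}{2}\x^\top\mathbf{\Lambda}\x+\x^\top\mathbf{\Lambda}\boldsymbol{\mu}$.

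The final step is to substitute $\mathbf{\Lambda}\boldsymbol{\mu}=\boldsymbol{\eta}$, obtained by left-multiplying $\boldsymbol{\mu}=\mathbf{\Lambda}^{-1}\boldsymbol{\eta}$ by $\mathbf{\Lambda}$, so that the surviving linear term becomes $\x^\top\boldsymbol{\eta}$ and the left-hand side is recovered. Invertibility of $\mathbf{\Lambda}$, guaranteed by positive-definiteness, is exactly what lets us define $\boldsymbol{\mu}$; the algebra itself would go through under the weaker hypothesis that $\mathbf{\Lambda}$ is symmetric and invertible.

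There is no genuine obstacle here; the argument is a single line of bookkeeping. The only conceptual point worth flagging is that this lemma is in effect a change of parametrisation, converting the natural-parameter form $-\tfrac{1}{2}\x^\top\mathbf{\Lambda}\x+\x^\top\boldsymbol{\eta}$ of a Gaussian log-density into its mean form, with the additive constant $\tfrac{1}{2}\boldsymbol{\mu}^\top\mathbf{\Lambda}\boldsymbol{\mu}$ being the piece that will later absorb into normalising factors when multiplying Gaussian densities in the next subsection.
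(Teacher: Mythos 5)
Your proposal is correct and follows the same route as the paper: expand the centered quadratic on the right-hand side (using symmetry of $\mathbf{\Lambda}$ to merge the cross terms), cancel the constant, and substitute $\mathbf{\Lambda}\boldsymbol{\mu}=\boldsymbol{\eta}$. Nothing is missing; your added remarks on where symmetry and invertibility enter are accurate but not needed beyond the paper's one-line argument.
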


\begin{proposition}[Product of two Gaussians]\label{prop:product-gaussians}
Let $p(\x)=\mathcal{N}(\x;\boldsymbol{\mu}_p,\mathbf{\Sigma}_p)$ and $q(\x)=\mathcal{N}(\x;\boldsymbol{\mu}_q,\mathbf{\Sigma}_q)$ with symmetric positive-definite covariances. Then their pointwise product is proportional to a Gaussian:
\[
p(\x)\,q(\x)\ \propto\ \mathcal{N}(\x;\,\boldsymbol{\mu},\,\mathbf{\Sigma}),
\]
with
\[
\mathbf{\Sigma}=\big(\mathbf{\Sigma}_p^{-1}+\mathbf{\Sigma}_q^{-1}\big)^{-1},
\qquad
\boldsymbol{\mu}=\mathbf{\Sigma}\big(\mathbf{\Sigma}_p^{-1}\boldsymbol{\mu}_p+\mathbf{\Sigma}_q^{-1}\boldsymbol{\mu}_q\big).
\]

\end{proposition}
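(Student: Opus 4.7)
My plan is to multiply the two density formulas, discard every factor that is independent of $\x$ (these go into the proportionality constant), and then recognise the surviving exponent as a Gaussian kernel by invoking Lemma~\ref{lem:complete-square}. The resulting mean and covariance will be read off directly from the lemma's output $\boldsymbol{\mu}=\mathbf{\Lambda}^{-1}\boldsymbol{\eta}$, with $\mathbf{\Sigma}:=\mathbf{\Lambda}^{-1}$.

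\textbf{Steps.} First, using the PDF \eqref{eq:normPDF} I would write
\[
p(\x)\,q(\x)\;\propto\;\exp\!\Big(-\tfrac{1}{2}(\x-\boldsymbol{\mu}_p)^\top \mathbf{\Sigma}_p^{-1}(\x-\boldsymbol{\mu}_p)-\tfrac{1}{2}(\x-\boldsymbol{\mu}_q)^\top \mathbf{\Sigma}_q^{-1}(\x-\boldsymbol{\mu}_q)\Big),
\]
since the prefactors $(2\pi)^{-d/2}\det(\mathbf{\Sigma}_\bullet)^{-1/2}$ are $\x$-independent. Next, I would expand both quadratics and group by degree in $\x$: the quadratic-in-$\x$ terms combine to $-\tfrac{1}{2}\x^\top(\mathbf{\Sigma}_p^{-1}+\mathbf{\Sigma}_q^{-1})\x$; the linear-in-$\x$ terms combine to $\x^\top(\mathbf{\Sigma}_p^{-1}\boldsymbol{\mu}_p+\mathbf{\Sigma}_q^{-1}\boldsymbol{\mu}_q)$, after using the symmetry of each precision to identify $\x^\top\mathbf{\Sigma}_\bullet^{-1}\boldsymbol{\mu}_\bullet$ with $\boldsymbol{\mu}_\bullet^\top\mathbf{\Sigma}_\bullet^{-1}\x$; and the $\x$-free constants $-\tfrac{1}{2}\boldsymbol{\mu}_p^\top\mathbf{\Sigma}_p^{-1}\boldsymbol{\mu}_p-\tfrac{1}{2}\boldsymbol{\mu}_q^\top\mathbf{\Sigma}_q^{-1}\boldsymbol{\mu}_q$ are absorbed into the proportionality symbol. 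I would then apply Lemma~\ref{lem:complete-square} with $\mathbf{\Lambda}:=\mathbf{\Sigma}_p^{-1}+\mathbf{\Sigma}_q^{-1}$ and $\boldsymbol{\eta}:=\mathbf{\Sigma}_p^{-1}\boldsymbol{\mu}_p+\mathbf{\Sigma}_q^{-1}\boldsymbol{\mu}_q$, which rewrites the exponent as $-\tfrac{1}{2}(\x-\boldsymbol{\mu})^\top\mathbf{\Lambda}(\x-\boldsymbol{\mu})$ up to an $\x$-independent offset, with $\boldsymbol{\mu}=\mathbf{\Lambda}^{-1}\boldsymbol{\eta}$. Setting $\mathbf{\Sigma}:=\mathbf{\Lambda}^{-1}$ then yields exactly the claimed formulas, and the resulting exponent is the kernel of $\mathcal{N}(\x;\boldsymbol{\mu},\mathbf{\Sigma})$.

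\textbf{Main obstacle.} The only non-routine point is checking that the hypotheses of Lemma~\ref{lem:complete-square} are actually met, i.e.\ that $\mathbf{\Lambda}=\mathbf{\Sigma}_p^{-1}+\mathbf{\Sigma}_q^{-1}$ is symmetric positive-definite; otherwise the resulting ``Gaussian'' would fail to be a valid density and $\mathbf{\Sigma}:=\mathbf{\Lambda}^{-1}$ would not qualify as a covariance. Symmetry is inherited from each summand, and positive-definiteness follows because $\mathbf{v}^\top(\mathbf{\Sigma}_p^{-1}+\mathbf{\Sigma}_q^{-1})\mathbf{v}>0$ for every $\mathbf{v}\neq\mathbf{0}$, since both $\mathbf{\Sigma}_p^{-1}$ and $\mathbf{\Sigma}_q^{-1}$ are positive-definite by assumption. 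Beyond this, the derivation is pure bookkeeping; the principal risks are mishandling the cross-terms in the linear contribution or forgetting that the $\boldsymbol{\mu}^\top\mathbf{\Lambda}\boldsymbol{\mu}$ completion-constant must itself be swept into the proportionality symbol rather than the Gaussian kernel.
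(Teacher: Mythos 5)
Your proposal is correct and follows essentially the same route as the paper: multiply the kernels, collect the quadratic and linear terms in $\x$ to identify $\mathbf{\Lambda}=\mathbf{\Sigma}_p^{-1}+\mathbf{\Sigma}_q^{-1}$ and $\boldsymbol{\eta}=\mathbf{\Sigma}_p^{-1}\boldsymbol{\mu}_p+\mathbf{\Sigma}_q^{-1}\boldsymbol{\mu}_q$, then invoke Lemma~\ref{lem:complete-square} to read off $\boldsymbol{\mu}=\mathbf{\Lambda}^{-1}\boldsymbol{\eta}$ and $\mathbf{\Sigma}=\mathbf{\Lambda}^{-1}$. Your explicit verification that $\mathbf{\Lambda}$ is symmetric positive-definite is a small but welcome addition the paper leaves implicit.
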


\begin{proof}
Write each Gaussian in its exponential (``natural'') form, dropping constants that do not depend on $\x$:
\[
p(\x)\ \propto\ \exp\!\Big(-\tfrac{1}{2}\x^\top \mathbf{\Sigma}_p^{-1}\x + \x^\top \mathbf{\Sigma}_p^{-1}\boldsymbol{\mu}_p\Big),
\qquad
q(\x)\ \propto\ \exp\!\Big(-\tfrac{1}{2}\x^\top \mathbf{\Sigma}_q^{-1}\x + \x^\top \mathbf{\Sigma}_q^{-1}\boldsymbol{\mu}_q\Big).
\]
Multiplying and collecting like terms yields
\[
p(\x)\,q(\x)\ \propto\ \exp\!\Big(
-\tfrac{1}{2}\x^\top \underbrace{(\mathbf{\Sigma}_p^{-1}+\mathbf{\Sigma}_q^{-1})}_{\mathbf{\Lambda}} \x
\;+\;
\x^\top \underbrace{(\mathbf{\Sigma}_p^{-1}\boldsymbol{\mu}_p+\mathbf{\Sigma}_q^{-1}\boldsymbol{\mu}_q)}_{\boldsymbol{\eta}}
\Big).
\]
Apply Lemma~\ref{lem:complete-square} with $\mathbf{\Lambda}=\mathbf{\Sigma}_p^{-1}+\mathbf{\Sigma}_q^{-1}$ and $\boldsymbol{\eta}=\mathbf{\Sigma}_p^{-1}\boldsymbol{\mu}_p+\mathbf{\Sigma}_q^{-1}\boldsymbol{\mu}_q$ to rewrite the exponent as
$-\tfrac{1}{2}(\x-\boldsymbol{\mu})^\top \mathbf{\Lambda}(\x-\boldsymbol{\mu}) + \text{const}$,
where $\boldsymbol{\mu}=\mathbf{\Lambda}^{-1}\boldsymbol{\eta}$.
Recognising $\mathbf{\Sigma}=\mathbf{\Lambda}^{-1}$ gives the stated mean and covariance. The omitted constant ensures proper normalisation but does not affect the dependence on $\x$.
\end{proof}

\begin{corollary}[Isotropic special case]\label{cor:product-isotropic}
If $\mathbf{\Sigma}_p=\sigma_p^2\mathbf{I}_d$ and $\mathbf{\Sigma}_q=\sigma_q^2\mathbf{I}_d$, then
\[
\mathbf{\Sigma}=\Big(\tfrac{1}{\sigma_p^2}+\tfrac{1}{\sigma_q^2}\Big)^{-1}\mathbf{I}_d,
\qquad
\boldsymbol{\mu}
=\Big(\tfrac{1}{\sigma_p^2}+\tfrac{1}{\sigma_q^2}\Big)^{-1}
\Big(\tfrac{\boldsymbol{\mu}_p}{\sigma_p^2}+\tfrac{\boldsymbol{\mu}_q}{\sigma_q^2}\Big),
\]
i.e., the precision (inverse variance) adds, and the mean is a precision-weighted average.
\end{corollary}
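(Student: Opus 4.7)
The plan is to specialise the general formula from Proposition~\ref{prop:product-gaussians} to the isotropic setting; no new machinery is required beyond substitution and the elementary observation that $(c\,\mathbf{I}_d)^{-1}=c^{-1}\mathbf{I}_d$ for any scalar $c>0$. First I would compute the combined precision
\[
\mathbf{\Sigma}_p^{-1}+\mathbf{\Sigma}_q^{-1}
=\tfrac{1}{\sigma_p^2}\mathbf{I}_d+\tfrac{1}{\sigma_q^2}\mathbf{I}_d
=\Big(\tfrac{1}{\sigma_p^2}+\tfrac{1}{\sigma_q^2}\Big)\mathbf{I}_d,
\]
and invert this scalar multiple of the identity in a single step to read off the claimed $\mathbf{\Sigma}$.

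Next I would substitute the same two inverse covariances into the mean formula $\boldsymbol{\mu}=\mathbf{\Sigma}\big(\mathbf{\Sigma}_p^{-1}\boldsymbol{\mu}_p+\mathbf{\Sigma}_q^{-1}\boldsymbol{\mu}_q\big)$ supplied by the proposition. Each term $\mathbf{\Sigma}_i^{-1}\boldsymbol{\mu}_i$ collapses to the scalar multiple $\boldsymbol{\mu}_i/\sigma_i^2$, and multiplication on the left by the scalar matrix $\mathbf{\Sigma}$ reduces to ordinary scalar multiplication of the bracketed vector. Collecting the factors gives the stated precision-weighted average directly.

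I do not anticipate any genuine obstacle: the entire argument is a one-line substitution together with the trivial inversion of a scalar-times-identity matrix, and every distributive step is valid because scalar matrices commute with all vector operations. The only thing I would emphasise in the written-out version is the \emph{interpretation} — introducing $\tau_i:=1/\sigma_i^2$ as the scalar precision of each factor makes the corollary read as the mnemonic ``precisions add, and the mean is the precision-weighted average of the component means'', which is the form used repeatedly in the later DDPM posterior derivations.
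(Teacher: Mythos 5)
Your proposal is correct and matches the paper's (implicit) argument: the corollary is obtained simply by substituting $\mathbf{\Sigma}_p=\sigma_p^2\mathbf{I}_d$ and $\mathbf{\Sigma}_q=\sigma_q^2\mathbf{I}_d$ into Proposition~\ref{prop:product-gaussians} and inverting the resulting scalar multiple of the identity. Nothing further is needed.
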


In the above we write \(p(\x)\,q(\x)\propto \mathcal{N}(\x;\boldsymbol{\mu},\mathbf{\Sigma})\) because the point-wise product of two probability densities is not, in general, a normalised density. The missing factor is a constant that does not depend on \(\x\); dividing by this constant yields the properly normalised Gaussian pdf with parameters \((\boldsymbol{\mu},\mathbf{\Sigma})\) given in the proposition. The same algebra extends to any finite number of Gaussian factors: if \(p_k(\x)=\mathcal{N}(\x;\boldsymbol{\mu}_k,\mathbf{\Sigma}_k)\) for \(k=1,\ldots,n\), then their product is proportional to a Gaussian with precision (inverse covariance) equal to the sum of precisions and mean given by a precision-weighted average,
\[
\mathbf{\Sigma}^{-1}=\sum_{k=1}^n \mathbf{\Sigma}_k^{-1},
\qquad
\boldsymbol{\mu}=\mathbf{\Sigma}\sum_{k=1}^n \mathbf{\Sigma}_k^{-1}\boldsymbol{\mu}_k.
\]
Intuitively: precisions add, and the mean is pulled towards component means in proportion to their precisions.

\subsection{KL divergence between Gaussian distributions}

We now compute the Kullback–Leibler divergence between two multivariate Gaussians
\[
P=\mathcal{N}(\boldsymbol{\mu}_P,\mathbf{\Sigma}_P),
\qquad
Q=\mathcal{N}(\boldsymbol{\mu}_Q,\mathbf{\Sigma}_Q),
\]
both on $\mathbb{R}^d$ with symmetric positive-definite covariances. By definition,
\[
\mathrm{KL}(P\|Q)=\mathbb{E}_{P}\!\left[\log\frac{p(\x)}{q(\x)}\right]
=\mathbb{E}_{P}\big[\log p(\x)-\log q(\x)\big].
\]

\begin{proposition}[KL for Gaussians]\label{prop:kl-gaussians}
With $P$ and $Q$ as above,
\begin{equation}\label{eq:kl-general}
\mathrm{KL}(P\|Q)
=\tfrac{1}{2}\Big(
\mathrm{tr}(\mathbf{\Sigma}_Q^{-1}\mathbf{\Sigma}_P)
+ (\boldsymbol{\mu}_Q-\boldsymbol{\mu}_P)^\top \mathbf{\Sigma}_Q^{-1}(\boldsymbol{\mu}_Q-\boldsymbol{\mu}_P)
- d
+ \log\tfrac{\mathrm{det}(\mathbf{\Sigma}_Q)}{\mathrm{det}(\mathbf{\Sigma}_P)}
\Big).
\end{equation}
\end{proposition}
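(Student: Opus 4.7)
The plan is to start from the definition
\[
\mathrm{KL}(P\|Q) = \mathbb{E}_{P}\big[\log p(\x) - \log q(\x)\big],
\]
substitute the Gaussian PDF \eqref{eq:normPDF} for both $p$ and $q$, and reduce the question to computing two expectations of quadratic forms under $P$. The $(2\pi)^{d/2}$ normalising constants cancel between numerator and denominator, leaving only a log-determinant contribution and the two Mahalanobis quadratics. Concretely, I would write
\[
\log p(\x) - \log q(\x)
= \tfrac{1}{2}\log\frac{\det\mathbf{\Sigma}_Q}{\det\mathbf{\Sigma}_P}
-\tfrac{1}{2}(\x-\boldsymbol{\mu}_P)^\top \mathbf{\Sigma}_P^{-1}(\x-\boldsymbol{\mu}_P)
+\tfrac{1}{2}(\x-\boldsymbol{\mu}_Q)^\top \mathbf{\Sigma}_Q^{-1}(\x-\boldsymbol{\mu}_Q).
\]

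The easy term is the first quadratic: since $\x\sim P$ is already centered at $\boldsymbol{\mu}_P$, Proposition~\ref{prop:quad-centered} gives
\[
\mathbb{E}_{P}\!\big[(\x-\boldsymbol{\mu}_P)^\top \mathbf{\Sigma}_P^{-1}(\x-\boldsymbol{\mu}_P)\big]
= \mathrm{tr}(\mathbf{\Sigma}_P^{-1}\mathbf{\Sigma}_P) = \mathrm{tr}(\mathbf{I}_d) = d.
\]
The log-determinant piece is deterministic and passes through the expectation unchanged, contributing $\tfrac{1}{2}\log\det\mathbf{\Sigma}_Q/\det\mathbf{\Sigma}_P$.

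The main obstacle is the off-centred quadratic $\mathbb{E}_{P}[(\x-\boldsymbol{\mu}_Q)^\top \mathbf{\Sigma}_Q^{-1}(\x-\boldsymbol{\mu}_Q)]$, since $\x$ is not centred at $\boldsymbol{\mu}_Q$ under $P$. I would handle it by the shift $\x-\boldsymbol{\mu}_Q = (\x-\boldsymbol{\mu}_P) + (\boldsymbol{\mu}_P-\boldsymbol{\mu}_Q)$ and expanding the quadratic into three pieces: a pure centred quadratic in $\x-\boldsymbol{\mu}_P$, a cross term linear in $\x-\boldsymbol{\mu}_P$, and a deterministic constant. The cross term vanishes under $\mathbb{E}_P$ because $\mathbb{E}_P[\x-\boldsymbol{\mu}_P]=\mathbf{0}$; the centred quadratic gives $\mathrm{tr}(\mathbf{\Sigma}_Q^{-1}\mathbf{\Sigma}_P)$ by Proposition~\ref{prop:quad-centered} applied with $\mathbf{A}=\mathbf{\Sigma}_Q^{-1}$; and the constant is precisely $(\boldsymbol{\mu}_Q-\boldsymbol{\mu}_P)^\top \mathbf{\Sigma}_Q^{-1}(\boldsymbol{\mu}_Q-\boldsymbol{\mu}_P)$.

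Finally, I would assemble the pieces: the $-\tfrac{1}{2}\,\mathbb{E}_P[\cdot]$ of the first quadratic contributes $-\tfrac{1}{2}d$, the $+\tfrac{1}{2}\,\mathbb{E}_P[\cdot]$ of the second contributes $\tfrac{1}{2}\mathrm{tr}(\mathbf{\Sigma}_Q^{-1}\mathbf{\Sigma}_P) + \tfrac{1}{2}(\boldsymbol{\mu}_Q-\boldsymbol{\mu}_P)^\top \mathbf{\Sigma}_Q^{-1}(\boldsymbol{\mu}_Q-\boldsymbol{\mu}_P)$, and the log-det contributes $\tfrac{1}{2}\log\det\mathbf{\Sigma}_Q/\det\mathbf{\Sigma}_P$. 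Pulling out the common factor $\tfrac{1}{2}$ yields the claimed identity~\eqref{eq:kl-general}. Beyond the off-centred shift, the derivation is purely mechanical; the real content of the proof is the trace identity already established in Proposition~\ref{prop:quad-centered}.
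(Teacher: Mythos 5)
Your proposal is correct and follows essentially the same route as the paper's proof: cancel the $(2\pi)^{d/2}$ constants, evaluate the centred quadratic via Proposition~\ref{prop:quad-centered} to get $d$, and handle the off-centred quadratic by the shift $\x-\boldsymbol{\mu}_Q=(\x-\boldsymbol{\mu}_P)+(\boldsymbol{\mu}_P-\boldsymbol{\mu}_Q)$, with the cross term vanishing and the trace trick yielding $\mathrm{tr}(\mathbf{\Sigma}_Q^{-1}\mathbf{\Sigma}_P)$. No gaps; the assembly of terms matches \eqref{eq:kl-general} exactly.
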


\begin{proof}
Starting from
\[
p(\x)=\frac{1}{(2\pi)^{d/2}\,\mathrm{det}(\mathbf{\Sigma}_P)^{1/2}}
\exp\!\Big(-\tfrac{1}{2}(\x-\boldsymbol{\mu}_P)^\top \mathbf{\Sigma}_P^{-1}(\x-\boldsymbol{\mu}_P)\Big),
\]
taking logs gives
\[
\log p(\x)=-\tfrac{d}{2}\log(2\pi)-\tfrac{1}{2}\log\mathrm{det}(\mathbf{\Sigma}_P)
-\tfrac{1}{2}(\x-\boldsymbol{\mu}_P)^\top \mathbf{\Sigma}_P^{-1}(\x-\boldsymbol{\mu}_P),
\]
and analogously for $\log q(\x)$ with $(\boldsymbol{\mu}_Q,\mathbf{\Sigma}_Q)$. Thus
\[
\log p(\x)-\log q(\x)
=\tfrac{1}{2}\log\frac{\mathrm{det}(\mathbf{\Sigma}_Q)}{\mathrm{det}(\mathbf{\Sigma}_P)}
-\tfrac{1}{2}(\x-\boldsymbol{\mu}_P)^\top \mathbf{\Sigma}_P^{-1}(\x-\boldsymbol{\mu}_P)
+\tfrac{1}{2}(\x-\boldsymbol{\mu}_Q)^\top \mathbf{\Sigma}_Q^{-1}(\x-\boldsymbol{\mu}_Q),
\]
because the $-\tfrac{d}{2}\log(2\pi)$ terms cancel. Taking $\mathbb{E}_P[\cdot]$ on both sides yields
\[
\mathrm{KL}(P\|Q)
=\tfrac{1}{2}\log\frac{\mathrm{det}(\mathbf{\Sigma}_Q)}{\mathrm{det}(\mathbf{\Sigma}_P)}
+\tfrac{1}{2}\,\mathbb{E}_{P}\!\left[
-(\x-\boldsymbol{\mu}_P)^\top \mathbf{\Sigma}_P^{-1}(\x-\boldsymbol{\mu}_P)
+(\x-\boldsymbol{\mu}_Q)^\top \mathbf{\Sigma}_Q^{-1}(\x-\boldsymbol{\mu}_Q)
\right].
\]

We now evaluate the two expectations using the quadratic-form identities from the previous subsection.
First,
\[
\mathbb{E}_{P}\!\left[(\x-\boldsymbol{\mu}_P)^\top \mathbf{\Sigma}_P^{-1}(\x-\boldsymbol{\mu}_P)\right]
=\mathrm{tr}(\mathbf{\Sigma}_P^{-1}\mathbf{\Sigma}_P)=d.
\]
For the second, write $\x-\boldsymbol{\mu}_Q=(\x-\boldsymbol{\mu}_P)+(\boldsymbol{\mu}_P-\boldsymbol{\mu}_Q)$ and expand:
\[
\begin{aligned}
&\mathbb{E}_{P}\!\left[(\x-\boldsymbol{\mu}_Q)^\top \mathbf{\Sigma}_Q^{-1}(\x-\boldsymbol{\mu}_Q)\right]\\
&\qquad=\mathbb{E}_{P}\!\left[(\x-\boldsymbol{\mu}_P)^\top \mathbf{\Sigma}_Q^{-1}(\x-\boldsymbol{\mu}_P)\right]
+2\,\mathbb{E}_{P}\!\left[(\x-\boldsymbol{\mu}_P)^\top \mathbf{\Sigma}_Q^{-1}(\boldsymbol{\mu}_P-\boldsymbol{\mu}_Q)\right]
+(\boldsymbol{\mu}_P-\boldsymbol{\mu}_Q)^\top \mathbf{\Sigma}_Q^{-1}(\boldsymbol{\mu}_P-\boldsymbol{\mu}_Q).
\end{aligned}
\]
The middle term vanishes because $\mathbb{E}_{P}[\x-\boldsymbol{\mu}_P]=\mathbf{0}$. For the first term, use the trace trick:
\[
\mathbb{E}_{P}\!\left[(\x-\boldsymbol{\mu}_P)^\top \mathbf{\Sigma}_Q^{-1}(\x-\boldsymbol{\mu}_P)\right]
=\mathrm{tr}\!\big(\mathbf{\Sigma}_Q^{-1}\,\mathbb{E}_{P}[(\x-\boldsymbol{\mu}_P)(\x-\boldsymbol{\mu}_P)^\top]\big)
=\mathrm{tr}(\mathbf{\Sigma}_Q^{-1}\mathbf{\Sigma}_P).
\]
Combining these pieces,
\[
\mathbb{E}_{P}\!\left[(\x-\boldsymbol{\mu}_Q)^\top \mathbf{\Sigma}_Q^{-1}(\x-\boldsymbol{\mu}_Q)\right]
=\mathrm{tr}(\mathbf{\Sigma}_Q^{-1}\mathbf{\Sigma}_P)
+(\boldsymbol{\mu}_P-\boldsymbol{\mu}_Q)^\top \mathbf{\Sigma}_Q^{-1}(\boldsymbol{\mu}_P-\boldsymbol{\mu}_Q).
\]

Substitute both expectations back into the expression for $\mathrm{KL}(P\|Q)$ and simplify:
\[
\mathrm{KL}(P\|Q)
=\tfrac{1}{2}\log\frac{\mathrm{det}(\mathbf{\Sigma}_Q)}{\mathrm{det}(\mathbf{\Sigma}_P)}
+\tfrac{1}{2}\Big(-d+\mathrm{tr}(\mathbf{\Sigma}_Q^{-1}\mathbf{\Sigma}_P)
+(\boldsymbol{\mu}_P-\boldsymbol{\mu}_Q)^\top \mathbf{\Sigma}_Q^{-1}(\boldsymbol{\mu}_P-\boldsymbol{\mu}_Q)\Big),
\]
which matches \eqref{eq:kl-general} (note the mean term is symmetric in $P,Q$). This completes the proof.
\end{proof}

\paragraph{Special cases.}
If both covariances are diagonal, \eqref{eq:kl-general} reduces componentwise:
\[
\mathrm{KL}(P\|Q)
=\tfrac{1}{2}\sum_{i=1}^d\!\left(
\frac{\sigma_{P,i}^2}{\sigma_{Q,i}^2}
+\frac{(\mu_{P,i}-\mu_{Q,i})^2}{\sigma_{Q,i}^2}
-1
+\log\frac{\sigma_{Q,i}^2}{\sigma_{P,i}^2}
\right).
\]
If both are isotropic with the same variance, $\mathbf{\Sigma}_P=\mathbf{\Sigma}_Q=\sigma^2\mathbf{I}_d$, then
$\mathrm{KL}(P\|Q)=\tfrac{1}{2\sigma^2}\|\boldsymbol{\mu}_P-\boldsymbol{\mu}_Q\|_2^2$.
If they are isotropic with different variances,
\[
\mathrm{KL}(P\|Q)
=\tfrac{1}{2}\left(
d\Big(\frac{\sigma_P^2}{\sigma_Q^2}-1-\log\frac{\sigma_P^2}{\sigma_Q^2}\Big)
+\frac{1}{\sigma_Q^2}\,\|\boldsymbol{\mu}_P-\boldsymbol{\mu}_Q\|_2^2
\right).
\]

\section{Diffusion Models}
\label{sec:diffusion}

\noindent
A diffusion model has two coupled stochastic processes over data vectors $\x \in \mathbb{R}^d$:
\begin{itemize}
    \item a \emph{forward} (noising) process $q$ that gradually destroys structure by adding Gaussian noise in $T$ small steps; and
    \item a \emph{reverse} (denoising) process $p_\theta$ that learns to invert those steps to recover data from noise.
\end{itemize}

\subsection{The forward process}

\noindent
The joint probability of the whole path of this forward process follows the Markov property:
\[
\underbrace{{\qwholecond}}_{\displaystyle\textrm{Denote with } {q(\x_{1:T} \mid \x_0)}} = q(\x_T \mid \x_{T-1}) ~\cdots ~q(\x_3 \mid \x_2)~ q(\x_2 \mid \x_1)~ q(\x_1 \mid \x_0)\;.
\]

\noindent The Gaussians $\q$ define the forward diffusion \cite{sohl-dickstein-2015}:

\begin{equation}
    \label{eq:fwd}
    q(\x_t \mid \x_{t-1}) \;=\; \mathcal{N}\!\big(\x_t;\,\sqrt{1-\beta_t}\,\x_{t-1},\,\beta_t\,\mathbf{I}\big)\,.
\end{equation}

where $\sqrt{1-\beta_t}\,\x_{t-1}$ is the mean and $\beta_t \mathbf{I}$ is the covariance. Here $\mathbf{I}$ is the identity matrix and $\beta_t \in (0, 1)$ is the noise-level parameter in a pre-determined noise schedule
\[
\beta_1 < \beta_ 2 < \cdots < \beta_T\;.
\]

The forward process is designed so that the terminal state is (very close to) standard Gaussian. Define:

\begin{equation}
    \label{eq:notations}
\alpha_t := 1-\beta_t\;, \quad \quad \bar{\alpha}_t := \prod_{i=1}^t \alpha_i\;,
\end{equation}

then, with the cumulative product $\bar{\alpha}_T=\prod_{i=1}^T \alpha_i$ shrinking towards $0$
as $T$ grows, the marginal $\displaystyle q(\x_T\mid \x_0)=\mathcal{N}\!\big(\sqrt{\bar{\alpha}_T}\,\x_0,\,(1-\bar{\alpha}_T)\mathbf{I}\big)$
approaches $\mathcal N(\mathbf{0},\mathbf{I})$, so we set $p(\x_T)=\mathcal N(\mathbf{0},\mathbf{I})$. With this, we can re-write \eqref{eq:fwd} as
\begin{equation}
    \label{eq:fwdproc}
    \q = \mathcal{N}\!\bigg(\x_t;~\sqrt{\alpha_t}\,\x_{t-1},~(1-\alpha_t)\,\mathbf{I}\bigg).
\end{equation}

We will show that to generate $\x_t$ we only need $\x_0$ (plus fresh Gaussian noise). Using \eqref{eq:fwdproc} and the re-parameterisation trick of Corollary \ref{cor:full_cov_sampling_easy}, we write
\begin{align}
    \label{eq:x2}
    \x_t &= \sqrt{\alpha_t}\,\x_{t-1} + \sqrt{1-\alpha_t}\, \boldsymbol{\epsilon}\;, \nonumber\\
    \x_{t-1} &= \sqrt{\alpha_{t-1}}\,\x_{t-2} + \sqrt{1-\alpha_{t-1}}\,\boldsymbol{\epsilon}\;, \nonumber \\
    \x_{t-2} &= \sqrt{\alpha_{t-2}}\,\x_{t-3} + \sqrt{1-\alpha_{t-2}}\,\boldsymbol{\epsilon}\;,\\
    \vdots \nonumber\\
    \x_1 &= \sqrt{\alpha_1}\, \x_0 + \sqrt{1-\alpha_1}\, \boldsymbol{\epsilon}\;, \nonumber
\end{align}
where each $\boldsymbol{\epsilon}\sim \mathcal{N}(\mathbf{0},\mathbf{I})$ is an independent draw. Substitute $\x_{t-1}$ into the first line to get

\begin{align}
\label{eq:sampling}
    \x_t &= \sqrt{\alpha_t} \Big(\sqrt{\alpha_{t-1}}\,\x_{t-2} + \sqrt{1-\alpha_{t-1}}\, \boldsymbol{\epsilon}\Big) + \sqrt{1-\alpha_t}\, \boldsymbol{\epsilon} \nonumber \\
    &= \sqrt{\alpha_t \alpha_{t-1}}\, \x_{t-2} + \underbrace{\sqrt{\alpha_t (1-\alpha_{t-1})}\,\boldsymbol{\epsilon}}_{\mathcal{N}(\mathbf{0},\,\alpha_t(1-\alpha_{t-1})\mathbf{I})} + \underbrace{\sqrt{1-\alpha_t}\,\boldsymbol{\epsilon}}_{\mathcal{N}(\mathbf{0},\,(1-\alpha_t)\mathbf{I})}\;.
\end{align}
The last two terms are independent Gaussians, so their sum is Gaussian with variance equal to the sum of variances:

\[
\sqrt{\alpha_t (1-\alpha_{t-1})}\,\boldsymbol{\epsilon} + \sqrt{1-\alpha_t}\,\boldsymbol{\epsilon} \sim \mathcal{N}\!\bigg(\mathbf{0},\, (1-\alpha_t\alpha_{t-1})\mathbf{I}\bigg)\;,
\]

and thus
\begin{equation}
\label{eq:sampleX2}
\x_t = \sqrt{\alpha_t \alpha_{t-1}}\, \x_{t-2} + \sqrt{1-\alpha_t \alpha_{t-1}}\, \boldsymbol{\epsilon}\;.
\end{equation}
Repeating once more by replacing $\x_{t-2}$,
\begin{align}
    \label{eq:x3}
    \x_t &= \sqrt{\alpha_t \alpha_{t-1}} \Big(\sqrt{\alpha_{t-2}}\,\x_{t-3} + \sqrt{1-\alpha_{t-2}}\,\boldsymbol{\epsilon}\Big) + \sqrt{1-\alpha_t \alpha_{t-1}}\, \boldsymbol{\epsilon} \nonumber \\
    &= \sqrt{\alpha_t \alpha_{t-1} \alpha_{t-2}}\, \x_{t-3} + \underbrace{\sqrt{\alpha_t \alpha_{t-1} (1-\alpha_{t-2})}\, \boldsymbol{\epsilon}}_{\mathcal{N}(\mathbf{0},\,\alpha_t \alpha_{t-1} (1-\alpha_{t-2})\mathbf{I})} + \underbrace{\sqrt{1-\alpha_t \alpha_{t-1}}\, \boldsymbol{\epsilon}}_{\mathcal{N}(\mathbf{0},(\,1-\alpha_t \alpha_{t-1})\mathbf{I})}\;,
\end{align}
whose sum is again Gaussian with variance $1-\alpha_t \alpha_{t-1} \alpha_{t-2}$. Hence
\[
    \x_t = \sqrt{\alpha_t \alpha_{t-1} \alpha_{t-2}}\, \x_{t-3} + \sqrt{1-\alpha_t \alpha_{t-1} \alpha_{t-2}}\, \boldsymbol{\epsilon}\;.
\]
By induction, continuing until $\x_t$ is expressed in terms of $\x_0$ yields
\[
\x_t = \sqrt{\prod_{i=1}^t \alpha_i}\, \x_0 + \sqrt{1- \prod_{i=1}^t \alpha_i}\, \boldsymbol{\epsilon}\;.
\]
Using the notation in \eqref{eq:notations}, the closed-form sampling equation at any time $t$ is
\begin{equation}
    \label{eq:samplingeq}
    \x_t = \sqrt{\bar{\alpha}_t}\, \x_0 + \sqrt{1-\bar{\alpha}_t}\, \boldsymbol{\epsilon}\;.
\end{equation}
Therefore, since $\x_0$ here is treated as given (non-random under the conditional), the distribution of $\x_t$ has mean $\sqrt{\bar{\alpha}_t}\,\x_0$ and covariance $(1-\bar{\alpha}_t)\,\mathbf{I}$:
\begin{equation}
    \label{eq:samplingdist}
q(\x_t \mid \x_0) = \mathcal{N}\!\bigg(\x_t\;;\,\sqrt{\bar{\alpha}_t}\,\x_0\;,\, (1-\bar{\alpha}_t)\,\mathbf{I}\bigg)\;.
\end{equation}
\medskip

\subsection{The reverse process and the true posterior (DDPM posterior)}

\noindent
The joint probability of the whole reverse Markov process is:\bigskip

\[
\begin{tikzpicture}[overlay]
\node at (8, 1) {\footnotesize{\color{blue}{We start by drawing $\displaystyle \mathbf{x}_T$ from $\displaystyle \pt$}}};
\draw[blue, ->] (9,.8) -- (9,0.3);
\end{tikzpicture}
\underbrace{\pt(\mathbf{x}_T, \mathbf{x}_{T-1}, \dots, \mathbf{x}_1, \mathbf{x}_0)}_{\displaystyle\textrm{Denote with } \pt(\mathbf{x}_{0:T})}
= \pt(\mathbf{x}_0\mid\mathbf{x}_1)\; \cdots\; \pt(\mathbf{x}_{T-1}\mid\mathbf{x}_T)\; \pt(\mathbf{x}_T)\;.
\]

\noindent The true posterior (DDPM posterior) can be written as a Gaussian:
\begin{equation}
\label{eq:grndt}
q(\mathbf{x}_{t-1} \mid \mathbf{x}_t, \mathbf{x}_0)
= \mathcal{N}\!\big(\mathbf{x}_{t-1}\,;\, \hat{\boldsymbol{\mu}}(\mathbf{x}_t, \mathbf{x}_0),\, \hat{\mathbf{\Sigma}}(\mathbf{x}_t, \mathbf{x}_0)\big)\;,
\end{equation}
where $\hat{\boldsymbol{\mu}}(\mathbf{x}_t, \mathbf{x}_0) \in \mathbb{R}^d$ and $\hat{\mathbf{\Sigma}}(\mathbf{x}_t, \mathbf{x}_0) \in \mathbb{R}^{d \times d}$ are the mean and covariance. Because the forward noise is isotropic, the \emph{true} posterior covariance is also isotropic:
\[
\hat{\mathbf{\Sigma}}(\mathbf{x}_t, \mathbf{x}_0)=\hat{\sigma}_t^2\,\mathbf{I}_d
\qquad\text{with}\qquad
\hat{\sigma}_t^2=\tilde{\beta}_t
:= \frac{1-\bar{\alpha}_{t-1}}{1-\bar{\alpha}_t}\,\beta_t\;.
\]
We derive both $\hat{\boldsymbol{\mu}}$ and $\hat{\sigma}_t^2$ below.

\medskip
By Bayes’ rule,
\[
q(\mathbf{x}_{t-1} \mid \mathbf{x}_t, \mathbf{x}_0)
= \frac{q(\mathbf{x}_t \mid \mathbf{x}_{t-1}, \mathbf{x}_0)\, q(\mathbf{x}_{t-1} \mid \mathbf{x}_0)}{q(\mathbf{x}_t \mid \mathbf{x}_0)}\;.
\]
Since the forward process is Markovian,
\[
q(\mathbf{x}_t \mid \mathbf{x}_{t-1}, \mathbf{x}_0) = q(\mathbf{x}_t \mid \mathbf{x}_{t-1})
= \mathcal{N}\!\big(\sqrt{\alpha_t}\,\mathbf{x}_{t-1},\,\beta_t \mathbf{I}_d\big)
\propto \exp\!\Big(-\tfrac{1}{2}\, \tfrac{\|\mathbf{x}_t - \sqrt{\alpha_t}\,\mathbf{x}_{t-1}\|^2}{\beta_t}\Big)\;,
\]
and from \eqref{eq:samplingdist},
\begin{align*}
q(\mathbf{x}_{t-1} \mid \mathbf{x}_0)
&= \mathcal{N}\!\big(\sqrt{\bar{\alpha}_{t-1}}\,\mathbf{x}_0,\ (1-\bar{\alpha}_{t-1})\,\mathbf{I}_d\big)
\propto \exp\!\Big(-\tfrac{1}{2}\, \tfrac{\|\mathbf{x}_{t-1} - \sqrt{\bar{\alpha}_{t-1}}\,\mathbf{x}_{0}\|^2}{1-\bar{\alpha}_{t-1}}\Big)\;,\\[0.6em]
q(\mathbf{x}_{t} \mid \mathbf{x}_0)
&= \mathcal{N}\!\big(\sqrt{\bar{\alpha}_t}\,\mathbf{x}_0,\ (1-\bar{\alpha}_t)\,\mathbf{I}_d\big)
\propto \exp\!\Big(-\tfrac{1}{2}\, \tfrac{\|\mathbf{x}_t - \sqrt{\bar{\alpha}_t}\,\mathbf{x}_{0}\|^2}{1-\bar{\alpha}_{t}}\Big)\;.
\end{align*}
Therefore,
\begin{align}
\label{eq:gndt2}
q(\mathbf{x}_{t-1} \mid \mathbf{x}_t, \mathbf{x}_0)
&\propto \exp\!\Big( -\tfrac{1}{2} \Big[\tfrac{\|\mathbf{x}_t - \sqrt{\alpha_t}\,\mathbf{x}_{t-1}\|^2}{\beta_t}
+ \tfrac{\|\mathbf{x}_{t-1} - \sqrt{\bar{\alpha}_{t-1}}\,\mathbf{x}_{0}\|^2}{1-\bar{\alpha}_{t-1}}
- \tfrac{\|\mathbf{x}_t - \sqrt{\bar{\alpha}_t}\,\mathbf{x}_{0}\|^2}{1-\bar{\alpha}_{t}} \Big] \Big)\;.
\end{align}
Expanding the quadratic terms,
\begin{align*}
\frac{\|\mathbf{x}_t - \sqrt{\alpha_t}\,\mathbf{x}_{t-1}\|^2}{\beta_{t}}
&= \frac{\mathbf{x}_t^\top \mathbf{x}_t -2\sqrt{\alpha_t}\, \mathbf{x}_t^\top \mathbf{x}_{t-1} + \alpha_t\, \mathbf{x}_{t-1}^\top \mathbf{x}_{t-1}}{\beta_t}\;,\\
\frac{\|\mathbf{x}_{t-1} - \sqrt{\bar{\alpha}_{t-1}}\,\mathbf{x}_{0}\|^2}{1-\bar{\alpha}_{t-1}}
&= \frac{\mathbf{x}_{t-1}^\top \mathbf{x}_{t-1} -2\sqrt{\bar{\alpha}_{t-1}}\, \mathbf{x}_{t-1}^\top \mathbf{x}_{0} + \bar{\alpha}_{t-1}\, \mathbf{x}_{0}^\top \mathbf{x}_{0}}{1-\bar{\alpha}_{t-1}}\;,\\
\frac{\|\mathbf{x}_t - \sqrt{\bar{\alpha}_t}\,\mathbf{x}_{0}\|^2}{1-\bar{\alpha}_{t}}
&= \frac{\mathbf{x}_{t}^\top \mathbf{x}_{t} -2\sqrt{\bar{\alpha}_t}\, \mathbf{x}_{t}^\top \mathbf{x}_{0} + \bar{\alpha}_t\, \mathbf{x}_{0}^\top \mathbf{x}_{0}}{1-\bar{\alpha}_{t}}\;.
\end{align*}
Thus,
\begin{align*}
q(\mathbf{x}_{t-1} \mid \mathbf{x}_t, \mathbf{x}_0)
\propto \exp\!\Big( -\tfrac{1}{2} \Big[&
\frac{\mathbf{x}_t^\top \mathbf{x}_t -2\sqrt{\alpha_t}\, \mathbf{x}_t^\top \mathbf{x}_{t-1} + \alpha_t\, \mathbf{x}_{t-1}^\top \mathbf{x}_{t-1}}{\beta_t}
+ \frac{\mathbf{x}_{t-1}^\top \mathbf{x}_{t-1} -2\sqrt{\bar{\alpha}_{t-1}}\, \mathbf{x}_{t-1}^\top \mathbf{x}_{0} + \bar{\alpha}_{t-1}\, \mathbf{x}_{0}^\top \mathbf{x}_{0}}{1-\bar{\alpha}_{t-1}}\\
&- \frac{\mathbf{x}_{t}^\top \mathbf{x}_{t} -2\sqrt{\bar{\alpha}_t}\, \mathbf{x}_{t}^\top \mathbf{x}_{0} + \bar{\alpha}_t\, \mathbf{x}_{0}^\top \mathbf{x}_{0}}{1-\bar{\alpha}_{t}}\Big]\Big)\;,
\end{align*}
which collects into a quadratic form in $\mathbf{x}_{t-1}$:
\begin{align*}
q(\mathbf{x}_{t-1} \mid \mathbf{x}_t, \mathbf{x}_0)
\propto \exp\!\Big( -\tfrac{1}{2}\Big[&
\Big(\tfrac{\alpha_t}{\beta_t} + \tfrac{1}{1-\bar{\alpha}_{t-1}}\Big)\mathbf{x}_{t-1}^\top \mathbf{x}_{t-1}
-2\Big(\tfrac{\sqrt{\alpha_t}}{\beta_t}\mathbf{x}_t + \tfrac{\sqrt{\bar{\alpha}_{t-1}}}{1-\bar{\alpha}_{t-1}}\mathbf{x}_0\Big)^\top \mathbf{x}_{t-1}
+ C(\mathbf{x}_t, \mathbf{x}_0)\Big] \Big)\;,
\end{align*}
where $C(\mathbf{x}_t, \mathbf{x}_0)$ contains no $\mathbf{x}_{t-1}$ terms (hence is irrelevant for the Gaussian identification). Define
\[
A := \frac{\alpha_t}{\beta_t} +  \frac{1}{1-\bar{\alpha}_{t-1}}\;, \qquad 
\mathbf{b} := \frac{\sqrt{\alpha_t}}{\beta_t}\mathbf{x}_t + \frac{\sqrt{\bar{\alpha}_{t-1}}}{1-\bar{\alpha}_{t-1}}\mathbf{x}_0\;.
\]
Completing the square gives
\[
q(\mathbf{x}_{t-1} \mid \mathbf{x}_t, \mathbf{x}_0)
\propto \exp \!\Big(-\tfrac{1}{2} A \big\|\mathbf{x}_{t-1} - \tfrac{\mathbf{b}}{A}\big\|_2^2 + \tfrac{\mathbf{b}^\top \mathbf{b}}{2A}\Big)\;,
\]
so the \emph{posterior mean} is
\begin{align}
\label{eq:meangnd}
\hat{\boldsymbol{\mu}}(\mathbf{x}_t, \mathbf{x}_0)
= \frac{\mathbf{b}}{A}
= \frac{\frac{\sqrt{\alpha_t}}{\beta_t}\,\mathbf{x}_t
      + \frac{\sqrt{\bar{\alpha}_{t-1}}}{1-\bar{\alpha}_{t-1}}\,\mathbf{x}_0}
     {\frac{\alpha_t}{\beta_t} + \frac{1}{1-\bar{\alpha}_{t-1}}}
= \frac{\sqrt{\alpha_t}(1-\bar{\alpha}_{t-1})}{1-\bar{\alpha}_t}\,\mathbf{x}_t
\;+\;
\frac{\sqrt{\bar{\alpha}_{t-1}}\,\beta_t}{1-\bar{\alpha}_t}\,\mathbf{x}_0 \;,
\end{align}
and the \emph{posterior covariance} is the inverse of the scalar precision $A$ times $\mathbf{I}_d$:
\[
\hat{\mathbf{\Sigma}}(\mathbf{x}_t,\mathbf{x}_0) \;=\; A^{-1}\,\mathbf{I}_d
\qquad\text{with}\qquad
A^{-1}
= \frac{1}{\frac{\alpha_t}{\beta_t} + \frac{1}{1-\bar{\alpha}_{t-1}}}
= \frac{\beta_t(1-\bar{\alpha}_{t-1})}{\alpha_t(1-\bar{\alpha}_{t-1})+\beta_t}
= \frac{\beta_t(1-\bar{\alpha}_{t-1})}{1-\bar{\alpha}_t}
= \tilde{\beta}_t\;.
\]
Thus
\[
q(\mathbf{x}_{t-1} \mid \mathbf{x}_t, \mathbf{x}_0)
= \mathcal{N}\!\big(\hat{\boldsymbol{\mu}}(\mathbf{x}_t,\mathbf{x}_0),\ \tilde{\beta}_t\,\mathbf{I}_d\big)\;.
\]

\medskip
From \eqref{eq:samplingeq} we can write
\[
\mathbf{x}_0 = \frac{1}{\sqrt{\bar{\alpha}_t}} \Big(\mathbf{x}_t - \sqrt{1-\bar{\alpha}_t}\, \boldsymbol{\epsilon}\Big)\;,
\]
and substitute into \eqref{eq:meangnd} to obtain the convenient $\boldsymbol{\epsilon}$–form
\begin{equation}
\label{eq:gndtmean}
\hat{\boldsymbol{\mu}}(\mathbf{x}_t, \mathbf{x}_0)
= \frac{1}{\sqrt{\alpha_t}} \Big(\mathbf{x}_t - \frac{\beta_t}{\sqrt{1-\bar{\alpha}_t}}\, \boldsymbol{\epsilon}\Big)\;.
\end{equation}

\subsection{The loss function}

\noindent
We are interested in $\pt(\x_0)$, the probability assigned by the learned model to a generated $\x_0$. Start with
\[
\pt(\x_{1:T} \mid \x_0)\,\pt(\x_0) = \pt(\x_{0:T})\;.
\]
Although it appears anti-causal, the identity is algebraically valid. Our aim is to make $q(\x_{1:T} \mid \x_0)$ and $\pt(\x_{1:T} \mid \x_0)$ close, via the non-negative KL divergence
\[
0 \leq \mathrm{KL}\!\Big(q(\x_{1:T} \mid \x_0)\;||\;\pt(\x_{1:T} \mid \x_0)\Big) = \mathbb{E}_{q(\x_{1:T}\mid\x_0)} \log \frac{q(\x_{1:T}\mid\x_0)}{\pt(\x_{1:T}\mid\x_0)}\;.
\]
Simplify the expectation:
\begin{align*}
    \mathrm{KL}\!\Big(q(\x_{1:T} \mid \x_0)\;||\;\pt(\x_{1:T} \mid \x_0)\Big) &= \mathbb{E}_{q(\x_{1:T}\mid\x_0)} \log \frac{q(\x_{1:T}\mid\x_0)}{\color{blue}{\pt(\x_{1:T}\mid\x_0)}}\\[0.8em]
&= \mathbb{E}_{q(\x_{1:T}\mid\x_0)} \log \frac{q(\x_{1:T}\mid\x_0)\, \color{blue}{\pt(\x_0)}}{\color{blue}{\pt(\x_{0:T})}}\\[0.8em]
&= \mathbb{E}_{q(\x_{1:T}\mid\x_0)} \log \frac{q(\x_{1:T}\mid\x_0)}{\pt(\x_{0:T})} + {\mathbb{E}_{q(\x_{1:T}\mid\x_0)} \log \pt(\x_0)}\;.
\end{align*}
Because $\log \pt(\x_0)$ does not depend on $\x_{1:T}$,
\[
    {\mathbb{E}_{q(\x_{1:T}\mid\x_0)} \log \pt(\x_0)} = \log \pt(\x_0)\;.
\]
Hence
\begin{align*}
    \mathrm{KL}\!\Big(q(\x_{1:T} \mid \x_0)\;||\;\pt(\x_{1:T} \mid \x_0)\Big) &= \mathbb{E}_{q(\x_{1:T}\mid\x_0)} \log \frac{q(\x_{1:T}\mid\x_0)}{\pt(\x_{0:T})} + \log \pt(\x_0)\ge 0\\[0.6em]
    & \Longrightarrow \quad \mathbb{E}_{q(\x_{1:T}\mid\x_0)} \log \frac{q(\x_{1:T}\mid\x_0)}{\pt(\x_{0:T})} \ge -\log \pt(\x_0)\;.
\end{align*}
Taking expectation over $q(\x_0)$,
\[
    \mathbb{E}_{q(\x_{0:T})} \log \frac{q(\x_{1:T}\mid\x_0)}{\pt(\x_{0:T})} \ge {-\mathbb{E}_{q(\x_0)}\log \pt(\x_0)}\;.
\]
The right-hand side relates to $\mathrm{KL}\!\big(q(\x_0)\,||\,\pt(\x_0)\big) = \mathrm{\mathcolor{blue}{\mathbf{const}}} - \mathbb{E}_{q(\x_{0})} \log \pt(\x_0)$, which we would like to minimise, but $\log \pt(\x_0)$ is intractable. Instead, we minimise the left-hand side, the \emph{Variational Bound (VB)} (closely related to the Evidence Lower Bound in VAEs). Expanding,
\begin{align*}
    \mathbb{E}_{q(\x_{0:T})} \log \frac{q(\x_{1:T}\mid\x_0)}{\pt(\x_{0:T})} &= \mathbb{E}_{q(\x_{0:T})} \log \frac{q(\x_T \mid \x_{T-1})~ \cdots ~q(\x_3 \mid \x_{2})~ q(\x_2 \mid \x_{1})~ \mathcolor{blue}{q(\x_1 \mid \x_{0})}}{\mathcolor{blue}{\pt(\x_0\mid\x_1)}~ \cdots~ \pt(\x_{T-1}\mid\x_T)~ \mathcolor{Purple}{\pt(\x_T)}}\\[0.6em]
    &= \mathbb{E}_{q(\x_{0:T})} \log \frac{q(\x_1 \mid \x_0)}{\pt(\x_0 \mid \x_1)} - \mathbb{E}_{q(\x_{0:T})} \log \pt(\x_T) + \mathbb{E}_{q(\x_{0:T})} \sum_{t=2}^T \log \frac{q(\x_t \mid \x_{t-1})}{\pt(\x_{t-1} \mid \x_t)}\\
    & = \mathbb{E}_{q(\x_{0:T})} \log \frac{q(\x_1 \mid \x_0)}{\pt(\x_0 \mid \x_1)} - \mathbb{E}_{q(\x_{0:T})} \log \pt(\x_T) + {\mathbb{E}_{q(\x_{0:T})} \sum_{t=2}^T \log \frac{q(\x_t \mid \x_{t-1}, \mathcolor{red}{\x_0})}{\pt(\x_{t-1} \mid \x_t)}}\;.
\end{align*}
Applying Bayes’ theorem,
\begin{align*}
    \mathrm{VB} &= \mathbb{E}_{q(\x_{0:T})} \log \frac{q(\x_1 \mid \x_0)}{\pt(\x_0 \mid \x_1)} - \mathbb{E}_{q(\x_{0:T})} \log \pt(\x_T) + {\mathbb{E}_{q(\x_{0:T})} \sum_{t=2}^T \log \frac{\mathcolor{red}{q(\x_{t-1} \mid \x_{t}, \x_0)\, q(\x_t \mid \x_0)}}{\pt(\x_{t-1} \mid \x_t)\, \mathcolor{red}{q(\x_{t-1} \mid \x_0)}}}\\[0.4em]
    &=\mathbb{E}_{q(\x_{0:T})} \log \frac{q(\x_1 \mid \x_0)}{\pt(\x_0 \mid \x_1)} - \mathbb{E}_{q(\x_{0:T})} \log \pt(\x_T) + {\mathbb{E}_{q(\x_{0:T})} \sum_{t=2}^T \log \frac{q(\x_{t-1} \mid \x_{t}, \x_0)}{\pt(\x_{t-1} \mid \x_t)}}\\
    &\qquad\qquad + {\mathbb{E}_{q(\x_{0:T})} \sum_{t=2}^T \log \frac{q(\x_t \mid \x_0)}{q(\x_{t-1} \mid \x_0)}}\;.
\end{align*}
The telescoping sum simplifies to
\[
    \mathbb{E}_{q(\x_{0:T})} \sum_{t=2}^T \log \frac{q(\x_t \mid \x_0)}{q(\x_{t-1} \mid \x_0)} = \mathbb{E}_{q(\x_{0:T})} \log \frac{q(\x_T \mid \x_0)}{q(\x_1 \mid \x_0)}\;.
\]
Thus
\begin{align*}
    \begin{tikzpicture}[overlay]
        \node at (3, -2.1) {\textcolor{black}{\circled{\textbf{I}}}};
        \node at (7, -2.3) {\textcolor{black}{\circled{\textbf{II}}}};
        \node at (12, -2) {\textcolor{black}{\circled{\textbf{III}}}};
    \end{tikzpicture}
    \mathrm{VB} &= \mathbb{E}_{q(\x_{0:T})} \log \frac{\Ccancel{q(\x_1 \mid \x_0)}}{\pt(\x_0 \mid \x_1)} ~-~ \mathbb{E}_{q(\x_{0:T})} \log \pt(\x_T) ~+~ \mathcolor{blue}{\mathbb{E}_{q(\x_{0:T})} \log \frac{q(\x_T \mid \x_0)}{\Ccancel{q(\x_1 \mid \x_0)}}}\\ &\qquad +~ \mathbb{E}_{q(\x_{0:T})} \sum_{t=2}^T \log \frac{q(\x_{t-1} \mid \x_{t}, \x_0)}{\pt(\x_{t-1} \mid \x_t)}\\[1.8em]
    &= \mathbb{E}_{q(\x_{0:T})} \log \frac{q(\x_T \mid \x_0)}{\pt(\x_T)} ~-~ \mathbb{E}_{q(\x_{0:T})} \log \pt(\x_0 \mid \x_1) ~+~ \mathbb{E}_{q(\x_{0:T})} \sum_{t=2}^T \log \frac{q(\x_{t-1} \mid \x_{t}, \x_0)}{\pt(\x_{t-1} \mid \x_t)}\;.
\end{align*}

\begin{itemize}
    \item[\textcolor{black}{\circled{\textbf{I}}}] $\displaystyle \mathbb{E}_{q(\x_{0:T})} \log \frac{q(\x_T \mid \x_0)}{\pt(\x_T)} = 0$ when $\displaystyle q(\x_T \mid \x_0) = p_\theta(\x_T)$; otherwise it is a small constant in practice when $p_\theta(\x_T)=\mathcal{N}(\mathbf{0},\mathbf{I})$ and $T$ is large.\bigskip
    \item[\textcolor{black}{\circled{\textbf{II}}}] $\displaystyle \mathbb{E}_{q(\x_{0:T})} \log \pt(\x_0 \mid \x_1)$ is commonly neglected in practice (acts like a $t{=}1$ decoder term).
\bigskip
    \item[\textcolor{black}{\circled{\textbf{III}}}] $\displaystyle \mathbb{E}_{q(\x_{0:T})} \sum_{t=2}^T \log \frac{q(\x_{t-1} \mid \x_{t}, \x_0)}{\pt(\x_{t-1} \mid \x_t)}$ reduces to a sum of KL divergences between isotropic Gaussians (closed form).
\end{itemize}
\bigskip

\noindent
Hence the optimisation encourages $\pt(\x_{t-1} \mid \x_t)$ to match $q(\x_{t-1} \mid \x_t, \x_0)$ for $t=2,\dots,T$ \cite{ho-2020-ddpm}:
\begin{equation}
\mathrm{VB} = \mathbb{E}_{q(\x_{0:T})} \sum_{t=2}^T \log \frac{q(\x_{t-1} \mid \x_{t}, \x_0)}{\pt(\x_{t-1} \mid \x_t)} \;\cong\; \sum_{t=2}^T \mathrm{KL}\!\Big\{\mathcolor{purple}{q(\x_{t-1} \mid \x_t, \x_0)} ~\big\|~ \mathcolor{blue}{\pt(\x_{t-1} \mid \x_t)}\Big\}.
\label{eq:VB}
\end{equation}
These two Gaussians are
\begin{itemize}
    \item $\displaystyle \mathcolor{purple}{q(\x_{t-1} \mid \x_t, \x_0) = \mathcal{N} \!\bigg(\x_{t-1};~ \frac{1}{\sqrt{\alpha_t}}\Big(\x_t - \frac{\beta_t}{\sqrt{1-\bar{\alpha}_t}} \boldsymbol{\epsilon}_t\Big),~ {\frac{1-\bar{\alpha}_{t-1}}{1-\bar{\alpha}_t}\, \beta_t}\, \mathbf{I}\bigg)}$\;, \bigskip
    \item $\displaystyle \mathcolor{blue}{\pt(\x_{t-1} \mid \x_t) = \mathcal{N}\!\bigg(\x_{t-1};~ \mu_\theta(\x_t, t),~ {\sigma_t^2}\, \mathbf{I}\bigg)}$\;.
\end{itemize}
Set $\displaystyle \sigma_t^2 = \frac{1-\bar{\alpha}_{t-1}}{1-\bar{\alpha}_t}\,\beta_t \;:=\; \tilde{\beta}_t$.
 Then
\begin{align*}
    \mathrm{KL}\!\Big\{\mathcolor{purple}{q(\x_{t-1} \mid \x_t, \x_0)} ~\big\|~ \mathcolor{blue}{\pt(\x_{t-1} \mid \x_t)}\Big\} &= \frac{1}{2\sigma_t^2}\left\|\mathcolor{purple}{\frac{1}{\sqrt{1-\beta_t}}\Big(\x_t - \frac{\beta_t}{\sqrt{1-\alpha_t}} \boldsymbol{\epsilon}_t\Big)} - \mathcolor{blue}{\mu_\theta(\x_t, t)}\right\|_2^2\\[0.3em]
    & = \frac{1-\bar{\alpha}_t}{2\beta_t(1-\bar{\alpha}_{t-1})}\left\|\mathcolor{purple}{\frac{1}{\sqrt{1-\beta_t}}\Big(\x_t - \frac{\beta_t}{\sqrt{1-\alpha_t}} \boldsymbol{\epsilon}_t\Big)} - \mathcolor{blue}{\mu_\theta(\x_t, t)}\right\|_2^2\;.
\end{align*}
Using the parameterisation \cite{nichol-2021-improved}
\[
    \mathcolor{blue}{\mu_\theta(\x_t, t) = \frac{1}{\sqrt{\alpha_t}}\Big(\x_t - \frac{\beta_t}{\sqrt{1-\bar{\alpha}_t}} \hat{\boldsymbol{\epsilon}}_t\Big)}\;,
\]
we obtain the simple, weighted noise-matching objective
\begin{align*}
    \mathrm{KL}\!\Big\{\mathcolor{purple}{q(\x_{t-1} \mid \x_t, \x_0)} ~\big\|~ \mathcolor{blue}{\pt(\x_{t-1} \mid \x_t)}\Big\} ~=~ \frac{\beta_t}{2(1-\beta_t)(1-\bar{\alpha}_{t-1})}\, \big\|\mathcolor{purple}{\boldsymbol{\epsilon}_t} - \mathcolor{blue}{\hat{\boldsymbol{\epsilon}}_t}\big\|_2^2\;,
\end{align*}
and therefore
\begin{align*}
    \mathrm{VB} = \sum_{t=2}^T \frac{\beta_t}{2(1-\beta_t)(1-\bar{\alpha}_{t-1})}\, \big\|\mathcolor{purple}{\boldsymbol{\epsilon}_t} - \mathcolor{blue}{\hat{\boldsymbol{\epsilon}}_t}\big\|_2^2\;.
\end{align*}

\subsection{Computing \texorpdfstring{$p_{\theta}(\x_0)$}{p\_theta(x\_0)}}

\noindent
Write the joint probability of the complete reverse path using the Markov chain:
\begin{equation}
\label{eq:comprevpath}
p_\theta(\x_{0:T}) = p_\theta(\x_0\mid\x_1)\,p_\theta(\x_1\mid\x_2)\cdots p_\theta(\x_{T-1}\mid\x_T)\, p_\theta(\x_T)\;.
\end{equation}
Then $p_\theta(\x_0)$ is obtained by marginalising out $\x_{1:T}$:
\begin{equation}
    \label{eq:marginit}
    p_\theta(\x_0) = \int p_\theta(\x_{0:T})\,d\x_{1:T}\;.
\end{equation}
Direct computation is expensive, so we use importance sampling by multiplying by $\tfrac{q(\x_{1:T}\mid \x_0)}{q(\x_{1:T}\mid \x_0)}$:
\[
    p_\theta(\x_0) = \int p_\theta(\x_{0:T})\, \frac{q(\x_{1:T} \mid \x_0)}{q(\x_{1:T} \mid \x_0)} \,d\x_{1:T}
    = \int q(\x_{1:T} \mid \x_0)\, \frac{p_\theta(\x_{0:T})}{q(\x_{1:T} \mid \x_0)} \,d\x_{1:T}\;,
\]
which, after expanding terms, becomes
\[
    p_\theta(\x_0) = \int q(\x_{1:T} \mid \x_0)\, p_\theta(\x_T)\, \frac{\pt(\x_0\mid\x_1)\, \pt(\x_1\mid\x_2)\cdots \pt(\x_{T-1}\mid\x_T)}{q(\x_T \mid \x_{T-1}) \cdots q(\x_3 \mid \x_2)\, q(\x_2 \mid \x_1)\, q(\x_1 \mid \x_0)} \,d\x_{1:T}\;.
\]
By definition this is the expectation
\begin{equation}
    \label{eq:pxexpec}
    p_\theta(\x_0) = \mathbb{E}_{q(\x_{1:T} \mid \x_0)} \bigg[p_\theta(\x_T)\, \frac{\pt(\x_0\mid\x_1)\, \pt(\x_1\mid\x_2)\cdots \pt(\x_{T-1}\mid\x_T)}{q(\x_T \mid \x_{T-1}) \cdots q(\x_3 \mid \x_2)\, q(\x_2 \mid \x_1)\, q(\x_1 \mid \x_0)} \bigg]\;.
\end{equation}
Finally, recall the reverse-time conditionals used in practice:

\[
p_\theta(\x_{t-1} \mid \x_t) = \mathcal{N}\!\bigg(\x_{t-1}\;;\, \frac{1}{\sqrt{\alpha_t}}\Big(\x_t - \frac{\beta_t}{\sqrt{1-\bar{\alpha}_t}} \hat{\boldsymbol{\epsilon}}_t\Big),\, \frac{1-\bar{\alpha}_{t-1}}{1-\bar{\alpha}_t}\, \beta_t\, \mathbf{I}\bigg)\;.
\]

\section{Acceleration methods}

A critical drawback of DDPM models is that they require many iterations to produce a high quality sample \cite{karras-2022-edm}.

\subsection{Denoising Diffusion Implicit Models (DDIMs)}
\label{sec:DDIM}

We start with Denoising Diffusion Implicit Models (DDIMs) \cite{song-2021-ddim}. The starting point is the exact (DDPM) single–step posterior derived in the previous subsection:
\begin{equation}
q(\mathbf{x}_{t-1} \mid \mathbf{x}_t, \mathbf{x}_0)
= \mathcal{N}\!\Bigg(
\mathbf{x}_{t-1}\;;\;
\frac{1}{\sqrt{\alpha_t}}\!\left(\mathbf{x}_t - \frac{\beta_t}{\sqrt{1-\bar{\alpha}_t}}\,\boldsymbol{\epsilon}\right),
\;\sigma_t^2\,\mathbf{I}_d
\Bigg),
\qquad
\boldsymbol{\epsilon}
=
\frac{\mathbf{x}_t - \sqrt{\bar{\alpha}_t}\, \mathbf{x}_0}{\sqrt{1-\bar{\alpha}_t}},
\label{eq:ddpmreversed}
\end{equation}
where $\alpha_t=1-\beta_t$ and $\bar{\alpha}_t=\prod_{i=1}^t \alpha_i$. Setting $\sigma_t^2=\tilde{\beta}_t:=\frac{1-\bar{\alpha}_{t-1}}{1-\bar{\alpha}_t}\,\beta_t$ recovers the standard DDPM reverse conditional with its true posterior variance. Using the re-parameterisation of the forward marginal (Section~\ref{subsec:gaussian_reparam}),
\[
\mathbf{x}_t=\sqrt{\bar{\alpha}_t}\,\mathbf{x}_0+\sqrt{1-\bar{\alpha}_t}\,\boldsymbol{\epsilon}
\quad\Longrightarrow\quad
\boldsymbol{\epsilon}=\frac{\mathbf{x}_t-\sqrt{\bar{\alpha}_t}\,\mathbf{x}_0}{\sqrt{1-\bar{\alpha}_t}},
\]
substitute this $\boldsymbol{\epsilon}$ into the DDPM posterior mean in \eqref{eq:ddpmreversed}:
\begin{align*}
\hat{\boldsymbol{\mu}}_t^{\text{(DDPM)}}(\mathbf{x}_t,\mathbf{x}_0)
&= \frac{1}{\sqrt{\alpha_t}}\left(
\mathbf{x}_t - \frac{\beta_t}{\sqrt{1-\bar{\alpha}_t}}\cdot
\frac{\mathbf{x}_t-\sqrt{\bar{\alpha}_t}\,\mathbf{x}_0}{\sqrt{1-\bar{\alpha}_t}}
\right)\\
&= \frac{1}{\sqrt{\alpha_t}}\left(
\mathbf{x}_t - \frac{\beta_t}{1-\bar{\alpha}_t}\,\mathbf{x}_t
+ \frac{\beta_t\sqrt{\bar{\alpha}_t}}{1-\bar{\alpha}_t}\,\mathbf{x}_0
\right)\\
&= \underbrace{\frac{1}{\sqrt{\alpha_t}}\left(1-\frac{\beta_t}{1-\bar{\alpha}_t}\right)}_{\displaystyle c_t}\,\mathbf{x}_t
\;+\;
\underbrace{\frac{1}{\sqrt{\alpha_t}}\cdot \frac{\beta_t\sqrt{\bar{\alpha}_t}}{1-\bar{\alpha}_t}}_{\displaystyle d_t}\,\mathbf{x}_0.
\end{align*}
Now rewrite the coefficients $c_t$ and $d_t$ in terms of $(\alpha_t,\bar{\alpha}_{t-1},\bar{\alpha}_t)$. Since $\bar{\alpha}_t=\alpha_t\bar{\alpha}_{t-1}$,
\[
1-\bar{\alpha}_t
=
1-\alpha_t\bar{\alpha}_{t-1}
=
(1-\bar{\alpha}_{t-1})+\bar{\alpha}_{t-1}(1-\alpha_t)
=
(1-\bar{\alpha}_{t-1})+\bar{\alpha}_{t-1}\beta_t.
\]
Hence
\[
1-\frac{\beta_t}{1-\bar{\alpha}_t}
= \frac{1-\bar{\alpha}_t-\beta_t}{1-\bar{\alpha}_t}
= \frac{\alpha_t(1-\bar{\alpha}_{t-1})}{1-\bar{\alpha}_t},
\quad\text{so}\quad
c_t=\frac{\sqrt{\alpha_t}\,(1-\bar{\alpha}_{t-1})}{1-\bar{\alpha}_t},
\quad
d_t=\frac{\sqrt{\bar{\alpha}_{t-1}}\,\beta_t}{1-\bar{\alpha}_t}.
\]
This matches the “closed form” we derived earlier from completing the square:
\[
\hat{\boldsymbol{\mu}}_t^{\text{(DDPM)}}(\mathbf{x}_t,\mathbf{x}_0)
=
\frac{\sqrt{\alpha_t}(1-\bar{\alpha}_{t-1})}{1-\bar{\alpha}_t}\,\mathbf{x}_t
+
\frac{\sqrt{\bar{\alpha}_{t-1}}\beta_t}{1-\bar{\alpha}_t}\,\mathbf{x}_0.
\]

\medskip

DDIMs define an \emph{alternative} set of reverse conditionals that preserve the same single–step marginals $q(\mathbf{x}_t\mid \mathbf{x}_0)$ as DDPM, but allow a controlled per–step variance $\sigma_t^2$:
\begin{align}
\label{eq:ddimclosed}
q(\mathbf{x}_{t-1} \mid \mathbf{x}_t, \mathbf{x}_0)
&=
\mathcal{N}\!\Bigg(
\mathbf{x}_{t-1}\;;\;
\underbrace{\sqrt{\bar{\alpha}_{t-1}}\,\mathbf{x}_0
+
\sqrt{\,1-\bar{\alpha}_{t-1}-\sigma_t^2\,}\;
\frac{\mathbf{x}_t-\sqrt{\bar{\alpha}_t}\,\mathbf{x}_0}{\sqrt{1-\bar{\alpha}_t}}}_{\text{mean}},
\;\underbrace{\sigma_t^2}_{\text{variance}}\mathbf{I}_d\Bigg),\\
q(\mathbf{x}_T \mid \mathbf{x}_0)
&= \mathcal{N}\!\big(\mathbf{x}_T;\; \sqrt{\bar{\alpha}_T}\, \mathbf{x}_0,\; (1-\bar{\alpha}_T)\, \mathbf{I}_d\big).\nonumber
\end{align}
The mean in \eqref{eq:ddimclosed} is a convex combination (in a geometric, $\bar{\alpha}$–aware sense) of the two endpoints of the straight line
\[
\mathbf{x}_t
=
\sqrt{\bar{\alpha}_t}\,\mathbf{x}_0
+
\sqrt{1-\bar{\alpha}_t}\,\boldsymbol{\epsilon}.
\]
It keeps the part aligned with $\mathbf{x}_0$ as $\sqrt{\bar{\alpha}_{t-1}}\,\mathbf{x}_0$, and rescales the orthogonal (noise) component
\(
(\mathbf{x}_t-\sqrt{\bar{\alpha}_t}\,\mathbf{x}_0)/\sqrt{1-\bar{\alpha}_t}
\)
by a factor $\sqrt{\,1-\bar{\alpha}_{t-1}-\sigma_t^2\,}$ to match the target variance at time $t-1$.

\medskip
If we choose $\sigma_t^2=\tilde{\beta}_t:=\frac{1-\bar{\alpha}_{t-1}}{1-\bar{\alpha}_t}\,\beta_t$, then
\[
1-\bar{\alpha}_{t-1}-\sigma_t^2
=
(1-\bar{\alpha}_{t-1})\left(1-\frac{\beta_t}{1-\bar{\alpha}_t}\right)
=
(1-\bar{\alpha}_{t-1})\cdot \frac{\alpha_t(1-\bar{\alpha}_{t-1})}{1-\bar{\alpha}_t}
=
\frac{\alpha_t(1-\bar{\alpha}_{t-1})^2}{1-\bar{\alpha}_t}.
\]
Therefore
\[
\sqrt{\frac{\,1-\bar{\alpha}_{t-1}-\sigma_t^2\,}{\,1-\bar{\alpha}_t\,}}
=
\frac{\sqrt{\alpha_t}\,(1-\bar{\alpha}_{t-1})}{1-\bar{\alpha}_t},
\]
and the mean in \eqref{eq:ddimclosed} becomes
\[
\sqrt{\bar{\alpha}_{t-1}}\,\mathbf{x}_0
+
\frac{\sqrt{\alpha_t}(1-\bar{\alpha}_{t-1})}{1-\bar{\alpha}_t}\,
\big(\mathbf{x}_t-\sqrt{\bar{\alpha}_t}\,\mathbf{x}_0\big)
=
\frac{\sqrt{\alpha_t}(1-\bar{\alpha}_{t-1})}{1-\bar{\alpha}_t}\,\mathbf{x}_t
+
\frac{\sqrt{\bar{\alpha}_{t-1}}\beta_t}{1-\bar{\alpha}_t}\,\mathbf{x}_0,
\]
which is exactly the DDPM posterior mean shown above. Hence DDPM is the $\sigma_t^2=\tilde{\beta}_t$ member of the DDIM family.

\medskip
\begin{proposition}[DDIM preserves the DDPM marginals]
\label{prop:ddim_marginal_preserve}
For any choice of $\sigma_t^2\in[0,\,1-\bar{\alpha}_{t-1}]$ in \eqref{eq:ddimclosed}, the one–step marginal remains
\begin{equation}
\label{eq:lemma1}
q(\mathbf{x}_t \mid \mathbf{x}_0)
=
\mathcal{N}\!\big(\mathbf{x}_t;\; \sqrt{\bar{\alpha}_t}\, \mathbf{x}_0,\; (1-\bar{\alpha}_t)\, \mathbf{I}_d\big)\quad
\text{for every } t.
\end{equation}
\end{proposition}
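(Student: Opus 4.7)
The plan is a backward induction on $t$. The base case $t = T$ is exactly the initial condition stated in \eqref{eq:ddimclosed}, namely $q(\mathbf{x}_T \mid \mathbf{x}_0) = \mathcal{N}(\sqrt{\bar{\alpha}_T}\,\mathbf{x}_0,(1-\bar{\alpha}_T)\mathbf{I}_d)$. For the inductive step, assume the claimed Gaussian form at time $t$; I need to show that marginalising $\mathbf{x}_t$ out of $q(\mathbf{x}_{t-1}\mid\mathbf{x}_t,\mathbf{x}_0)\,q(\mathbf{x}_t\mid\mathbf{x}_0)$ reproduces it at time $t-1$. Rather than evaluating the Gaussian integral directly, I would use the reparameterisation identity of Corollary~\ref{cor:isotropic_easy}, which turns the marginalisation into a straightforward sum of independent Gaussians.

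Concretely, the inductive hypothesis lets me represent $\mathbf{x}_t = \sqrt{\bar{\alpha}_t}\,\mathbf{x}_0 + \sqrt{1-\bar{\alpha}_t}\,\boldsymbol{\epsilon}$ with $\boldsymbol{\epsilon}\sim\mathcal{N}(\mathbf{0},\mathbf{I})$, while the DDIM conditional \eqref{eq:ddimclosed} lets me sample
\[
\mathbf{x}_{t-1} = \sqrt{\bar{\alpha}_{t-1}}\,\mathbf{x}_0 + \sqrt{\,1-\bar{\alpha}_{t-1}-\sigma_t^2\,}\,\frac{\mathbf{x}_t-\sqrt{\bar{\alpha}_t}\,\mathbf{x}_0}{\sqrt{1-\bar{\alpha}_t}} + \sigma_t\,\boldsymbol{\eta},
\]
with a fresh $\boldsymbol{\eta}\sim\mathcal{N}(\mathbf{0},\mathbf{I})$ independent of $\boldsymbol{\epsilon}$. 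Substituting the reparameterised expression for $\mathbf{x}_t$ makes the normalising factor $\sqrt{1-\bar{\alpha}_t}$ cancel cleanly, leaving
\[
\mathbf{x}_{t-1} = \sqrt{\bar{\alpha}_{t-1}}\,\mathbf{x}_0 + \sqrt{\,1-\bar{\alpha}_{t-1}-\sigma_t^2\,}\,\boldsymbol{\epsilon} + \sigma_t\,\boldsymbol{\eta}.
\]
Since $\boldsymbol{\epsilon}$ and $\boldsymbol{\eta}$ are independent standard Gaussians, the two noise contributions combine into a single centred isotropic Gaussian whose variance is $(1-\bar{\alpha}_{t-1}-\sigma_t^2) + \sigma_t^2 = 1-\bar{\alpha}_{t-1}$, which delivers exactly the Gaussian form claimed for $q(\mathbf{x}_{t-1}\mid\mathbf{x}_0)$.

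The only subtlety is well-definedness of the first coefficient: the constraint $\sigma_t^2 \le 1-\bar{\alpha}_{t-1}$ is precisely what keeps the radicand non-negative, and the lower bound $\sigma_t^2 \ge 0$ is just the requirement that $\sigma_t^2$ be a variance. There is no real obstacle here --- once one sees that the two coefficients inside the DDIM mean are engineered specifically so that the induced variances add up to $1-\bar{\alpha}_{t-1}$, the proof reduces to a single line of variance bookkeeping after substitution, with no need to invoke the Gaussian product formula or complete a square.
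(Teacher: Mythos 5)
Your proof is correct, and it follows the same backward induction as the paper: identical base case at $t=T$, and the same key observation that the DDIM coefficients are engineered so the variances add to $1-\bar{\alpha}_{t-1}$. The only difference is in how the inductive step is executed. The paper marginalises $\mathbf{x}_t$ by computing the mean via iterated expectation and the covariance via the law of total variance, asserting that the convolution of the two Gaussian factors is Gaussian. You instead substitute the reparameterised form $\mathbf{x}_t=\sqrt{\bar{\alpha}_t}\,\mathbf{x}_0+\sqrt{1-\bar{\alpha}_t}\,\boldsymbol{\epsilon}$ directly into the DDIM mean, so that $\mathbf{x}_{t-1}$ is written explicitly as $\sqrt{\bar{\alpha}_{t-1}}\,\mathbf{x}_0+\sqrt{1-\bar{\alpha}_{t-1}-\sigma_t^2}\,\boldsymbol{\epsilon}+\sigma_t\,\boldsymbol{\eta}$ with $\boldsymbol{\epsilon}\perp\boldsymbol{\eta}$; this makes both the Gaussianity of the resulting marginal and the variance bookkeeping immediate, without appealing to the law of total variance or to any Gaussian-integral identity. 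The two arguments are mathematically equivalent (your sum-of-independent-noises step is exactly the paper's total-variance decomposition in disguise), but your version is slightly more self-contained in that Gaussianity is manifest rather than asserted. One small point worth making explicit if you write it up: $\boldsymbol{\eta}$ must be drawn independently of $\boldsymbol{\epsilon}$ conditionally on $\mathbf{x}_0$, which holds because it is the fresh noise of the conditional kernel $q(\mathbf{x}_{t-1}\mid\mathbf{x}_t,\mathbf{x}_0)$ — you state this, so the argument is complete.
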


\begin{proof}
We argue by backward induction on $t$. The base case $t=T$ is the second line of \eqref{eq:ddimclosed}. Assume the claim holds at time $t$. Consider the marginal at $t-1$:
\[
q(\mathbf{x}_{t-1}\mid \mathbf{x}_0)
=
\int q(\mathbf{x}_{t-1}\mid \mathbf{x}_t,\mathbf{x}_0)\;q(\mathbf{x}_t\mid \mathbf{x}_0)\,d\mathbf{x}_t,
\]
which is a Gaussian integral because both factors are Gaussian. Its mean is
\[
\boldsymbol{\mu}_{t-1}
=
\mathbb{E}_{q(\mathbf{x}_t\mid \mathbf{x}_0)}
\!\left[
\mathbb{E}_{q(\mathbf{x}_{t-1}\mid \mathbf{x}_t,\mathbf{x}_0)}[\mathbf{x}_{t-1}]
\right]
=
\mathbb{E}\!\left[
\sqrt{\bar{\alpha}_{t-1}}\,\mathbf{x}_0
+
\sqrt{1-\bar{\alpha}_{t-1}-\sigma_t^2}\;
\frac{\mathbf{x}_t-\sqrt{\bar{\alpha}_t}\,\mathbf{x}_0}{\sqrt{1-\bar{\alpha}_t}}
\right].
\]
Since $\mathbb{E}_{q(\mathbf{x}_t\mid \mathbf{x}_0)}[\mathbf{x}_t]=\sqrt{\bar{\alpha}_t}\,\mathbf{x}_0$ by the induction hypothesis, the second term vanishes in expectation, and we obtain
\(
\boldsymbol{\mu}_{t-1}=\sqrt{\bar{\alpha}_{t-1}}\,\mathbf{x}_0.
\)
For the covariance, use the law of total variance:
\[
\mathrm{Var}[\mathbf{x}_{t-1}\mid \mathbf{x}_0]
=
\underbrace{\mathbb{E}\big[\mathrm{Var}(\mathbf{x}_{t-1}\mid \mathbf{x}_t,\mathbf{x}_0)\big]}_{\displaystyle \sigma_t^2\,\mathbf{I}_d}
\;+\;
\underbrace{\mathrm{Var}\!\big(\mathbb{E}[\mathbf{x}_{t-1}\mid \mathbf{x}_t,\mathbf{x}_0]\big)}_{\displaystyle \big(\sqrt{1-\bar{\alpha}_{t-1}-\sigma_t^2}/\sqrt{1-\bar{\alpha}_t}\big)^{\!2}\ \mathrm{Var}(\mathbf{x}_t\mid \mathbf{x}_0)}.
\]
By the induction hypothesis, $\mathrm{Var}(\mathbf{x}_t\mid \mathbf{x}_0)=(1-\bar{\alpha}_t)\mathbf{I}_d$, so
\[
\mathrm{Var}[\mathbf{x}_{t-1}\mid \mathbf{x}_0]
=
\sigma_t^2\,\mathbf{I}_d
+
(1-\bar{\alpha}_{t-1}-\sigma_t^2)\,\mathbf{I}_d
=
(1-\bar{\alpha}_{t-1})\,\mathbf{I}_d.
\]
Hence $q(\mathbf{x}_{t-1}\mid \mathbf{x}_0)=\mathcal{N}\big(\sqrt{\bar{\alpha}_{t-1}}\mathbf{x}_0,\,(1-\bar{\alpha}_{t-1})\mathbf{I}_d\big)$, completing the induction.
\end{proof}

\medskip

During sampling we approximate $q(\mathbf{x}_{t-1}\mid \mathbf{x}_t,\mathbf{x}_0)$ with the model’s reverse kernel. Equation~\eqref{eq:ddimclosed} makes explicit that the degree of stochasticity is governed by $\sigma_t^2$:
\begin{itemize}
\item \emph{Deterministic ($\sigma_t^2=0$).} The update collapses to the mean:
\[
\mathbf{x}_{t-1}
=
\sqrt{\bar{\alpha}_{t-1}}\,\mathbf{x}_0
+
\sqrt{\frac{1-\bar{\alpha}_{t-1}}{1-\bar{\alpha}_t}}\,\big(\mathbf{x}_t-\sqrt{\bar{\alpha}_t}\,\mathbf{x}_0\big).
\]
Because $\mathbf{x}_0$ is unknown at sampling time, we use the standard estimator
\[
\hat{\mathbf{x}}_0(\mathbf{x}_t,t)
=
\frac{\mathbf{x}_t-\sqrt{1-\bar{\alpha}_t}\,\boldsymbol{\epsilon}_\theta(\mathbf{x}_t,t)}{\sqrt{\bar{\alpha}_t}},
\]
where $\boldsymbol{\epsilon}_\theta$ is the learned noise predictor. Substituting $\hat{\mathbf{x}}_0$ yields a purely deterministic sampler (the common “DDIM $\eta=0$” update).
\item \emph{Stochastic ($\sigma_t^2>0$).} We draw
\(
\mathbf{x}_{t-1}
=
\text{mean in \eqref{eq:ddimclosed}}
+
\sigma_t\,\mathbf{z},\ \mathbf{z}\sim\mathcal{N}(\mathbf{0},\mathbf{I}_d).
\)
Taking $\sigma_t^2=\tilde{\beta}_t$ gives back DDPM; smaller $\sigma_t^2$ tends to sharpen results while retaining some diversity.
\end{itemize}

\medskip

A practical benefit of DDIM is fast sampling on a reduced time grid. Let
\(
T=\tau_1>\tau_2>\cdots>\tau_S=1
\)
be a subset of indices. We apply \eqref{eq:ddimclosed} only on these steps:
\[
\mathbf{x}_{\tau_{k-1}}
\sim
q\!\left(\mathbf{x}_{\tau_{k-1}}\,\middle|\,\mathbf{x}_{\tau_k},\mathbf{x}_0\right),
\qquad k=S,\dots,2,
\]
using the same trained $\boldsymbol{\epsilon}_\theta$ and the $\hat{\mathbf{x}}_0$ estimator above. Proposition~\ref{prop:ddim_marginal_preserve} guarantees that $q(\mathbf{x}_{\tau_k}\mid \mathbf{x}_0)$ keeps the correct marginal
\(
\mathcal{N}\big(\sqrt{\bar{\alpha}_{\tau_k}}\mathbf{x}_0,\,(1-\bar{\alpha}_{\tau_k})\mathbf{I}_d\big)
\),
so the reduced chain remains consistent while enabling substantially fewer steps.

\subsection{DDGAN: Adversarially Learned Reverse Dynamics}
\label{subsec:ddgan}
\noindent
We describe an alternative training strategy that learns the reverse transition directly so that generation can proceed in far fewer iterations. We keep the forward process and notation from the Preliminaries: $\beta_t$, $\alpha_t=1-\beta_t$, $\bar{\alpha}_t=\prod_{s\le t}\alpha_s$, and
\[
q(\mathbf{x}_t\mid \mathbf{x}_0)=\mathcal{N}\!\big(\sqrt{\bar{\alpha}_t}\,\mathbf{x}_0,\ (1-\bar{\alpha}_t)\,\mathbf{I}_d\big),
\qquad
\mathbf{x}_t=\sqrt{\bar{\alpha}_t}\,\mathbf{x}_0+\sqrt{1-\bar{\alpha}_t}\,\boldsymbol{\varepsilon}.
\]

For each $t\in\{1,\dots,T\}$ we consider the distribution of consecutive latents under the forward chain,
\[
q_t^{\mathrm{pair}}(\mathbf{x}_{t-1},\mathbf{x}_t)
:= q(\mathbf{x}_t)\,q(\mathbf{x}_{t-1}\mid \mathbf{x}_t),
\qquad
q(\mathbf{x}_t)=\!\int q(\mathbf{x}_t\mid \mathbf{x}_0)\,p_{\mathrm{data}}(\mathbf{x}_0)\,d\mathbf{x}_0.
\]
A time–conditioned generator $G_\theta$ proposes a reverse step $\hat{\mathbf{x}}_{t-1}=G_\theta(\mathbf{x}_t,t,\mathbf{z})$ with $\mathbf{z}\sim\mathcal{N}(\mathbf{0},\mathbf{I}_d)$, while a discriminator $D_\phi(\,\cdot\,,\,\cdot\,,t)$ distinguishes real pairs $(\mathbf{x}_{t-1},\mathbf{x}_t)\sim q_t^{\mathrm{pair}}$ from generated pairs $(\hat{\mathbf{x}}_{t-1},\mathbf{x}_t)$. Parameters are typically shared across $t$ with explicit time embeddings.

\begin{definition}[Adversarial objective at discrete times]
\label{def:ddgan}
A simple logistic formulation trains $G_\theta$ and $D_\phi$ by
\[
\min_{\theta}\ \max_{\phi}\ \sum_{t=1}^{T}\Big\{
\mathbb{E}_{(\mathbf{x}_{t-1},\mathbf{x}_t)\sim q_t^{\mathrm{pair}}}\!\big[\log D_\phi(\mathbf{x}_{t-1},\mathbf{x}_t,t)\big]
+
\mathbb{E}_{\mathbf{x}_t\sim q(\mathbf{x}_t),\,\mathbf{z}\sim\mathcal{N}}\!
\big[\log\!\big(1-D_\phi(G_\theta(\mathbf{x}_t,t,\mathbf{z}),\mathbf{x}_t,t)\big)\big]
\Big\}.
\]
Other discriminator losses may be used; the conditioning on $t$ is unchanged.
\end{definition}

\begin{proposition}[Consistency of the learned reverse transition]
\label{prop:ddgan-consistency}
Fix any $t\in\{1,\dots,T\}$. Under the objective in Definition~\ref{def:ddgan}, if the discriminator at time $t$ is optimal, then the induced generator update moves the model joint distribution
\[
p_t^\theta(\mathbf{x}_{t-1},\mathbf{x}_t)\ :=\ q(\mathbf{x}_t)\,p_\theta(\mathbf{x}_{t-1}\mid \mathbf{x}_t)
\]
towards the real joint distribution
\[
q_t^{\mathrm{pair}}(\mathbf{x}_{t-1},\mathbf{x}_t)\ :=\ q(\mathbf{x}_t)\,q(\mathbf{x}_{t-1}\mid \mathbf{x}_t).
\]
In particular, the stationary point of the game at time $t$ is attained when $p_t^\theta=q_t^{\mathrm{pair}}$, i.e., the learned reverse transition reproduces the distribution of forward pairs at that time index.
\end{proposition}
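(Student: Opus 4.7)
The plan is to adapt the classical Goodfellow GAN optimality argument per time slice, treating the $t$-th summand in Definition~\ref{def:ddgan} as a conditional GAN on the joint $(\mathbf{x}_{t-1},\mathbf{x}_t)$-space. Because the sum over $t$ decouples in $\phi$ and the inner $\max_\phi$ can be solved independently for each summand, I fix an arbitrary $t$ and analyse only that term. The real pair is drawn from $q_t^{\mathrm{pair}}=q(\mathbf{x}_t)\,q(\mathbf{x}_{t-1}\mid\mathbf{x}_t)$; the generator draws $\mathbf{x}_t\sim q(\mathbf{x}_t)$ and then produces $\hat{\mathbf{x}}_{t-1}=G_\theta(\mathbf{x}_t,t,\mathbf{z})$, so marginalising the latent $\mathbf{z}$ gives the fake-pair density $p_t^\theta=q(\mathbf{x}_t)\,p_\theta(\mathbf{x}_{t-1}\mid\mathbf{x}_t)$, and the $t$-th summand rewrites as a single integral
\[
V_t(G_\theta,D_\phi)=\iint\Big[\,q_t^{\mathrm{pair}}\log D_\phi+p_t^\theta\log(1-D_\phi)\,\Big]\,d\mathbf{x}_{t-1}\,d\mathbf{x}_t.
\]

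For fixed $G_\theta$ I then maximise $V_t$ pointwise in the discriminator: the scalar map $y\mapsto a\log y+b\log(1-y)$ attains its maximum on $(0,1)$ at $y^\star=a/(a+b)$, yielding the Bayes-optimal discriminator
\[
D_\phi^\star(\mathbf{x}_{t-1},\mathbf{x}_t,t)=\frac{q_t^{\mathrm{pair}}(\mathbf{x}_{t-1},\mathbf{x}_t)}{q_t^{\mathrm{pair}}(\mathbf{x}_{t-1},\mathbf{x}_t)+p_t^\theta(\mathbf{x}_{t-1},\mathbf{x}_t)}.
\]
Substituting $D_\phi^\star$ back and invoking the standard Jensen--Shannon identity collapses the expression to
\[
V_t(G_\theta,D_\phi^\star)=2\,\mathrm{JSD}\!\big(q_t^{\mathrm{pair}}\,\|\,p_t^\theta\big)-2\log 2.
\]
Since $\mathrm{JSD}\ge 0$ and vanishes only when its two arguments agree almost everywhere, the outer $\min_\theta$ drives $p_t^\theta$ towards $q_t^{\mathrm{pair}}$, attaining its global minimum $-2\log 2$ exactly at $p_t^\theta=q_t^{\mathrm{pair}}$. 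Cancelling the shared marginal $q(\mathbf{x}_t)$ on both sides reduces this to $p_\theta(\mathbf{x}_{t-1}\mid\mathbf{x}_t)=q(\mathbf{x}_{t-1}\mid\mathbf{x}_t)$ for $q$-a.e.~$\mathbf{x}_t$, which is the stated claim.

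The main obstacle is not any hard calculation but the \emph{framing}: the fake branch of the objective draws $\mathbf{x}_t$ from the \emph{true} forward marginal $q(\mathbf{x}_t)$ rather than propagating through $p_\theta$ from step $T$ down to step $t$. This is precisely what lets the game reduce, time-slice by time-slice, to a conditional matching of $p_\theta(\cdot\mid\mathbf{x}_t)$ against $q(\cdot\mid\mathbf{x}_t)$---the very statement of the proposition---rather than to a joint trajectory-matching problem over $\mathbf{x}_{0:T}$. I would also briefly note (without grinding) that the pointwise optimisation for $D_\phi^\star$ is rigorous provided $q_t^{\mathrm{pair}}$ and $p_t^\theta$ share a common dominating measure, which holds by construction since the forward noising is Gaussian and the generator inherits absolute continuity from its $\mathbf{z}\sim\mathcal{N}(\mathbf{0},\mathbf{I}_d)$ latent.
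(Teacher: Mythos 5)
Your argument is correct and follows essentially the same route as the paper: derive the Bayes-optimal discriminator $D^\star=q_t^{\mathrm{pair}}/(q_t^{\mathrm{pair}}+p_t^\theta)$, substitute it back to obtain $-\log 4$ plus the two KL-to-mixture terms (your $2\,\mathrm{JSD}$ form is the same quantity), and conclude that the minimum is attained exactly when $p_t^\theta=q_t^{\mathrm{pair}}$. Your extra steps (the explicit pointwise maximisation of $a\log y+b\log(1-y)$ and cancelling the shared marginal $q(\mathbf{x}_t)$ to read off conditional matching) are fine refinements of what the paper states more tersely.
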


\begin{proof}
With the logistic loss, the optimal discriminator at time $t$ is
\[
D_t^\star(\mathbf{x}_{t-1},\mathbf{x}_t)=
\frac{q_t^{\mathrm{pair}}(\mathbf{x}_{t-1},\mathbf{x}_t)}
     {q_t^{\mathrm{pair}}(\mathbf{x}_{t-1},\mathbf{x}_t)+p_t^\theta(\mathbf{x}_{t-1},\mathbf{x}_t)}.
\]
Substituting $D_t^\star$ into the per-$t$ objective yields
\[
\mathcal{L}_t(\theta)
= -\log 4\ +\ \mathrm{KL}\!\left(q_t^{\mathrm{pair}}\middle\|\tfrac{q_t^{\mathrm{pair}}+p_t^\theta}{2}\right)
+ \mathrm{KL}\!\left(p_t^\theta\middle\|\tfrac{q_t^{\mathrm{pair}}+p_t^\theta}{2}\right),
\]
after introducing $m=(q_t^{\mathrm{pair}}+p_t^\theta)/2$ and expanding elementary logarithms. Each KL term is nonnegative and vanishes if and only if its arguments coincide, hence $\mathcal{L}_t(\theta)\ge -\log 4$ with equality precisely when $p_t^\theta=q_t^{\mathrm{pair}}$. The generator update therefore decreases the discrepancy until the two joint distributions agree.
\end{proof}

\noindent
Sampling starts from $\mathbf{x}_T\sim\mathcal{N}(\mathbf{0},\mathbf{I}_d)$ and iterates $\mathbf{x}_{t-1}\leftarrow G_\theta(\mathbf{x}_t,t,\mathbf{z}_t)$ for a small number of steps. In practice one trains on a coarse grid $\{t_k\}_{k=K}^0$ with $K\ll T$ so that the learned reverse map supports large jumps; the same grid is then used at inference. It is common to output a residual relative to $\mathbf{x}_t$ (or in an equivalent parameterisation) rather than $\hat{\mathbf{x}}_{t-1}$ directly. Mild regularisation of $D_\phi$ improves stability, and small–noise consistency terms that encourage agreement with the Gaussian reverse mean used earlier may be included without changing the adversarial nature of training.

This approach qualifies as an acceleration method in the practical sense of wall-clock sampling (fewer steps), while it modifies the training criterion rather than the update rule of a likelihood-trained model. Conditioning for guidance typically enters through the inputs of $G_\theta$ (and, if desired, $D_\phi$).

\subsection{Nested Diffusion Models}
\label{subsec:nested_diffusion_models}

Nested diffusion models extend a single diffusion process by composing \emph{multiple}, hierarchically structured diffusion chains. Each level refines or augments a representation from the previous level, and together they define a joint generative model. We develop the construction from forward noising to reverse-time generation.

Let $\x_0\in\mathcal{X}$ be a data sample with density $p_{\text{data}}(\x_0)$. A single diffusion uses
\[
    q(\x_{1:T}\mid\x_0)=\prod_{t=1}^T q(\x_t\mid\x_{t-1}),\qquad
    p_\theta(\x_{0:T})=p(\x_T)\prod_{t=1}^T p_\theta(\x_{t-1}\mid\x_t),
\]
with Gaussian forward steps and a learned reverse model. Nested diffusion introduces $K$ levels, each with its own chain in a representation space, and links the levels through deterministic or learned transforms. For illustration, consider
\[
    \bigl(\x_0,\,\x_1,\,\ldots,\,\x_T^1\bigr)
    \;\rightarrow\;
    \bigl(\z_0,\,\z_1,\,\ldots,\,\z_T^2\bigr)
    \;\rightarrow\;\cdots\;\rightarrow\;
    \bigl(\mathbf{u}_0,\,\mathbf{u}_1,\,\ldots,\,\mathbf{u}_T^K\bigr),
\]
where $\x_0$ is the original data and $\mathbf{u}_T^K$ is the deepest noise state, with intermediate chains obtained, for example, by deterministic mappings
\[
    \z_0=f_1(\x_0),\qquad
    \mathbf{u}_0=f_2(\z_0),\qquad\ldots
\]
(e.g., downsampling, encoders, or other transforms). For level $k\in\{1,\ldots,K\}$, denote the state at step $t$ by $\y_t^k$ (so $\y_t^1\!=\!\x_t$, $\y_t^2\!=\!\z_t$, etc.). Each level has a forward process
\[
    q_k(\y_t^k\mid \y_{t-1}^k)
    \;=\;
    \mathcal{N}\!\Big(\y_t^k;\;\sqrt{\alpha_t^{(k)}}\,\y_{t-1}^k,\;\big(1-\alpha_t^{(k)}\big)\mathbf{I}\Big),
\quad
\alpha_t^{(k)}=1-\beta_t^{(k)},\quad \bar{\alpha}_t^{(k)}=\prod_{i=1}^t \alpha_i^{(k)}.
\]
The nested link is enforced via an initialisation rule such as $\y_0^k=g_k(\Y^{k-1})$ or $\y_0^k=g_k(\y_0^{k-1})$, where $\Y^{k-1}=(\y_0^{k-1},\ldots,\y_T^{k-1})$. 
\medskip

Let $\Y^k=(\y_0^k,\ldots,\y_T^k)$. The joint forward distribution factorises as
\[
    q(\Y^1,\ldots,\Y^K)
    \;=\;
    q_1(\Y^1)\,\prod_{k=2}^{K} q_k(\Y^k\mid \Y^{k-1}),
\]
with
\[
    q_1(\Y^1)=p_{\text{data}}(\x_0)\,\prod_{t=1}^T q_1(\x_t\mid \x_{t-1}),\qquad
    q_k(\Y^k\mid \Y^{k-1})
    = \delta\!\big(\y_0^k - g_k(\Y^{k-1})\big)\;\prod_{t=1}^T q_k(\y_t^k\mid \y_{t-1}^k).
\]
Here $\delta(\cdot)$ enforces the deterministic initialisation of level $k$ from level $k\!-\!1$. As in the single-level case, each level admits the closed-form marginal
\begin{equation}
\label{eq:nested_marginal}
    q_k(\y_t^k \mid \y_0^k)
    =
    \mathcal{N}\!\Big(\y_t^k;\;\sqrt{\bar{\alpha}_t^{(k)}}\,\y_0^k,\;\big(1-\bar{\alpha}_t^{(k)}\big)\mathbf{I}\Big),
\end{equation}
and the DDPM posterior
\begin{equation}
\label{eq:nested_posterior}
    q_k(\y_{t-1}^k \mid \y_t^k,\y_0^k)
    =
    \mathcal{N}\!\Bigg(
        \y_{t-1}^k;\;
        \frac{1}{\sqrt{\alpha_t^{(k)}}}\!\left(\y_t^k - \frac{\beta_t^{(k)}}{\sqrt{1-\bar{\alpha}_t^{(k)}}}\,\boldsymbol{\epsilon}_t^{(k)}\right),
        \;\tilde{\beta}_t^{(k)}\,\mathbf{I}
    \Bigg),
    \qquad
    \tilde{\beta}_t^{(k)} := \frac{1-\bar{\alpha}_{t-1}^{(k)}}{1-\bar{\alpha}_t^{(k)}}\,\beta_t^{(k)}.
\end{equation}

A nested reverse model generates jointly across levels. A common coarse-to-fine factorisation is
\begin{equation}
\label{eq:nested_reverse}
    p_\theta(\Y^1,\ldots,\Y^K)
    \;=\;
    \underbrace{p(\y_T^K)\prod_{t=1}^T p_\theta(\y_{t-1}^K \mid \y_t^K, K, t)}_{\text{deepest level}}
    \;\prod_{k=K-1}^{1}\;
    \underbrace{p_\theta(\y_0^k \mid \y_0^{k+1}, k)\;\prod_{t=1}^T p_\theta(\y_{t-1}^k \mid \y_t^k, \y_0^{k+1}, k, t)}_{\text{level }k\text{ conditioned on level }k\!+\!1},
\end{equation}
with $p(\y_T^K)=\mathcal{N}(\mathbf{0},\mathbf{I})$. The cross-level term $p_\theta(\y_0^k \mid \y_0^{k+1}, k)$ plays the role of an upsampler/decoder (deterministic or learned). The per-level reverse transitions are Gaussian, parameterised as in earlier sections via noise prediction:
\begin{align}
\label{eq:nested_mu_theta}
    p_\theta(\y_{t-1}^k \mid \y_t^k, \cdot)
    &= \mathcal{N}\!\Big(\y_{t-1}^k;\; \mu_\theta^{(k)}(\y_t^k,\cdot),\; \sigma_{t,k}^2\,\mathbf{I}\Big),\\
    \mu_\theta^{(k)}(\y_t^k,\cdot)
    &= \frac{1}{\sqrt{\alpha_t^{(k)}}}\!\left(\y_t^k - \frac{\beta_t^{(k)}}{\sqrt{1-\bar{\alpha}_t^{(k)}}}\;\hat{\boldsymbol{\epsilon}}^{(k)}_\theta(\y_t^k, t, k, \y_0^{k+1})\right),\nonumber
\end{align}
with $\sigma_{t,k}^2=\tilde{\beta}_t^{(k)}$ to recover the DDPM case, or other choices (e.g., DDIM-style) for accelerated samplers.

A typical generation path is:
\begin{enumerate}
    \item Sample $\y_T^K\sim\mathcal{N}(\mathbf{0},\mathbf{I})$; run reverse diffusion at level $K$ to obtain $\y_0^K$.
    \item Produce the next level’s condition via $\y_0^{K-1} \sim p_\theta(\y_0^{K-1}\mid \y_0^K,K\!-\!1)$ (or $\y_0^{K-1}=g_{K-1}^{-1}(\y_0^K)$ if deterministic).
    \item Initialise level $K\!-\!1$ at noise, $\y_T^{K-1}\sim\mathcal{N}(\mathbf{0},\mathbf{I})$, and run reverse diffusion with transitions conditioned on $\y_0^{K}$ to obtain $\y_0^{K-1}$.
    \item Repeat down to level $1$ to obtain $\x_0=\y_0^1$.
\end{enumerate}
Other orderings (fine-to-coarse) are possible; the key design choice is where the deterministic/learned cross-level mapping is placed (at the start or the end of each chain).

Training proceeds by minimising a variational bound on the negative log-likelihood:
\[
    \mathcal{L}
    \;=\;
    \mathrm{KL}\!\big(q(\Y^1,\ldots,\Y^K)\,\big\Vert\, p_\theta(\Y^1,\ldots,\Y^K)\big).
\]
As in the single-level case, this decomposes into sums of per-step KLs at each level, yielding practical mean-squared noise-matching terms:
\begin{equation}
\label{eq:nested_training}
    \mathcal{L}
    \;\cong\;
    \sum_{k=1}^{K}\sum_{t=2}^{T}
    w_{t,k}\,
    \mathbb{E}\!\left[
        \big\|
            \boldsymbol{\epsilon}^{(k)}_t
            - \hat{\boldsymbol{\epsilon}}^{(k)}_\theta(\y_t^k, t, k, \text{cond}_{k})
        \big\|_2^2
    \right],
    \qquad
    w_{t,k}=\frac{\beta_t^{(k)}}{2\big(1-\beta_t^{(k)}\big)\big(1-\bar{\alpha}_{t-1}^{(k)}\big)}.
\end{equation}
Here $\boldsymbol{\epsilon}^{(k)}_t$ is the true Gaussian noise injected at level $k$, and $\text{cond}_{k}$ denotes any cross-level conditioning (e.g., $\y_0^{k+1}$). The weights $w_{t,k}$ mirror earlier sections and ensure scale consistency across timesteps.

A convenient view is that each level handles a different \emph{scale} or \emph{latent} representation. For a two-level example, let
\[
    \mathbf{z}_0 = f(\x_0),
\]
with $f$ a deterministic downsampler/encoder. Run diffusion on the coarse space $q_2(\mathbf{z}_{1:T}\mid \mathbf{z}_0)$, reverse to obtain $\mathbf{z}_0$, lift $\mathbf{z}_0$ back towards the data space via a decoder (deterministic or learned), and then run/condition a second diffusion chain on $\x$:
\[
    q_1(\x_{1:T}\mid \x_0),\qquad
    q_2(\mathbf{z}_{1:T}\mid \mathbf{z}_0),\qquad
    \text{with cross-level links } \mathbf{z}_0 = f(\x_0)\;\text{ and }\; p_\theta(\x_0\mid \mathbf{z}_0).
\]
The overall forward joint still factorises as above, with each scale’s trajectory built upon the states of the previous scale.

The cross-level conditional $p_\theta(\y_0^k \mid \y_0^{k+1})$ can be deterministic (e.g., upsampling) or learned (e.g., a conditional decoder). After sampling/deriving $\y_0^k$, one either (i) sets $\y_T^k\!\sim\!\mathcal{N}(\mathbf{0},\mathbf{I})$ and runs the reverse chain $p_\theta(\y_{t-1}^k\mid \y_t^k,\y_0^{k+1})$ down to $\y_0^k$, or (ii) uses a DDIM-style reduced-step sampler at level $k$ with the same cumulative schedule $\bar{\alpha}_t^{(k)}$ for efficiency. In all cases, the parameterisation \eqref{eq:nested_mu_theta} with the cumulative $\bar{\alpha}_t^{(k)}$ keeps the notation and mechanics consistent with the single-level diffusion developed earlier.

\subsection{Stable Diffusion}
\label{subsec:stable-diffusion}

Stable Diffusion \cite{rombach-2022-ldm} is a \emph{latent diffusion} approach: rather than running diffusion directly in pixel space, the data are first mapped to a lower-dimensional latent space where denoising is cheaper; after sampling, a decoder maps the final latent back to the data domain. Let $\mathbf{x}_0\in\mathbb{R}^{H\times W\times C}$ be drawn from $p_{\text{data}}(\mathbf{x}_0)$. A trained encoder–decoder $(E,D)$ with $(h,w,c)\ll(H,W,C)$ defines
\[
E:\mathbb{R}^{H\times W\times C}\to\mathbb{R}^{h\times w\times c},\qquad
D:\mathbb{R}^{h\times w\times c}\to\mathbb{R}^{H\times W\times C},\qquad
\mathbf{z}_0 := E(\mathbf{x}_0),\quad \tilde{\mathbf{x}}_0 := D(\mathbf{z}_0).
\]
We treat $\mathbf{z}_0$ as the “data” for diffusion. When flattened, the latent lives in $\mathbb{R}^d$ with $d=h\,w\,c$. Using the same schedule notation as before, with $\alpha_t:=1-\beta_t$ and $\bar{\alpha}_t:=\prod_{i=1}^t\alpha_i$, the latent forward chain is linear–Gaussian,
\begin{equation}
\label{eq:ldm_forward}
q(\mathbf{z}_{1:T}\mid \mathbf{z}_0)=\prod_{t=1}^T q(\mathbf{z}_t\mid \mathbf{z}_{t-1}),
\qquad
q(\mathbf{z}_t\mid \mathbf{z}_{t-1})
=
\mathcal{N}\!\Big(\mathbf{z}_t;\ \sqrt{\alpha_t}\,\mathbf{z}_{t-1},\ \beta_t\,\mathbf{I}_d\Big),
\end{equation}
which implies the closed-form marginal and its re-parameterisation
\begin{equation}
\label{eq:ldm_closed}
q(\mathbf{z}_t\mid \mathbf{z}_0)
=
\mathcal{N}\!\Big(\mathbf{z}_t;\ \sqrt{\bar{\alpha}_t}\,\mathbf{z}_0,\ (1-\bar{\alpha}_t)\,\mathbf{I}_d\Big),
\qquad
\mathbf{z}_t=\sqrt{\bar{\alpha}_t}\,\mathbf{z}_0+\sqrt{1-\bar{\alpha}_t}\,\boldsymbol{\epsilon},
\quad \boldsymbol{\epsilon}\sim\mathcal{N}(\mathbf{0},\mathbf{I}_d).
\end{equation}
Because \eqref{eq:ldm_forward} is linear–Gaussian, the exact single-step posterior mirrors the pixel-space DDPM posterior, now in latent space:
\begin{equation}
\label{eq:ldm_posterior}
q(\mathbf{z}_{t-1}\mid \mathbf{z}_t,\mathbf{z}_0)
=
\mathcal{N}\!\Bigg(
\mathbf{z}_{t-1}\ ;\
\frac{1}{\sqrt{\alpha_t}}\!\left(\mathbf{z}_t-\frac{\beta_t}{\sqrt{1-\bar{\alpha}_t}}\,\boldsymbol{\epsilon}_t\right),
\ \tilde{\beta}_t\,\mathbf{I}_d
\Bigg),
\qquad
\tilde{\beta}_t:=\frac{1-\bar{\alpha}_{t-1}}{1-\bar{\alpha}_t}\,\beta_t,
\quad
\boldsymbol{\epsilon}_t=\frac{\mathbf{z}_t-\sqrt{\bar{\alpha}_t}\,\mathbf{z}_0}{\sqrt{1-\bar{\alpha}_t}}.
\end{equation}

Generation samples a Gaussian prior at the terminal noise level and reverses the chain with a learned mean (via noise prediction) and a chosen per-step variance,
\[
p_\theta(\mathbf{z}_{0:T}) \;=\; p(\mathbf{z}_T)\prod_{t=1}^T p_\theta(\mathbf{z}_{t-1}\mid \mathbf{z}_t),
\qquad
p(\mathbf{z}_T)=\mathcal{N}(\mathbf{0},\mathbf{I}_d),
\]
\begin{equation}
\label{eq:ldm_reverse}
p_\theta(\mathbf{z}_{t-1}\mid \mathbf{z}_t)
=
\mathcal{N}\!\Bigg(
\mathbf{z}_{t-1}\ ;\
\underbrace{\frac{1}{\sqrt{\alpha_t}}
\left(\mathbf{z}_t-\frac{\beta_t}{\sqrt{1-\bar{\alpha}_t}}\,\hat{\boldsymbol{\epsilon}}_\theta(\mathbf{z}_t,t)\right)}_{\mu_\theta(\mathbf{z}_t,t)},
\ \sigma_t^2\,\mathbf{I}_d
\Bigg),
\qquad
\sigma_t^2\in\{\tilde{\beta}_t,\ 0\}\ \text{(DDPM or DDIM sampling)}.
\end{equation}
This is exactly the parameterisation from earlier sections after the replacement $\mathbf{x}\mapsto \mathbf{z}$. If conditioning is used (for example, text), it simply augments $\hat{\boldsymbol{\epsilon}}_\theta$ as $\hat{\boldsymbol{\epsilon}}_\theta(\mathbf{z}_t,t,\cdot)$; the mechanisms are developed later in Section~\ref{sec:guided_diffusion}.

Training in latent space follows the same ELBO/KL decomposition and yields weighted noise matching with the forward reparameterisation from \eqref{eq:ldm_closed}:
\begin{equation}
\label{eq:ldm_loss}
\mathcal{L}_{\text{LDM}}
\;\cong\;
\mathbb{E}_{\mathbf{z}_0,t,\boldsymbol{\epsilon}}
\left[
w_t\,\big\|
\boldsymbol{\epsilon}
-\hat{\boldsymbol{\epsilon}}_\theta\!\big(\mathbf{z}_t,t\big)
\big\|_2^2
\right],
\qquad
\mathbf{z}_t=\sqrt{\bar{\alpha}_t}\,\mathbf{z}_0+\sqrt{1-\bar{\alpha}_t}\,\boldsymbol{\epsilon},
\quad
w_t=\frac{\beta_t}{2(1-\beta_t)(1-\bar{\alpha}_{t-1})}.
\end{equation}
Any equivalent re-weighting that leaves the Bayes-optimal predictor unchanged is acceptable; the form above is shown for consistency with the earlier pixel-space derivation.

After the reverse process yields $\widehat{\mathbf{z}}_0$, the decoder reconstructs the output in data space,
\begin{equation}
\label{eq:ldm_decode}
\widehat{\mathbf{x}}_0 \;=\; D\!\left(\widehat{\mathbf{z}}_0\right).
\end{equation}
Operating in latent space concentrates computation on semantically structured features learned by $(E,D)$, preserving the diffusion mathematics (forward marginals, true posterior, reverse parameterisation) while substantially reducing sampling cost.

\section{Flow Matching}

Flow matching reframes generative modelling as learning a time–dependent velocity field that \emph{deterministically} transports probability mass from a trivial source distribution to the data distribution. When discussing diffusion models, we specified a family of intermediate noisy marginals $\{p_t\}_{t\in[0,T]}$ (discrete schedules via $\alpha_t$ and $\bar{\alpha}_t$) and then sampled by reversing a stochastic forward process. Flow matching asks for a velocity field $\mathbf{v}(\mathbf{x},t)$ whose Ordinary Differential Equation (ODE)
\[
\dot{\mathbf{X}}_t=\mathbf{v}(\mathbf{X}_t,t)
\]
has \emph{exactly the same time marginals} $\{p_t\}$ as those defined by the diffusion construction \cite{lipman-2023-flowmatching}. In other words, we seek a deterministic transport that is marginally indistinguishable (at each $t$) from the stochastic diffusion.\medskip

Flow matching is useful for two reasons. First, replacing stepwise stochastic simulation with deterministic ODE integration removes trajectory noise and permits the use of standard high-order ODE solvers. Second, once a velocity field $\mathbf{v}$ is specified, it can be trained to reproduce the prescribed marginals ${p_t}_{t\in[0,T]}$ directly, without committing to a particular stochastic forward mechanism during training.\medskip

To make this precise we pass through two standard pieces of analysis. The first is the continuity (Liouville) equation, which encodes conservation of probability under a time–dependent velocity field and characterises the marginals of the ODE above. The second is the Fokker–Planck equation for the forward diffusion; it characterises the evolution of the diffusion marginals. By rewriting the diffusion’s second-order (Laplacian) term in the Fokker–Planck equation as a conservative transport driven by the score $s_t=\nabla_{\mathbf{x}}\log p_t$, we derive a \emph{probability–flow ODE} whose velocity
\[
\mathbf{v}(\mathbf{x},t)=\mathbf{f}(\mathbf{x},t)-\tfrac{1}{2}g(t)^2\,s_t(\mathbf{x})
\]
produces the same marginals as the diffusion $\mathrm{d}\mathbf{X}_t=\mathbf{f}(\mathbf{X}_t,t)\,\mathrm{d}t+g(t)\,\mathrm{d}\mathbf{W}_t$. In the variance-preserving case used earlier, this recovers the familiar DDIM form when the true score is replaced by its learned approximation.

The next subsection states these ingredients carefully and proves the probability–flow identity we will use throughout the section.

\subsection{Probability--Flow ODE and the Continuity Equation}

A time-indexed family of densities $(p_t)_{t\in[0,T]}$ and a velocity field $\mathbf{v}(\cdot,t)$ are linked by the continuity equation
\[
\partial_t p_t(\mathbf{x}) \;=\; -\,\nabla_{\mathbf{x}}\!\cdot\!\big(p_t(\mathbf{x})\,\mathbf{v}(\mathbf{x},t)\big),
\]
which expresses conservation of probability along the flow generated by $\mathbf{v}$. The first step is to connect deterministic ODE trajectories with this Partial Differential Equation (PDE) \cite{song-2021-sde}.

\begin{definition}[Continuity (Liouville) equation]
A pair $(p_t,\mathbf{v})$ satisfies the continuity equation on $[0,T]$ if
\[
\partial_t p_t \;=\; -\,\nabla_{\mathbf{x}}\!\cdot\!\big(p_t\,\mathbf{v}\big)
\quad\text{with}\quad
p_{t=0}=p_0.
\]
\end{definition}

\begin{lemma}[Liouville transport for ODE flows]\label{lem:liouville}
Let $\dot{\mathbf{X}}_t=\mathbf{v}(\mathbf{X}_t,t)$ with $\mathbf{X}_0\sim p_0$, where $\mathbf{v}$ is locally Lipschitz in $\mathbf{x}$ and measurable in $t$, and solutions exist uniquely without explosion on $[0,T]$. Then the law $p_t$ of $\mathbf{X}_t$ satisfies the continuity equation with velocity $\mathbf{v}$.
\end{lemma}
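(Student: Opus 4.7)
The plan is to establish the continuity equation in its weak (distributional) form against smooth test functions, then appeal to the assumed smoothness to read off the pointwise PDE. The backbone is the chain rule along individual ODE trajectories, combined with integration by parts to transfer the spatial derivative from $p_t\mathbf{v}$ onto the test function.

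Concretely, I would fix an arbitrary $\varphi \in C_c^\infty(\mathbb{R}^d)$ and define $F(t) := \mathbb{E}[\varphi(\mathbf{X}_t)] = \int \varphi(\mathbf{x})\,p_t(\mathbf{x})\,\mathrm{d}\mathbf{x}$. Along each trajectory the ODE and the ordinary chain rule give $\frac{\mathrm{d}}{\mathrm{d}t}\varphi(\mathbf{X}_t) = \nabla\varphi(\mathbf{X}_t)\cdot\mathbf{v}(\mathbf{X}_t,t)$. Because $\varphi$ has compact support and $\mathbf{v}$ is locally Lipschitz, the integrand is uniformly bounded on the (compact) reach of trajectories intersecting $\mathrm{supp}\,\varphi$, so one may interchange $\frac{\mathrm{d}}{\mathrm{d}t}$ and $\mathbb{E}$ to get $F'(t) = \int \nabla\varphi(\mathbf{x})\cdot\mathbf{v}(\mathbf{x},t)\,p_t(\mathbf{x})\,\mathrm{d}\mathbf{x}$. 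Integration by parts, with no boundary term by compact support of $\varphi$, yields $F'(t) = -\int \varphi(\mathbf{x})\,\nabla\!\cdot\!\big(p_t(\mathbf{x})\,\mathbf{v}(\mathbf{x},t)\big)\,\mathrm{d}\mathbf{x}$. Matching this against $F'(t) = \int \varphi\,\partial_t p_t\,\mathrm{d}\mathbf{x}$ and invoking the arbitrariness of $\varphi$ produces $\partial_t p_t = -\nabla\!\cdot\!(p_t\,\mathbf{v})$.

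The main obstacle is regularity: the pointwise identity presupposes that $p_t$ admits a sufficiently smooth density and that $\nabla\!\cdot\!(p_t\mathbf{v})$ is a bona fide function, which local Lipschitzness of $\mathbf{v}$ alone does not guarantee. To sidestep this I would, as an alternative, give a fully pointwise derivation through the flow map. Under the hypotheses the ODE generates a $C^1$ diffeomorphism $\Phi_t$ with $p_t = (\Phi_t)_\#\,p_0$ and a Jacobian obeying the classical Liouville identity $\partial_t \log\lvert\det \mathrm{D}\Phi_t\rvert = (\nabla\!\cdot\!\mathbf{v})\circ\Phi_t$. Differentiating the change-of-variables relation $p_t(\Phi_t(\mathbf{x}_0))\,\lvert\det \mathrm{D}\Phi_t(\mathbf{x}_0)\rvert = p_0(\mathbf{x}_0)$ in $t$ and rearranging gives $\partial_t p_t + \mathbf{v}\cdot\nabla p_t + p_t\,\nabla\!\cdot\!\mathbf{v} = 0$ along trajectories, which is exactly the continuity equation after expanding the divergence of the product. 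This route bypasses the weak-to-strong step and makes explicit the regularity inherited from the flow, at the cost of a short calculation with the Jacobian determinant — the one piece where I would be most careful with the hypotheses.
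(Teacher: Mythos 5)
Your main argument (fix $\varphi\in C_c^\infty$, apply the chain rule along trajectories, interchange $\tfrac{\mathrm{d}}{\mathrm{d}t}$ and $\mathbb{E}$, integrate by parts with no boundary term, and conclude by arbitrariness of $\varphi$) is exactly the paper's proof and is correct, including the observation that the identity is first obtained distributionally and then read off classically under the assumed smoothness of $p_t$. The flow-map/Jacobian alternative you add is a sound classical variant that handles the weak-to-strong regularity point more explicitly, but it goes beyond what the paper's argument requires.
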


\begin{proof}
Fix smooth compactly supported $\varphi$. By the chain rule,
\(
\frac{\mathrm{d}}{\mathrm{d}t}\varphi(\mathbf{X}_t)=\nabla\varphi(\mathbf{X}_t)^\top \mathbf{v}(\mathbf{X}_t,t).
\)
Taking expectations and using the law of $\mathbf{X}_t$,
\[
\frac{\mathrm{d}}{\mathrm{d}t}\int \varphi(\mathbf{x})\,p_t(\mathbf{x})\,\mathrm{d}\mathbf{x}
=
\int \nabla\varphi(\mathbf{x})^\top \mathbf{v}(\mathbf{x},t)\,p_t(\mathbf{x})\,\mathrm{d}\mathbf{x}.
\]
Integrate by parts in $\mathbf{x}$ (boundary term vanishes by compact support):
\[
\int \nabla\varphi^\top \mathbf{v}\,p_t
=
-\,\int \varphi(\mathbf{x})\,\nabla_{\mathbf{x}}\!\cdot\!\big(p_t(\mathbf{x})\,\mathbf{v}(\mathbf{x},t)\big)\,\mathrm{d}\mathbf{x}.
\]
Since this holds for all $\varphi$, we obtain $\partial_t p_t=-\nabla_{\mathbf{x}}\!\cdot(p_t\mathbf{v})$ in the distributional (hence classical) sense.
\end{proof}

We now recall the Fokker–Planck equation for the forward SDE used in diffusion models and show how it connects to the continuity equation.

\begin{lemma}[Fokker--Planck for isotropic, state–independent diffusion]\label{lem:fp}
Let $\mathrm{d}\mathbf{X}_t=\mathbf{f}(\mathbf{X}_t,t)\,\mathrm{d}t + g(t)\,\mathrm{d}\mathbf{W}_t$, with $\mathbf{f}$ measurable and of linear growth, $g$ continuous, and assume $\mathbf{X}_t$ has a smooth density $p_t$. Then
\[
\partial_t p_t(\mathbf{x}) \;=\; -\,\nabla_{\mathbf{x}}\!\cdot\!\big(\mathbf{f}(\mathbf{x},t)\,p_t(\mathbf{x})\big)
\;+\; \frac{g(t)^2}{2}\,\Delta_{\mathbf{x}} p_t(\mathbf{x}).
\]
\end{lemma}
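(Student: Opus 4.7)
The plan is to prove the Fokker--Planck equation in the weak (distributional) sense, mirroring exactly the argument used in Lemma~\ref{lem:liouville} for the continuity equation, but replacing the chain rule with It\^o's formula so the second-order (quadratic variation) term appears. Fix a smooth, compactly supported test function $\varphi:\mathbb{R}^d\to\mathbb{R}$. Applying It\^o's formula to $\varphi(\mathbf{X}_t)$ with the given SDE yields
\[
\mathrm{d}\varphi(\mathbf{X}_t)
= \nabla\varphi(\mathbf{X}_t)^\top \mathbf{f}(\mathbf{X}_t,t)\,\mathrm{d}t
+ \tfrac{1}{2}g(t)^2\,\Delta\varphi(\mathbf{X}_t)\,\mathrm{d}t
+ g(t)\,\nabla\varphi(\mathbf{X}_t)^\top \mathrm{d}\mathbf{W}_t,
\]
where the Laplacian term arises because the diffusion coefficient is $g(t)\mathbf{I}$, so the Hessian contribution contracts to $\tfrac{1}{2}g(t)^2\mathrm{tr}(\nabla^2\varphi)=\tfrac{1}{2}g(t)^2\Delta\varphi$.

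Next I would take expectations. Because $\nabla\varphi$ is bounded and compactly supported and $g$ is continuous on $[0,T]$, the It\^o integral is a true (square-integrable) martingale, so its expectation vanishes. Writing expectations as integrals against the density $p_t$, we obtain
\[
\frac{\mathrm{d}}{\mathrm{d}t}\!\int \varphi(\mathbf{x})\,p_t(\mathbf{x})\,\mathrm{d}\mathbf{x}
=\int \nabla\varphi^\top \mathbf{f}\,p_t\,\mathrm{d}\mathbf{x}
+\tfrac{1}{2}g(t)^2\!\int \Delta\varphi\,p_t\,\mathrm{d}\mathbf{x}.
\]
Integration by parts (the boundary terms vanish because $\varphi$ has compact support) transfers derivatives onto $p_t$:
\[
\int \nabla\varphi^\top \mathbf{f}\,p_t\,\mathrm{d}\mathbf{x}
= -\!\int \varphi\,\nabla_{\mathbf{x}}\!\cdot(\mathbf{f}\,p_t)\,\mathrm{d}\mathbf{x},
\qquad
\int \Delta\varphi\,p_t\,\mathrm{d}\mathbf{x}
= \int \varphi\,\Delta_{\mathbf{x}} p_t\,\mathrm{d}\mathbf{x}.
\]
Pushing the time derivative inside the integral on the left (justified by the smoothness hypothesis on $p_t$) gives
\[
\int \varphi(\mathbf{x})\!\left[\partial_t p_t + \nabla_{\mathbf{x}}\!\cdot(\mathbf{f}\,p_t) - \tfrac{1}{2}g(t)^2\,\Delta_{\mathbf{x}} p_t\right]\!\mathrm{d}\mathbf{x} = 0
\]
for every test function $\varphi$, and the fundamental lemma of the calculus of variations yields the stated PDE pointwise.

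The main obstacle is not the algebra but the analytic justifications, namely (i) checking that the It\^o integral is a true martingale rather than merely a local martingale, which uses the boundedness of $\nabla\varphi$ together with continuity of $g$ on $[0,T]$; (ii) the interchange of $\mathrm{d}/\mathrm{d}t$ with the spatial integral, which I would justify by the assumed smoothness of $p_t$ and linear growth of $\mathbf{f}$; and (iii) the vanishing of boundary terms in the integration by parts, which is immediate from compact support of $\varphi$. Everything else is a clean parallel to the proof of Lemma~\ref{lem:liouville}, with the Laplacian term being the only genuinely new ingredient supplied by It\^o's second-order correction.
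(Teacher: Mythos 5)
Your proposal is correct and follows essentially the same route as the paper's proof: Itô's formula applied to a compactly supported test function, taking expectations so the stochastic integral drops out, and integrating by parts twice before invoking arbitrariness of the test function. The added remarks on the martingale property, interchange of derivative and integral, and boundary terms are sound refinements of the same argument rather than a different approach.
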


\begin{proof}
Apply Itô’s formula to $\varphi(\mathbf{X}_t)$ for smooth compactly supported $\varphi$:
\[
\mathrm{d}\varphi(\mathbf{X}_t)
=
\nabla\varphi(\mathbf{X}_t)^\top \mathbf{f}(\mathbf{X}_t,t)\,\mathrm{d}t
+\tfrac{1}{2}\,\mathrm{Tr}\!\big(g(t)^2\,\nabla^2\varphi(\mathbf{X}_t)\big)\,\mathrm{d}t
+\nabla\varphi(\mathbf{X}_t)^\top g(t)\,\mathrm{d}\mathbf{W}_t.
\]
Take expectations; the martingale term has zero mean. Writing the expectation via $p_t$,
\[
\frac{\mathrm{d}}{\mathrm{d}t}\int \varphi\,p_t
=
\int \nabla\varphi^\top \mathbf{f}\,p_t \,\mathrm{d}\mathbf{x}
+ \frac{g(t)^2}{2}\int \Delta \varphi\,p_t \,\mathrm{d}\mathbf{x}.
\]
Integrate by parts in $\mathbf{x}$ in both terms to move derivatives onto $p_t$:
\[
\frac{\mathrm{d}}{\mathrm{d}t}\int \varphi\,p_t
=
- \int \varphi\,\nabla\!\cdot(\mathbf{f}p_t)\,\mathrm{d}\mathbf{x}
+ \frac{g(t)^2}{2}\int \varphi\,\Delta p_t\,\mathrm{d}\mathbf{x}.
\]
Since this holds for all $\varphi$, the PDE stated follows.
\end{proof}

The probability–flow ODE is obtained by rewriting the diffusion (second-order) term in Lemma~\ref{lem:fp} as a conservative transport term using the score $s_t=\nabla_{\mathbf{x}}\log p_t$.

\begin{theorem}[Probability--flow ODE (isotropic, state–independent diffusion)]\label{thm:pflow}
Define
\[
\mathbf{v}(\mathbf{x},t)
\;:=\;
\mathbf{f}(\mathbf{x},t) \;-\; \frac{g(t)^2}{2}\,s_t(\mathbf{x})
\;=\;
\mathbf{f}(\mathbf{x},t) \;-\; \frac{g(t)^2}{2}\,\nabla_{\mathbf{x}}\log p_t(\mathbf{x}).
\]
Let $\dot{\mathbf{Y}}_t=\mathbf{v}(\mathbf{Y}_t,t)$ with $\mathbf{Y}_0\sim p_0$. Then the time marginals of $\mathbf{Y}_t$ equal those of the SDE, i.e., $\mathbf{Y}_t\sim p_t$ for all $t\in[0,T]$.
\end{theorem}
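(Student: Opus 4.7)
The plan is to show that both $p_t$ (the SDE marginal) and the density of $\mathbf{Y}_t$ satisfy the \emph{same} continuity equation with the \emph{same} initial datum $p_0$, and then invoke uniqueness. The bridge between the diffusion PDE and a transport PDE is the score identity $\nabla_{\mathbf{x}} p_t = p_t\,\nabla_{\mathbf{x}}\log p_t = p_t\,s_t$, which lets us rewrite the second-order Laplacian term of Fokker--Planck as a conservative (divergence-form) transport.

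First I would take the Fokker--Planck equation from Lemma~\ref{lem:fp} and rewrite its Laplacian. Since $\Delta_{\mathbf{x}} p_t = \nabla_{\mathbf{x}}\!\cdot\!(\nabla_{\mathbf{x}} p_t) = \nabla_{\mathbf{x}}\!\cdot\!(p_t\,s_t)$, substitution gives
\[
\partial_t p_t \;=\; -\nabla_{\mathbf{x}}\!\cdot\!(\mathbf{f}\,p_t) \;+\; \tfrac{1}{2}g(t)^2\,\nabla_{\mathbf{x}}\!\cdot\!(p_t\,s_t) \;=\; -\nabla_{\mathbf{x}}\!\cdot\!\Big(p_t\big(\mathbf{f}-\tfrac{1}{2}g(t)^2 s_t\big)\Big) \;=\; -\nabla_{\mathbf{x}}\!\cdot\!(p_t\,\mathbf{v}),
\]
so $p_t$ solves the continuity equation with velocity $\mathbf{v}$ defined in the theorem.

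Next I would apply Lemma~\ref{lem:liouville} to $\dot{\mathbf{Y}}_t=\mathbf{v}(\mathbf{Y}_t,t)$ with $\mathbf{Y}_0\sim p_0$: the law $q_t$ of $\mathbf{Y}_t$ satisfies $\partial_t q_t = -\nabla_{\mathbf{x}}\!\cdot\!(q_t\,\mathbf{v})$ with $q_0=p_0$. Thus $p_t$ and $q_t$ are both solutions of the same linear first-order transport PDE with the same initial condition, and uniqueness forces $q_t=p_t$ for every $t\in[0,T]$, which is precisely the claim $\mathbf{Y}_t\sim p_t$.

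The main obstacle is the uniqueness step: strictly speaking one needs enough regularity on $\mathbf{v}=\mathbf{f}-\tfrac{1}{2}g(t)^2 s_t$ (e.g.\ local Lipschitzness in $\mathbf{x}$ with controlled growth in $t$) for a standard uniqueness theorem for the continuity equation, and the score $s_t=\nabla_{\mathbf{x}}\log p_t$ can become singular where $p_t$ vanishes. At the tutorial level I would invoke the classical method of characteristics, legitimate under the smoothness hypotheses already assumed in Lemmas~\ref{lem:liouville}--\ref{lem:fp}, or cite a DiPerna--Lions-type result when only Sobolev regularity of $\mathbf{v}$ is available. A cleaner presentation is to argue directly via characteristics: letting $\Phi_t$ denote the flow of $\mathbf{v}$, one has $q_t=(\Phi_t)_{*}p_0$, and a short duality-with-test-functions argument (as in the proof of Lemma~\ref{lem:liouville}) identifies this push-forward with the unique distributional solution of the continuity equation started at $p_0$, namely $p_t$.
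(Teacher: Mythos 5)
Your proposal is correct and follows essentially the same route as the paper: rewrite the Fokker--Planck Laplacian via $\Delta p_t=\nabla\!\cdot(p_t\nabla\log p_t)$ so that $p_t$ solves the continuity equation with velocity $\mathbf{v}$, then apply Lemma~\ref{lem:liouville} to the ODE law and conclude by uniqueness of solutions with the same initial datum. Your explicit remark on the regularity needed for that uniqueness step (and the possible singularity of $s_t$ where $p_t$ vanishes) is a point the paper's proof passes over silently, but it does not change the argument.
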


\begin{proof}
From Lemma~\ref{lem:fp}, $\partial_t p_t = -\nabla\!\cdot(\mathbf{f}p_t) + \frac{g^2}{2}\Delta p_t$. Use the identity
\[
\Delta p_t
=
\nabla\!\cdot(\nabla p_t)
=
\nabla\!\cdot\!\big(p_t\,\nabla\log p_t\big),
\]
which is immediate from $\nabla\log p_t=(\nabla p_t)/p_t$. Hence
\[
\partial_t p_t
=
-\,\nabla\!\cdot\!\Big(\mathbf{f}\,p_t - \tfrac{g^2}{2}\,p_t\,\nabla\log p_t\Big)
=
-\,\nabla\!\cdot\!\big(p_t\,\mathbf{v}\big).
\]
Thus $p_t$ solves the continuity equation with velocity $\mathbf{v}$. By Lemma~\ref{lem:liouville}, the law of the ODE solution also solves the same continuity equation with the same initial condition, whence the marginals coincide.
\end{proof}

The diffusion models used earlier are recovered as a special case. In the variance–preserving (VP) forward process,
\[
\mathrm{d}\mathbf{X}_t = -\tfrac{1}{2}\,\beta(t)\,\mathbf{X}_t\,\mathrm{d}t + \sqrt{\beta(t)}\,\mathrm{d}\mathbf{W}_t,
\]
we have $\mathbf{f}(\mathbf{x},t)=-\tfrac{1}{2}\beta(t)\mathbf{x}$ and $g(t)^2=\beta(t)$, so Theorem~\ref{thm:pflow} yields
\[
\dot{\mathbf{x}}
\;=\;
-\tfrac{1}{2}\,\beta(t)\,\mathbf{x}
\;-\;
\tfrac{1}{2}\,\beta(t)\,\nabla_{\mathbf{x}}\log p_t(\mathbf{x}),
\]
which is the DDIM probability–flow ODE once the score is replaced by its learned approximation.

\begin{remark}[Scope and extensions]
The argument above assumes state–independent, isotropic diffusion $g(t)\mathbf{I}_d$, which is the regime adopted throughout this tutorial. With state–dependent diffusion matrices, the Fokker–Planck operator becomes $\partial_t p_t=-\nabla\!\cdot(\mathbf{f}p_t)+\tfrac{1}{2}\nabla\!\cdot(\mathbf{D}\nabla p_t)$, and the corresponding probability–flow velocity acquires additional terms beyond $-\tfrac{1}{2}\mathbf{D}\nabla\log p_t$. These cases require extra regularity to guarantee well-posedness. In practice, the standard VP/VE formulations avoid these complications and the ODE obtained above provides the deterministic counterpart whose time marginals match those of the diffusion.
\end{remark}

\subsection{Flow-Matching Objectives: Marginal vs.\ Conditional}

The probability–flow ODE of Theorem~\ref{thm:pflow} shows that a time–dependent velocity field $\mathbf{v}(\mathbf{x},t)$ uniquely determines an evolution of marginals via the continuity equation. \emph{Flow matching} turns this into a supervised learning problem: we construct a family of intermediate distributions $\{p_t\}_{t\in[0,T]}$ together with a target velocity field $\mathbf{u}_t(\mathbf{x})$ that transports $p_t$, and train a model $\mathbf{v}_\theta(\mathbf{x},t)$ to regress onto $\mathbf{u}_t(\mathbf{x})$. This avoids explicit score estimation; it requires only samples from $p_0$ (data) and from a simple terminal $p_T$ (e.g., $\mathcal{N}(\mathbf{0},\mathbf{I}_d)$), along with a differentiable path interpolating between endpoints.

\begin{definition}[Stochastic interpolant and coupling]\label{def:interpolant}
Let $p_0$ and $p_T$ be probability densities on $\mathbb{R}^d$. A \emph{coupling} is a joint law $\pi(\mathbf{x}_0,\mathbf{x}_T)$ with marginals $p_0$ and $p_T$. A \emph{stochastic interpolant} is a measurable map
\[
\psi:\ \mathbb{R}^d\times\mathbb{R}^d\times[0,T]\to\mathbb{R}^d,\qquad
(\mathbf{x}_0,\mathbf{x}_T,t)\mapsto \psi_t(\mathbf{x}_0,\mathbf{x}_T),
\]
such that $\psi_0(\mathbf{x}_0,\mathbf{x}_T)=\mathbf{x}_0$, $\psi_T(\mathbf{x}_0,\mathbf{x}_T)=\mathbf{x}_T$, and $t\mapsto \psi_t(\mathbf{x}_0,\mathbf{x}_T)$ is continuously differentiable. With
\[
\mathbf{X}_0\sim p_0,\quad \mathbf{X}_T\sim p_T,\quad
(\mathbf{X}_0,\mathbf{X}_T)\sim \pi,\quad
\mathbf{X}_t := \psi_t(\mathbf{X}_0,\mathbf{X}_T),\quad
p_t = \text{\rm law}(\mathbf{X}_t),
\]
write the conditional (per–path) velocity as $\dot{\psi}_t(\mathbf{x}_0,\mathbf{x}_T):=\partial_t \psi_t(\mathbf{x}_0,\mathbf{x}_T)$.
\end{definition}

\begin{definition}[Marginal target velocity]\label{def:marginal-velocity}
For $t\in[0,T]$, define
\[
\mathbf{u}_t(\mathbf{x}) \;:=\; \mathbb{E}\!\left[\,\dot{\psi}_t(\mathbf{X}_0,\mathbf{X}_T)\ \middle|\ \mathbf{X}_t=\mathbf{x}\,\right].
\]
\end{definition}

\begin{proposition}[Interpolant continuity equation]\label{prop:interpolant-continuity}
Under the regularity in Definition~\ref{def:interpolant}, the marginals $p_t$ satisfy
\[
\partial_t p_t(\mathbf{x}) \;=\; -\,\nabla_{\mathbf{x}}\!\cdot\!\big(p_t(\mathbf{x})\,\mathbf{u}_t(\mathbf{x})\big),\qquad t\in[0,T].
\]
Consequently, the ODE $\dot{\mathbf{Y}}_t=\mathbf{u}_t(\mathbf{Y}_t)$ with $\mathbf{Y}_0\sim p_0$ has $\text{\rm law}(\mathbf{Y}_t)=p_t$ for all $t$.
\end{proposition}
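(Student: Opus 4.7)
The plan is to follow the weak formulation template already used for Lemmas~\ref{lem:liouville} and~\ref{lem:fp}: pair the marginals against smooth compactly supported test functions, differentiate in time along the interpolant, and exploit the tower property to collapse the per-path velocity $\dot{\psi}_t$ onto its conditional mean $\mathbf{u}_t$. The whole point of Definition~\ref{def:marginal-velocity} is to make that projection work, so the argument reduces to careful bookkeeping.

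First I would fix $\varphi\in C_c^\infty(\mathbb{R}^d)$ and rewrite
\[
\int \varphi(\mathbf{x})\,p_t(\mathbf{x})\,\mathrm{d}\mathbf{x}\;=\;\mathbb{E}\big[\varphi(\psi_t(\mathbf{X}_0,\mathbf{X}_T))\big].
\]
Because $t\mapsto \psi_t$ is continuously differentiable, the chain rule gives $\frac{\mathrm{d}}{\mathrm{d}t}\varphi(\mathbf{X}_t)=\nabla\varphi(\mathbf{X}_t)^\top \dot{\psi}_t(\mathbf{X}_0,\mathbf{X}_T)$, and compact support of $\varphi$ together with the regularity assumption in Definition~\ref{def:interpolant} justifies interchanging $\frac{\mathrm{d}}{\mathrm{d}t}$ with $\mathbb{E}$. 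This yields
\[
\frac{\mathrm{d}}{\mathrm{d}t}\int \varphi\,p_t\,\mathrm{d}\mathbf{x}
\;=\;\mathbb{E}\!\left[\nabla\varphi(\mathbf{X}_t)^\top \dot{\psi}_t(\mathbf{X}_0,\mathbf{X}_T)\right].
\]

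Next I would invoke the tower property by conditioning on $\mathbf{X}_t$. Since $\nabla\varphi(\mathbf{X}_t)$ is $\sigma(\mathbf{X}_t)$-measurable, it pulls outside the inner conditional expectation, and by Definition~\ref{def:marginal-velocity} the remaining factor is exactly $\mathbf{u}_t(\mathbf{X}_t)$. Writing the resulting expectation as an integral against $p_t$ and integrating by parts in $\mathbf{x}$ (boundary terms vanish by compact support of $\varphi$),
\[
\frac{\mathrm{d}}{\mathrm{d}t}\int \varphi\,p_t\,\mathrm{d}\mathbf{x}
\;=\;\int \nabla\varphi(\mathbf{x})^\top \mathbf{u}_t(\mathbf{x})\,p_t(\mathbf{x})\,\mathrm{d}\mathbf{x}
\;=\;-\int \varphi(\mathbf{x})\,\nabla_{\mathbf{x}}\!\cdot\!\big(p_t(\mathbf{x})\,\mathbf{u}_t(\mathbf{x})\big)\,\mathrm{d}\mathbf{x}.
\]
Since $\varphi$ is arbitrary, this identifies $\partial_t p_t=-\nabla\!\cdot(p_t\,\mathbf{u}_t)$ distributionally (and classically under the stated smoothness). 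For the ODE statement, apply Lemma~\ref{lem:liouville} with $\mathbf{v}=\mathbf{u}_t$: the law of $\mathbf{Y}_t$ satisfies the same continuity equation with the same initial datum $p_0$ as $p_t$, and uniqueness forces the two marginals to coincide.

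The main obstacle is not algebraic but regularity-theoretic: one must ensure that $\mathbf{u}_t$ is well-defined as a measurable function of $\mathbf{x}$ (i.e., the conditional expectation admits a regular version), that $\dot{\psi}_t$ is integrable so differentiation under the expectation is legitimate, and that $p_t\,\mathbf{u}_t$ has enough decay for the integration by parts — in the weak formulation this is supplied implicitly by the compact support of the test function. Conceptually, the step worth emphasising is the tower-property collapse: it is precisely the definition of $\mathbf{u}_t$ as the conditional mean of the endpoint-dependent velocity $\dot{\psi}_t$ given the current location $\mathbf{X}_t$ that closes the PDE in $\mathbf{x}$ alone, and without this projection one would be left with a non-autonomous transport driven by the hidden variables $(\mathbf{X}_0,\mathbf{X}_T)$.
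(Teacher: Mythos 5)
Your proposal is correct and follows essentially the same route as the paper's proof: test against a smooth compactly supported $\varphi$, apply the chain rule along the interpolant, condition on $\mathbf{X}_t$ to replace $\dot{\psi}_t$ by $\mathbf{u}_t$, integrate by parts, and invoke Lemma~\ref{lem:liouville} for the ODE marginal statement. Your additional remarks on regularity (measurable version of the conditional expectation, differentiation under the expectation, uniqueness of the continuity-equation solution) are sound elaborations of points the paper leaves implicit.
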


\begin{proof}
Let $\varphi$ be smooth with compact support. By the chain rule,
\(
\frac{\mathrm{d}}{\mathrm{d}t}\varphi(\mathbf{X}_t)
= \nabla\varphi(\mathbf{X}_t)^\top \dot{\psi}_t(\mathbf{X}_0,\mathbf{X}_T).
\)
Taking expectations and conditioning on $\mathbf{X}_t$ yields
\[
\frac{\mathrm{d}}{\mathrm{d}t}\int \varphi\,p_t
= \int \nabla\varphi(\mathbf{x})^\top \mathbf{u}_t(\mathbf{x})\,p_t(\mathbf{x})\,\mathrm{d}\mathbf{x}.
\]
An integration by parts gives $\int \nabla\varphi^\top \mathbf{u}_t\,p_t = -\int \varphi\,\nabla\!\cdot(p_t\mathbf{u}_t)$, hence the claim; the ODE marginal statement follows from Lemma~\ref{lem:liouville}.
\end{proof}

Two regression objectives are natural. The \emph{Conditional Flow–Matching} (CFM) objective trains on per–path targets available from the interpolant:
\[
\mathcal{L}_{\mathrm{CFM}}(\theta)
\;:=\; \mathbb{E}\!\left[\,\big\|\mathbf{v}_\theta(\mathbf{Z}_t,t)-\mathbf{U}_t\big\|_2^2\,\right],
\qquad
\mathbf{Z}_t:=\psi_t(\mathbf{X}_0,\mathbf{X}_T),\ \ \mathbf{U}_t:=\dot{\psi}_t(\mathbf{X}_0,\mathbf{X}_T).
\]
When an analytic form for the marginal velocity is available, the \emph{Marginal Flow–Matching} (MFM) objective regresses directly on $\mathbf{u}_t$:
\[
\mathcal{L}_{\mathrm{MFM}}(\theta)
\;:=\; \mathbb{E}\!\left[\,\big\|\mathbf{v}_\theta(\mathbf{X}_t,t)-\mathbf{u}_t(\mathbf{X}_t)\big\|_2^2\,\right],
\qquad \mathbf{X}_t\sim p_t.
\]

\begin{theorem}[Optimal predictor and CFM/MFM equivalence]\label{thm:cfm-mfm-equivalence}
Assume $\mathbb{E}\|\mathbf{U}_t\|_2^2<\infty$ and $\mathbb{E}\|\mathbf{v}_\theta(\mathbf{Z}_t,t)\|_2^2<\infty$ for all $\theta$. Then:
\begin{enumerate}
\item The unique $L^2$ minimiser of $\mathcal{L}_{\mathrm{CFM}}$ over measurable $\mathbf{v}$ is
\(
\mathbf{v}^*(\mathbf{x},t) \equiv \mathbf{u}_t(\mathbf{x})
= \mathbb{E}[\mathbf{U}_t\mid \mathbf{Z}_t=\mathbf{x}].
\)
\item There exists a constant $C$ independent of $\theta$ such that
\(
\mathcal{L}_{\mathrm{CFM}}(\theta)
= \mathcal{L}_{\mathrm{MFM}}(\theta) + C,
\)
so CFM and MFM share the same set of global minimisers.
\end{enumerate}
\end{theorem}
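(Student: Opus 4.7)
Both parts follow from the standard $L^2$ orthogonality property of the conditional expectation, applied at each time $t$ and then averaged. The plan is to prove part (1) first (which characterises $\mathbf{u}_t$ intrinsically as a conditional expectation), and then to obtain part (2) as a bias–variance decomposition that makes the constant $C$ explicit. I would assume throughout that the expectations in $\mathcal{L}_{\mathrm{CFM}}$ and $\mathcal{L}_{\mathrm{MFM}}$ are taken over the same joint law on $(\mathbf{X}_0,\mathbf{X}_T,t)$ and that $\mathbf{Z}_t=\psi_t(\mathbf{X}_0,\mathbf{X}_T)$ has marginal $p_t$ by Definition~\ref{def:interpolant}, so that for any measurable $h$, $\mathbb{E}[h(\mathbf{Z}_t,t)]=\mathbb{E}_{\mathbf{X}_t\sim p_t}[h(\mathbf{X}_t,t)]$.

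\textbf{Part (1).}
I would first note that $\mathbf{u}_t(\mathbf{x})=\mathbb{E}[\mathbf{U}_t\mid \mathbf{Z}_t=\mathbf{x}]$ is exactly the definition adopted in Definition~\ref{def:marginal-velocity} (since $\mathbf{Z}_t=\mathbf{X}_t$ in distribution under the stated coupling). Then for any measurable candidate $\mathbf{v}(\mathbf{x},t)$, I would write
\[
\mathbf{U}_t-\mathbf{v}(\mathbf{Z}_t,t)=\big(\mathbf{U}_t-\mathbf{u}_t(\mathbf{Z}_t)\big)+\big(\mathbf{u}_t(\mathbf{Z}_t)-\mathbf{v}(\mathbf{Z}_t,t)\big),
\]
expand the squared norm, and take expectations. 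The cross term
$\mathbb{E}\!\big[\langle \mathbf{U}_t-\mathbf{u}_t(\mathbf{Z}_t),\,\mathbf{u}_t(\mathbf{Z}_t)-\mathbf{v}(\mathbf{Z}_t,t)\rangle\big]$
vanishes by the tower property conditioned on $(\mathbf{Z}_t,t)$, because $\mathbf{u}_t(\mathbf{Z}_t)-\mathbf{v}(\mathbf{Z}_t,t)$ is $(\mathbf{Z}_t,t)$-measurable and $\mathbb{E}[\mathbf{U}_t-\mathbf{u}_t(\mathbf{Z}_t)\mid \mathbf{Z}_t,t]=\mathbf{0}$ by the definition of $\mathbf{u}_t$. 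This yields
\[
\mathbb{E}\big\|\mathbf{v}(\mathbf{Z}_t,t)-\mathbf{U}_t\big\|_2^2
=\mathbb{E}\big\|\mathbf{u}_t(\mathbf{Z}_t)-\mathbf{v}(\mathbf{Z}_t,t)\big\|_2^2+\mathbb{E}\big\|\mathbf{U}_t-\mathbf{u}_t(\mathbf{Z}_t)\big\|_2^2,
\]
and the right-hand side is minimised in $\mathbf{v}$ iff the first term vanishes, i.e.\ $\mathbf{v}(\mathbf{x},t)=\mathbf{u}_t(\mathbf{x})$ for $p_t$-a.e.\ $\mathbf{x}$, giving uniqueness in $L^2(p_t)$.

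\textbf{Part (2).}
Applying the same decomposition with $\mathbf{v}=\mathbf{v}_\theta$ immediately gives
\[
\mathcal{L}_{\mathrm{CFM}}(\theta)=\mathbb{E}\big\|\mathbf{v}_\theta(\mathbf{Z}_t,t)-\mathbf{u}_t(\mathbf{Z}_t)\big\|_2^2+C,\qquad C:=\mathbb{E}\big\|\mathbf{U}_t-\mathbf{u}_t(\mathbf{Z}_t)\big\|_2^2,
\]
where $C$ does not depend on $\theta$. Because $\mathbf{Z}_t\sim p_t$, the first term on the right equals $\mathcal{L}_{\mathrm{MFM}}(\theta)$, so $\mathcal{L}_{\mathrm{CFM}}(\theta)=\mathcal{L}_{\mathrm{MFM}}(\theta)+C$, and the two objectives differ only by an additive constant. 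Consequently, any $\theta^\star$ minimising one minimises the other, and the common minimiser is characterised by $\mathbf{v}_{\theta^\star}(\cdot,t)=\mathbf{u}_t(\cdot)$ in $L^2(p_t)$.

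\textbf{Main obstacle.}
The calculation itself is routine; the only subtlety worth being careful about is ensuring that the cross term vanishes \emph{jointly} over the randomness in $(\mathbf{X}_0,\mathbf{X}_T,t)$. This reduces to conditioning on $(\mathbf{Z}_t,t)$ before conditioning on $t$ alone, and then invoking the definition of $\mathbf{u}_t$ as a conditional expectation; the finite-second-moment hypotheses in the statement are exactly what is needed so that all expectations are well defined and Fubini applies.
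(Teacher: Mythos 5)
Your proposal is correct and follows essentially the same route as the paper: part (1) via the $L^2$ orthogonality principle for conditional expectations, and part (2) via the bias--variance (law of total variance) decomposition, with $C=\mathbb{E}\|\mathbf{U}_t-\mathbf{u}_t(\mathbf{Z}_t)\|_2^2$ as the $\theta$-independent constant. You merely spell out explicitly (cross-term cancellation by the tower property) what the paper invokes by name, so the two arguments are the same in substance.
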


\begin{proof}
Let $Y:=(\mathbf{Z}_t,t)$ and $U:=\mathbf{U}_t$. Then $\mathcal{L}_{\mathrm{CFM}}(\theta)=\mathbb{E}\| \mathbf{v}_\theta(Y)-U\|_2^2$. By the $L^2$ orthogonality principle,
\(
\arg\min_{\mathbf{v}}\mathbb{E}\|\mathbf{v}(Y)-U\|_2^2
=\mathbb{E}[U\mid Y]
=\mathbf{u}_t(\mathbf{Z}_t).
\)
For (2), apply the law of total variance:
\[
\mathbb{E}\| \mathbf{v}_\theta(Y)-U\|_2^2
= \mathbb{E}\| \mathbf{v}_\theta(Y)-\mathbb{E}[U\mid Y]\|_2^2
+ \mathbb{E}\| U-\mathbb{E}[U\mid Y]\|_2^2,
\]
where the second term is constant in $\theta$ and the first equals $\mathcal{L}_{\mathrm{MFM}}(\theta)$ with target $\mathbf{u}_t$.
\end{proof}

In practice, CFM is attractive because for common interpolants the analytic $\mathbf{u}_t(\mathbf{x})$ involves the intractable posterior $\pi(\mathbf{x}_0,\mathbf{x}_T\mid \mathbf{X}_t=\mathbf{x})$, whereas $\dot{\psi}_t(\mathbf{X}_0,\mathbf{X}_T)$ is directly observable from sampled endpoints. The continuity equation determines $\mathbf{u}_t$ only up to fields $\mathbf{w}$ satisfying $\nabla\!\cdot(p_t\mathbf{w})=0$; CFM implicitly fixes an $L^2(p_t)$ representative induced by the chosen interpolant and coupling. Any positive time weighting $w(t)$ on $[0,T]$ can be used inside the expectation without changing the optimal predictor; weights are chosen for optimisation stability rather than identifiability.

As a concrete instance that will be studied further in the next subsection, the straight–line path $\psi_t=(1-\rho(t))\mathbf{X}_0+\rho(t)\mathbf{X}_T$ (with smooth $\rho(0)=0$, $\rho(T)=1$) yields the observable per–path target
\[
\mathbf{U}_t=\dot{\psi}_t(\mathbf{X}_0,\mathbf{X}_T)=\dot{\rho}(t)\,(\mathbf{X}_T-\mathbf{X}_0),
\]
so CFM trains $\mathbf{v}_\theta(\psi_t(\mathbf{X}_0,\mathbf{X}_T),t)$ to match $\dot{\rho}(t)(\mathbf{X}_T-\mathbf{X}_0)$ using only samples of $(\mathbf{X}_0,\mathbf{X}_T)$.

\subsection{Linear and Rectified Flows (Straight–Line Couplings)}

This subsection specialises the flow–matching construction to straight–line interpolants between data and Gaussian noise. We obtain a closed–form marginal velocity in terms of the score $\nabla_{\mathbf{x}}\log p_t(\mathbf{x})$ via a scaled Tweedie identity, and we isolate a time–only factor that motivates a \emph{rectified} velocity field with better numerical behaviour. Throughout, $\mathbf{X}_0\sim p_0$ is independent of $\mathbf{Z}\sim\mathcal{N}(\mathbf{0},\mathbf{I}_d)$, $\rho:[0,T]\to[0,1]$ is $C^1$ with $\rho(0)=0$, $\rho(T)=1$, and
\[
\mathbf{X}_t \;=\; (1-\rho(t))\,\mathbf{X}_0 \;+\; \rho(t)\,\mathbf{Z}, 
\qquad p_t=\text{\rm law}(\mathbf{X}_t).
\]
The conditional flow–matching (CFM) target \cite{albergo-2023-stochastic-interpolants} from Definition~\ref{def:interpolant} is

\[
\dot{\psi}_t(\mathbf{X}_0,\mathbf{Z})=\dot{\rho}(t)\,(\mathbf{Z}-\mathbf{X}_0).
\]

Our goal is the corresponding \emph{marginal} velocity $\mathbf{u}_t(\mathbf{x})=\mathbb{E}[\dot{\psi}_t(\mathbf{X}_0,\mathbf{Z})\mid \mathbf{X}_t=\mathbf{x}]$.

We first record a scaled Tweedie identity for additive Gaussian smoothing.

\begin{lemma}[Scaled Tweedie identity]\label{lem:scaled-tweedie}
Let $\mathbf{Y}=\mathbf{U}+\sigma\mathbf{Z}$ with $\mathbf{U}\in\mathbb{R}^d$ independent of $\mathbf{Z}\sim\mathcal{N}(\mathbf{0},\mathbf{I}_d)$ and $\sigma>0$, and let $p_{\mathbf{Y}}$ be the density of $\mathbf{Y}$. Then for all $\mathbf{y}$ where $p_{\mathbf{Y}}$ is differentiable,
\[
\mathbb{E}[\mathbf{U}\mid \mathbf{Y}=\mathbf{y}] \;=\; \mathbf{y} + \sigma^2 \nabla_{\mathbf{y}}\log p_{\mathbf{Y}}(\mathbf{y}),
\qquad
\mathbb{E}[\mathbf{Z}\mid \mathbf{Y}=\mathbf{y}] \;=\; -\,\sigma\,\nabla_{\mathbf{y}}\log p_{\mathbf{Y}}(\mathbf{y}).
\]
\end{lemma}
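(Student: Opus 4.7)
The plan is to obtain both identities from a direct computation of $\nabla_{\mathbf{y}}\log p_{\mathbf{Y}}$ using the convolution structure of $\mathbf{Y}=\mathbf{U}+\sigma\mathbf{Z}$. By independence, the density of $\mathbf{Y}$ is the convolution
\[
p_{\mathbf{Y}}(\mathbf{y}) \;=\; \int p_{\mathbf{U}}(\mathbf{u})\,\phi_\sigma(\mathbf{y}-\mathbf{u})\,\mathrm{d}\mathbf{u},
\qquad
\phi_\sigma(\mathbf{w}) \;=\; (2\pi\sigma^2)^{-d/2}\exp\!\big(-\tfrac{1}{2\sigma^2}\|\mathbf{w}\|_2^2\big),
\]
and the Gaussian kernel satisfies $\nabla_{\mathbf{y}}\phi_\sigma(\mathbf{y}-\mathbf{u}) = -\sigma^{-2}(\mathbf{y}-\mathbf{u})\,\phi_\sigma(\mathbf{y}-\mathbf{u})$. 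First I would differentiate under the integral sign (justified by the rapid decay of the Gaussian for any integrable $p_{\mathbf{U}}$) to get $\nabla_{\mathbf{y}}p_{\mathbf{Y}}(\mathbf{y}) = -\sigma^{-2}\int (\mathbf{y}-\mathbf{u})\,p_{\mathbf{U}}(\mathbf{u})\,\phi_\sigma(\mathbf{y}-\mathbf{u})\,\mathrm{d}\mathbf{u}$.

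Next I would divide by $p_{\mathbf{Y}}(\mathbf{y})$ and identify the posterior density $p_{\mathbf{U}\mid\mathbf{Y}}(\mathbf{u}\mid\mathbf{y}) = p_{\mathbf{U}}(\mathbf{u})\,\phi_\sigma(\mathbf{y}-\mathbf{u})/p_{\mathbf{Y}}(\mathbf{y})$, which is exactly the factor appearing under the integral. This recognises the right-hand side as a posterior expectation, giving
\[
\nabla_{\mathbf{y}}\log p_{\mathbf{Y}}(\mathbf{y}) \;=\; -\,\frac{1}{\sigma^2}\,\mathbb{E}[\mathbf{y}-\mathbf{U}\mid \mathbf{Y}=\mathbf{y}] \;=\; -\,\frac{1}{\sigma^2}\,\big(\mathbf{y} - \mathbb{E}[\mathbf{U}\mid \mathbf{Y}=\mathbf{y}]\big),
\]
and rearranging yields the first identity $\mathbb{E}[\mathbf{U}\mid \mathbf{Y}=\mathbf{y}] = \mathbf{y} + \sigma^2\nabla_{\mathbf{y}}\log p_{\mathbf{Y}}(\mathbf{y})$.

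For the second identity I would exploit the linear relation $\sigma\mathbf{Z} = \mathbf{Y}-\mathbf{U}$, which under the conditioning $\mathbf{Y}=\mathbf{y}$ gives $\sigma\,\mathbb{E}[\mathbf{Z}\mid \mathbf{Y}=\mathbf{y}] = \mathbf{y} - \mathbb{E}[\mathbf{U}\mid \mathbf{Y}=\mathbf{y}] = -\sigma^2\nabla_{\mathbf{y}}\log p_{\mathbf{Y}}(\mathbf{y})$, and dividing by $\sigma$ closes the argument. There is no serious obstacle here; the only non-routine point is the differentiation under the integral, which I would handle by invoking the Gaussian dominating bound $|\nabla_{\mathbf{y}}\phi_\sigma(\mathbf{y}-\mathbf{u})|\le \sigma^{-2}\|\mathbf{y}-\mathbf{u}\|_2\,\phi_\sigma(\mathbf{y}-\mathbf{u})$ together with dominated convergence on any compact neighbourhood of $\mathbf{y}$, so that the interchange of $\nabla_{\mathbf{y}}$ and the integral is valid at every point of differentiability of $p_{\mathbf{Y}}$.
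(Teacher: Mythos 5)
Your proposal is correct and follows essentially the same route as the paper's proof: write $p_{\mathbf{Y}}$ as the convolution of the law of $\mathbf{U}$ with the Gaussian kernel $\phi_\sigma$, differentiate under the integral, divide by $p_{\mathbf{Y}}$ to recognise the posterior expectation of $\mathbf{U}$, and then obtain the second identity from the linear relation $\sigma\mathbf{Z}=\mathbf{Y}-\mathbf{U}$. The only addition is your explicit dominated-convergence justification for the interchange of gradient and integral, which the paper leaves implicit.
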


\begin{proof}
Write $p_{\mathbf{Y}}(\mathbf{y})=\int p_{\mathbf{U}}(\mathbf{u})\,\phi_\sigma(\mathbf{y}-\mathbf{u})\,\mathrm{d}\mathbf{u}$ with $\phi_\sigma$ the $\mathcal{N}(\mathbf{0},\sigma^2\mathbf{I}_d)$ density. Then
\[
\nabla_{\mathbf{y}} p_{\mathbf{Y}}(\mathbf{y})
= \int p_{\mathbf{U}}(\mathbf{u})\,\nabla_{\mathbf{y}}\phi_\sigma(\mathbf{y}-\mathbf{u})\,\mathrm{d}\mathbf{u}
= -\frac{1}{\sigma^2}\int (\mathbf{y}-\mathbf{u})\,p_{\mathbf{U}}(\mathbf{u})\,\phi_\sigma(\mathbf{y}-\mathbf{u})\,\mathrm{d}\mathbf{u}.
\]
Divide by $p_{\mathbf{Y}}(\mathbf{y})$ to obtain
\(
\nabla_{\mathbf{y}}\log p_{\mathbf{Y}}(\mathbf{y})
= -\frac{1}{\sigma^2}\big(\mathbf{y}-\mathbb{E}[\mathbf{U}\mid \mathbf{Y}=\mathbf{y}]\big).
\)
This yields the first identity. Since $\mathbf{Z}=(\mathbf{Y}-\mathbf{U})/\sigma$, the second follows by taking conditional expectations and substituting the first.
\end{proof}

We now obtain a closed–form marginal velocity for straight–line couplings.

\begin{proposition}[Straight–line marginal velocity]\label{prop:straightline-u}
With $\mathbf{X}_t=(1-\rho)\mathbf{X}_0+\rho\mathbf{Z}$ and $\dot{\psi}_t=\dot{\rho}(\mathbf{Z}-\mathbf{X}_0)$ as above,
\[
\mathbf{u}_t(\mathbf{x})
\;=\; \mathbb{E}[\dot{\psi}_t\mid \mathbf{X}_t=\mathbf{x}]
\;=\; -\,\frac{\dot{\rho}(t)}{1-\rho(t)}\Big(\mathbf{x} + \rho(t)\,\nabla_{\mathbf{x}}\log p_t(\mathbf{x})\Big).
\]
\end{proposition}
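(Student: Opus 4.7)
The plan is to reduce the computation of the conditional expectation
\[
\mathbf{u}_t(\mathbf{x}) \;=\; \mathbb{E}[\dot{\psi}_t\mid \mathbf{X}_t=\mathbf{x}] \;=\; \dot{\rho}(t)\,\mathbb{E}[\mathbf{Z}-\mathbf{X}_0\mid \mathbf{X}_t=\mathbf{x}]
\]
to two applications of the scaled Tweedie identity of Lemma~\ref{lem:scaled-tweedie}. The key observation is that although $\mathbf{X}_t=(1-\rho)\mathbf{X}_0+\rho\mathbf{Z}$ is not literally of the form ``signal plus isotropic Gaussian'' with unit-scale signal, it becomes so after the regrouping $\mathbf{X}_t=\mathbf{U}+\rho(t)\mathbf{Z}$ with $\mathbf{U}:=(1-\rho(t))\mathbf{X}_0$, which is independent of $\mathbf{Z}$. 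This places us exactly in the setting of Lemma~\ref{lem:scaled-tweedie} with noise scale $\sigma=\rho(t)$.

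First, I would pull out the deterministic scalar $\dot{\rho}(t)$ by linearity of conditional expectation, reducing the task to computing $\mathbb{E}[\mathbf{Z}\mid \mathbf{X}_t=\mathbf{x}]$ and $\mathbb{E}[\mathbf{X}_0\mid \mathbf{X}_t=\mathbf{x}]$ separately. Lemma~\ref{lem:scaled-tweedie} then gives, writing $s_t:=\nabla_{\mathbf{x}}\log p_t$,
\[
\mathbb{E}[\mathbf{Z}\mid \mathbf{X}_t=\mathbf{x}] \;=\; -\rho(t)\,s_t(\mathbf{x}),
\qquad
\mathbb{E}[(1-\rho(t))\mathbf{X}_0 \mid \mathbf{X}_t=\mathbf{x}] \;=\; \mathbf{x}+\rho(t)^2\,s_t(\mathbf{x}),
\]
so that $\mathbb{E}[\mathbf{X}_0\mid \mathbf{X}_t=\mathbf{x}]=(1-\rho)^{-1}(\mathbf{x}+\rho^2 s_t(\mathbf{x}))$. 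Subtracting gives $\mathbb{E}[\mathbf{Z}-\mathbf{X}_0\mid \mathbf{X}_t=\mathbf{x}]$ as a linear combination of $\mathbf{x}$ and $s_t(\mathbf{x})$.

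The only calculation is the coefficient arithmetic on $s_t(\mathbf{x})$, which collapses via the identity $\rho + \frac{\rho^2}{1-\rho} = \frac{\rho}{1-\rho}$; everything then factors through $-\frac{1}{1-\rho}$, and multiplication by $\dot{\rho}(t)$ delivers the stated formula.

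The main obstacle I anticipate is not algebraic but conceptual: one must be careful to apply Lemma~\ref{lem:scaled-tweedie} to $\mathbf{U}=(1-\rho)\mathbf{X}_0$ rather than to $\mathbf{X}_0$ directly (the two conditional means differ by the factor $1-\rho$), and to verify the lemma's smoothness hypothesis on $p_t$. The latter is valid for $t\in(0,T]$ where $\rho(t)>0$, because $\mathbf{X}_t$ is then the convolution of $\mathrm{law}(\mathbf{U})$ with a non-degenerate Gaussian of scale $\rho(t)$, ensuring a smooth density. The endpoint $t=0$ is excluded in the usual way, consistent with the singular factor $1/(1-\rho)$ at $\rho=1$ and the absence of smoothing at $\rho=0$.
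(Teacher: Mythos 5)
Your proposal is correct and follows essentially the same route as the paper: both set $\mathbf{U}=(1-\rho)\mathbf{X}_0$ and $\sigma=\rho$, apply the scaled Tweedie identity (Lemma~\ref{lem:scaled-tweedie}) to obtain $\mathbb{E}[\mathbf{Z}\mid \mathbf{X}_t=\mathbf{x}]$ and $\mathbb{E}[\mathbf{X}_0\mid \mathbf{X}_t=\mathbf{x}]$, and finish with the same coefficient simplification $\rho+\rho^2/(1-\rho)=\rho/(1-\rho)$. Your added remarks on the regularity of $p_t$ for $\rho(t)>0$ are a harmless refinement beyond what the paper states.
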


\begin{proof}
Set $\mathbf{U}=(1-\rho)\mathbf{X}_0$, $\sigma=\rho$, and $\mathbf{Y}=\mathbf{X}_t=\mathbf{U}+\sigma\mathbf{Z}$. By Lemma~\ref{lem:scaled-tweedie},
\[
\mathbb{E}[\mathbf{X}_0\mid \mathbf{X}_t=\mathbf{x}]
= \frac{1}{1-\rho}\,\mathbb{E}[\mathbf{U}\mid \mathbf{Y}=\mathbf{x}]
= \frac{1}{1-\rho}\big(\mathbf{x}+\rho^2\nabla_{\mathbf{x}}\log p_t(\mathbf{x})\big),
\]
and
\(
\mathbb{E}[\mathbf{Z}\mid \mathbf{X}_t=\mathbf{x}] = -\,\rho\,\nabla_{\mathbf{x}}\log p_t(\mathbf{x}).
\)
Therefore
\[
\mathbf{u}_t(\mathbf{x})
= \dot{\rho}\Big(\mathbb{E}[\mathbf{Z}\mid \mathbf{X}_t=\mathbf{x}] - \mathbb{E}[\mathbf{X}_0\mid \mathbf{X}_t=\mathbf{x}]\Big)
= \dot{\rho}\Big(-\rho\nabla\log p_t - \frac{1}{1-\rho}(\mathbf{x}+\rho^2\nabla\log p_t)\Big),
\]
which simplifies to the stated expression.
\end{proof}

Proposition~\ref{prop:straightline-u} shows that straight–line flow matching produces a velocity of the form
\[
\mathbf{u}_t(\mathbf{x}) \;=\; -\,\kappa(t)\,\Big(\mathbf{x} + \rho(t)\,\nabla_{\mathbf{x}}\log p_t(\mathbf{x})\Big),
\qquad 
\kappa(t):=\frac{\dot{\rho}(t)}{1-\rho(t)}.
\]
The scalar factor $\kappa(t)$ depends only on time. This exposes a \emph{time–reparameterisation freedom}: multiplying a velocity field by a strictly positive function of $t$ leaves its state–space trajectories unchanged after an appropriate change of time variable. We isolate this freedom next and define a rectified velocity.

\begin{lemma}[Time–change invariance of trajectories]\label{lem:timechange}
Let $\kappa:[0,T]\to(0,\infty)$ be $C^1$, and let $\mathbf{w}(\mathbf{x},t)$ be measurable and locally Lipschitz in $\mathbf{x}$. Consider
\[
\dot{\mathbf{X}}_t = \kappa(t)\,\mathbf{w}(\mathbf{X}_t,t),\qquad \mathbf{X}_0=\mathbf{x}_0.
\]
Define the strictly increasing $C^1$ map $s(t):=\int_0^t \kappa(\tau)\,\mathrm{d}\tau$ and its inverse $t(s)$. Then $\mathbf{Y}_s:=\mathbf{X}_{t(s)}$ solves
\(
\frac{\mathrm{d}}{\mathrm{d}s}\mathbf{Y}_s=\mathbf{w}(\mathbf{Y}_s,t(s))
\)
with the same state–space image $\{\mathbf{Y}_s:s\in[0,s(T)]\}=\{\mathbf{X}_t:t\in[0,T]\}$.
\end{lemma}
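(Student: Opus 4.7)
The plan is to treat this as a deterministic time-change: introduce $s(t)=\int_0^t\kappa(\tau)\,\mathrm{d}\tau$ together with its inverse $t(s)$, set $\mathbf{Y}_s:=\mathbf{X}_{t(s)}$, and derive the claimed ODE by the chain rule. No density transport is involved, so the continuity-equation machinery from Proposition~\ref{prop:interpolant-continuity} is not needed here; the lemma is a purely pathwise reparameterisation statement.

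First I would verify the regularity that underpins the time change. Since $\kappa$ is $C^1$ and strictly positive on the compact interval $[0,T]$, the antiderivative $s$ is $C^1$ with $s'(t)=\kappa(t)>0$, hence a strictly increasing $C^1$ bijection $[0,T]\to[0,s(T)]$. The inverse function theorem then yields $t\in C^1([0,s(T)])$ with $t'(s)=1/\kappa(t(s))$. I would also note that the product $(t,\mathbf{x})\mapsto \kappa(t)\mathbf{w}(\mathbf{x},t)$ inherits local Lipschitz dependence in $\mathbf{x}$ from $\mathbf{w}$ (times the bounded factor $\kappa$ on $[0,T]$), so the Cauchy problem defining $\mathbf{X}_t$ is well posed, and we may take as given that its maximal solution contains $[0,T]$ as the statement implicitly assumes.

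With these pieces in place the chain-rule computation is essentially a one-liner:
\[
\frac{\mathrm{d}}{\mathrm{d}s}\mathbf{Y}_s
=\dot{\mathbf{X}}_{t(s)}\,t'(s)
=\kappa(t(s))\,\mathbf{w}(\mathbf{X}_{t(s)},t(s))\cdot\frac{1}{\kappa(t(s))}
=\mathbf{w}(\mathbf{Y}_s,t(s)),
\]
and the initial condition $\mathbf{Y}_0=\mathbf{X}_{t(0)}=\mathbf{X}_0=\mathbf{x}_0$ transfers directly. For the image claim, because $s\mapsto t(s)$ is a bijection $[0,s(T)]\to[0,T]$, the two parameterisations visit exactly the same set of states, so $\{\mathbf{Y}_s:s\in[0,s(T)]\}=\{\mathbf{X}_{t(s)}:s\in[0,s(T)]\}=\{\mathbf{X}_t:t\in[0,T]\}$.

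I do not expect a genuine obstacle: there is no stochastic term, no density to track, and the velocity is merely rescaled by a time-only positive factor. The only items that warrant care are administrative, namely invoking the inverse function theorem cleanly (which needs $\kappa>0$, not just $\kappa\ge 0$), applying the chain rule on the full interval rather than only where convenient, and being explicit that relabeling the parameter cannot alter the orbit in state space. Once those are in place, the identity above closes the proof.
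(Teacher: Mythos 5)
Your proposal is correct and follows essentially the same route as the paper: the chain rule with $t'(s)=1/\kappa(t(s))$ yields the reparameterised ODE, and strict monotonicity (bijectivity) of the time change gives equality of the state--space images. The extra bookkeeping you include (inverse function theorem, well-posedness of the original ODE) is sound but not a different argument, just a more explicit version of the paper's proof.
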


\begin{proof}
By the chain rule,
\(
\frac{\mathrm{d}}{\mathrm{d}s}\mathbf{Y}_s
= \frac{\mathrm{d}}{\mathrm{d}t}\mathbf{X}_t\big|_{t=t(s)}\cdot \frac{\mathrm{d}t}{\mathrm{d}s}
= \kappa(t(s))\,\mathbf{w}(\mathbf{Y}_s,t(s))\cdot \frac{1}{\kappa(t(s))}
= \mathbf{w}(\mathbf{Y}_s,t(s)).
\)
Strict monotonicity of $s$ gives equality of image sets.
\end{proof}

Motivated by Lemma~\ref{lem:timechange}, we \emph{rectify} the straight–line velocity by dividing out the scalar $\kappa(t)$.

\begin{definition}[Rectified straight–line velocity]\label{def:rectified}
For the straight–line interpolant, define
\[
\widetilde{\mathbf{u}}_t(\mathbf{x}) \;:=\; -\Big(\mathbf{x} + \rho(t)\,\nabla_{\mathbf{x}}\log p_t(\mathbf{x})\Big).
\]
\end{definition}

\begin{corollary}[Equivalence up to time reparameterisation]\label{cor:rectified-equivalence}
Let $\mathbf{X}_t$ solve $\dot{\mathbf{X}}_t=\mathbf{u}_t(\mathbf{X}_t)$ with $\mathbf{u}_t$ from Proposition~\ref{prop:straightline-u}. Let $\mathbf{Y}_s$ solve $\frac{\mathrm{d}}{\mathrm{d}s}\mathbf{Y}_s=\widetilde{\mathbf{u}}_{t(s)}(\mathbf{Y}_s)$ with $s(t)=\int_0^t \kappa(\tau)\,\mathrm{d}\tau$. Then the sets of trajectories in state space coincide:
\(
\{\mathbf{X}_t:t\in[0,T]\}=\{\mathbf{Y}_s:s\in[0,s(T)]\}.
\)
\end{corollary}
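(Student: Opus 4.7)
The plan is to reduce the claim to a direct application of Lemma~\ref{lem:timechange} by first extracting the time-only scalar factor that separates $\mathbf{u}_t$ from $\widetilde{\mathbf{u}}_t$. Concretely, comparing Proposition~\ref{prop:straightline-u} with Definition~\ref{def:rectified} gives the pointwise factorisation
\[
\mathbf{u}_t(\mathbf{x}) \;=\; \kappa(t)\,\widetilde{\mathbf{u}}_t(\mathbf{x}), \qquad \kappa(t)=\frac{\dot{\rho}(t)}{1-\rho(t)},
\]
valid for every $\mathbf{x}$ and every $t\in[0,T)$ (at $t=T$ the factor $1/(1-\rho)$ blows up, and one restricts to $[0,T-\delta]$ for any $\delta>0$, or equivalently treats the terminal time as a limit). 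This is the crucial algebraic observation that brings the corollary within reach of the preceding lemma.

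The next step is to check that $\kappa$ fulfils the hypotheses of Lemma~\ref{lem:timechange} on the working interval. Since $\rho$ is $C^1$ with $\rho(0)=0$ and $\rho(T)=1$, and since the straight-line interpolant is sensible only when $\rho$ strictly increases from $0$ to $1$, we take $\dot{\rho}>0$ so that $\kappa(t)>0$ and $\kappa\in C^0$ (and $C^1$ under the usual smoothness of $\rho$). Define $s(t)=\int_0^t \kappa(\tau)\,\mathrm{d}\tau$; then $s$ is strictly increasing and $C^1$, with a well-defined $C^1$ inverse $t(s)$. Applying Lemma~\ref{lem:timechange} with $\mathbf{w}(\mathbf{x},t):=\widetilde{\mathbf{u}}_t(\mathbf{x})$, the ODE $\dot{\mathbf{X}}_t=\kappa(t)\,\mathbf{w}(\mathbf{X}_t,t)=\mathbf{u}_t(\mathbf{X}_t)$ is converted, via the substitution $\mathbf{Y}_s=\mathbf{X}_{t(s)}$, into $\tfrac{\mathrm{d}}{\mathrm{d}s}\mathbf{Y}_s=\widetilde{\mathbf{u}}_{t(s)}(\mathbf{Y}_s)$, which is exactly the equation in the corollary. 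Lemma~\ref{lem:timechange} further asserts equality of the state-space images, giving the conclusion.

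The main (mild) obstacle is regularity at the endpoint $t=T$, where $\kappa(t)$ diverges because $\rho(T)=1$. This does not affect the trajectory identity itself, since any trajectory of the ODE $\dot{\mathbf{X}}_t=\mathbf{u}_t(\mathbf{X}_t)$ that is well-posed up to some $t^\star<T$ corresponds under the time change to a trajectory of the rectified ODE up to $s(t^\star)$, and both endpoints can be approached jointly as $t^\star\uparrow T$. A second, more technical, point is that $\widetilde{\mathbf{u}}_t(\mathbf{x})$ must be locally Lipschitz in $\mathbf{x}$ for Lemma~\ref{lem:timechange} to apply cleanly; this follows from the standing smoothness assumed on $p_t$ and hence on $\nabla\log p_t$, the same regularity used implicitly in Proposition~\ref{prop:straightline-u}. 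With these checks in place, the proof is essentially a one-line invocation of Lemma~\ref{lem:timechange} after isolating the scalar $\kappa(t)$.
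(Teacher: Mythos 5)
Your proof is correct and follows essentially the same route as the paper: isolate the time-only factor $\kappa(t)=\dot{\rho}(t)/(1-\rho(t))$ so that $\mathbf{u}_t=\kappa(t)\,\widetilde{\mathbf{u}}_t$, then invoke Lemma~\ref{lem:timechange} with $\mathbf{w}=\widetilde{\mathbf{u}}_t$. Your extra remarks on the blow-up of $\kappa$ near $t=T$ and on local Lipschitzness of $\widetilde{\mathbf{u}}_t$ are sensible additions that the paper leaves implicit, but they do not change the argument.
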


\begin{proof}
Apply Lemma~\ref{lem:timechange} with $\mathbf{w}(\mathbf{x},t)=\widetilde{\mathbf{u}}_t(\mathbf{x})$ and $\kappa(t)$ as above.
\end{proof}

Two immediate consequences are useful in practice. First, one may choose any monotone $\rho$ without changing the set of state–space paths; differences only rescale traversal speed along those paths. Second, the rectified field $\widetilde{\mathbf{u}}_t$ removes the singular amplification $\kappa(t)=\dot{\rho}/(1-\rho)$ present near $\rho\!\uparrow\!1$ for linear schedules, leading to more uniform step sizes in ODE solvers.

Finally, we connect this form to the familiar diffusion notation. Writing the common DDPM parameterisation as
\(
\mathbf{X}_t=\sqrt{\bar{\alpha}_t}\,\mathbf{X}_0+\sqrt{1-\bar{\alpha}_t}\,\mathbf{Z}
\)
with $\bar{\alpha}_t\in(0,1]$, the straight–line schedule corresponds to $\rho(t)=\sqrt{1-\bar{\alpha}_t}$. Proposition~\ref{prop:straightline-u} then yields
\[
\mathbf{u}_t(\mathbf{x}) \;=\; -\,\frac{\dot{\bar{\alpha}}_t}{2\,\bar{\alpha}_t}\Big(\mathbf{x} + \sqrt{1-\bar{\alpha}_t}\,\nabla_{\mathbf{x}}\log p_t(\mathbf{x})\Big),
\qquad
\widetilde{\mathbf{u}}_t(\mathbf{x}) \;=\; -\Big(\mathbf{x} + \sqrt{1-\bar{\alpha}_t}\,\nabla_{\mathbf{x}}\log p_t(\mathbf{x})\Big).
\]
A precise equivalence to DDIM via time re-parameterisation is established in the next subsection.

\subsection{DDIM as Flow Matching (Time Re-parameterisation)}

We now connect the straight–line (and rectified) flow–matching construction to DDIM. The key facts are: (i) the straight–line velocity $\mathbf{u}_t$ from Proposition~\ref{prop:straightline-u} transports the same marginals as the VP probability–flow ODE (Theorem~\ref{thm:pflow}); (ii) rectifying by a positive time–only factor preserves state–space trajectories (Lemma~\ref{lem:timechange}); and (iii) under the DDPM parameterisation
\[
\mathbf{X}_t \;=\; \sqrt{\bar{\alpha}_t}\,\mathbf{X}_0 \;+\; \sqrt{1-\bar{\alpha}_t}\,\mathbf{Z},
\qquad \mathbf{Z}\sim\mathcal{N}(\mathbf{0},\mathbf{I}_d),\ \ \mathbf{X}_0\sim p_0,
\]
the conditional identities (scaled Tweedie; Lemma~\ref{lem:scaled-tweedie}) yield an \emph{$\boldsymbol\epsilon$–form} for the velocity via $\nabla_{\mathbf{x}}\log p_t(\mathbf{x}) = -\frac{1}{\sqrt{1-\bar{\alpha}_t}}\ \mathbb{E}[\boldsymbol\epsilon\mid \mathbf{X}_t=\mathbf{x}]$, where $\boldsymbol\epsilon=(\mathbf{X}_t-\sqrt{\bar{\alpha}_t}\,\mathbf{X}_0)/\sqrt{1-\bar{\alpha}_t}$.

We first make precise the trajectory–level equivalence between straight–line flow matching and the VP probability–flow ODE.

\begin{proposition}[Trajectory equivalence up to time reparameterisation]\label{prop:traj-equivalence}
Let $p_t$ be the VP marginals $\,\mathcal{L}(\mathbf{X}_t)=\sqrt{\bar{\alpha}_t}\,\mathbf{X}_0+\sqrt{1-\bar{\alpha}_t}\,\mathbf{Z}$, and let
\[
\mathbf{u}_t(\mathbf{x}) \;=\; -\,\frac{\dot{\bar{\alpha}}_t}{2\,\bar{\alpha}_t}\Big(\mathbf{x} + \sqrt{1-\bar{\alpha}_t}\,\nabla_{\mathbf{x}}\log p_t(\mathbf{x})\Big),
\quad
\widetilde{\mathbf{u}}_t(\mathbf{x}) \;=\; -\Big(\mathbf{x} + \sqrt{1-\bar{\alpha}_t}\,\nabla_{\mathbf{x}}\log p_t(\mathbf{x})\Big)
\]
be, respectively, the straight–line velocity (Proposition~\ref{prop:straightline-u} with $\rho(t)=\sqrt{1-\bar{\alpha}_t}$) and its rectified version (Definition~\ref{def:rectified}). Let
\[
\mathbf{v}_{\mathrm{PF}}(\mathbf{x},t) \;=\; -\frac{\beta(t)}{2}\Big(\mathbf{x}+\nabla_{\mathbf{x}}\log p_t(\mathbf{x})\Big)
\]
be the VP probability–flow velocity from Theorem~\ref{thm:pflow}. Then there exist strictly increasing $C^1$ re-parameterisations of time $s_1(t)$ and $s_2(t)$ such that the ODEs
\[
\dot{\mathbf{X}}_t=\mathbf{u}_t(\mathbf{X}_t,t),\qquad
\frac{\mathrm{d}}{\mathrm{d}s_1}\mathbf{Y}_{s_1}=\widetilde{\mathbf{u}}_{t(s_1)}(\mathbf{Y}_{s_1}),\qquad
\frac{\mathrm{d}}{\mathrm{d}s_2}\mathbf{Z}_{s_2}=\mathbf{v}_{\mathrm{PF}}(\mathbf{Z}_{s_2},t(s_2))
\]
have the \emph{same state–space image} of trajectories (i.e., the same set of curves $\{\gamma\subset\mathbb{R}^d\}$ up to a change of speed). In particular, each of these ODEs transports the marginals $p_t$.
\end{proposition}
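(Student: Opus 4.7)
The plan is to derive the whole claim as two successive invocations of Lemma~\ref{lem:timechange}: once to link $\mathbf{u}$ with $\widetilde{\mathbf{u}}$, and once to link $\mathbf{u}$ (or $\widetilde{\mathbf{u}}$) with $\mathbf{v}_{\mathrm{PF}}$. Because Lemma~\ref{lem:timechange} delivers trajectory equivalence whenever two velocity fields differ by a strictly positive time-only factor $\lambda(t)$, the task reduces to exhibiting such a $\lambda$ for each pair and then setting $s_j(t):=\int_0^t \lambda_j(\tau)\,\mathrm{d}\tau$. The marginal-transport conclusion stated at the end of the proposition is then automatic: Proposition~\ref{prop:interpolant-continuity} covers $\mathbf{u}$ (and $\widetilde{\mathbf{u}}$ after the change of time), while Theorem~\ref{thm:pflow} covers $\mathbf{v}_{\mathrm{PF}}$.

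For the first pair I would read off directly from Proposition~\ref{prop:straightline-u} and Definition~\ref{def:rectified} the pointwise identity $\mathbf{u}_t(\mathbf{x})=\kappa(t)\,\widetilde{\mathbf{u}}_t(\mathbf{x})$ with $\kappa(t)=\dot{\rho}(t)/(1-\rho(t))$, specialised at $\rho(t)=\sqrt{1-\bar{\alpha}_t}$. Strict monotonicity of $\bar{\alpha}_t$ on $[0,T]$ together with $\rho\in(0,1)$ gives $\kappa>0$ and $\kappa\in C^1$, so $s_1(t):=\int_0^t \kappa(\tau)\,\mathrm{d}\tau$ is a valid strictly increasing $C^1$ reparameterisation. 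Lemma~\ref{lem:timechange} applied with $\mathbf{w}=\widetilde{\mathbf{u}}$ then identifies the two trajectory images up to a change of speed, which is the first half of the claim.

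The second pair is where the real work lies. Substituting the VP identity $\beta(t)=-\dot{\bar{\alpha}}_t/\bar{\alpha}_t$ rewrites $\mathbf{v}_{\mathrm{PF}}$ as $\tfrac{\dot{\bar{\alpha}}_t}{2\bar{\alpha}_t}\bigl(\mathbf{x}+\nabla_{\mathbf{x}}\log p_t(\mathbf{x})\bigr)$, which shares the scalar prefactor $\dot{\bar{\alpha}}_t/(2\bar{\alpha}_t)$ with $\mathbf{u}_t$. The leftover discrepancy sits inside the bracket: the score carries coefficient $\sqrt{1-\bar{\alpha}_t}$ in $\mathbf{u}_t$ but coefficient $1$ in $\mathbf{v}_{\mathrm{PF}}$. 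This is the principal obstacle, because the two fields are not pointwise parallel on $\mathbb{R}^d$ as functions of $\mathbf{x}$, so a naive factorisation $\mathbf{v}_{\mathrm{PF}}=\lambda(t)\mathbf{u}_t$ fails on all of state space. What I would instead pursue is that, restricted to the support of the common marginals $p_t$, both fields solve the same continuity equation with the same initial data; the Liouville/Fokker--Planck uniqueness used in Lemma~\ref{lem:liouville} then forces the difference to reduce to a divergence-free perturbation that is absorbable into a time-only rescaling along trajectories, which I would then read off as the positive scalar $\lambda_2(t)$. Setting $s_2(t):=\int_0^t \lambda_2(\tau)\,\mathrm{d}\tau$ and applying Lemma~\ref{lem:timechange} a second time closes the chain; verifying the strict positivity and $C^1$ regularity of $\lambda_2$ under the VP schedule, and the compatibility of the initial conditions across the three ODEs, is the delicate bookkeeping, after which the marginal-preservation statements follow by composing the identifications with Proposition~\ref{prop:interpolant-continuity} and Theorem~\ref{thm:pflow}.
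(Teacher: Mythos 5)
Your handling of the first pair is exactly the paper's: $\mathbf{u}_t$ and $\widetilde{\mathbf{u}}_t$ differ by a positive time-only factor, so Lemma~\ref{lem:timechange} identifies their trajectory images, and the marginal statements come from Proposition~\ref{prop:interpolant-continuity} and Theorem~\ref{thm:pflow}. The genuine gap is in your second step, linking $\mathbf{u}_t$ (or $\widetilde{\mathbf{u}}_t$) to $\mathbf{v}_{\mathrm{PF}}$. You correctly observe that these two fields are \emph{not} pointwise parallel (the score carries coefficient $\sqrt{1-\bar{\alpha}_t}$ in one and $1$ in the other), but pointwise proportionality by a positive time-only scalar is precisely the hypothesis of Lemma~\ref{lem:timechange}; hence the advertised ``second invocation'' of that lemma with some $\lambda_2(t)$ cannot be carried out, and indeed you never exhibit $\lambda_2$. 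The fallback you sketch --- both fields transport the same $p_t$, so their difference is a $p_t$-divergence-free field that is ``absorbable into a time-only rescaling along trajectories'' --- is not a valid inference. Equality of time marginals under two velocity fields does not imply equality of their integral curves up to reparameterisation: adding any $\mathbf{w}$ with $\nabla_{\mathbf{x}}\!\cdot(p_t\,\mathbf{w})=0$ (for instance a rotational field when $p_t$ is rotationally symmetric) leaves every marginal unchanged while altering the state-space curves. Continuity-equation uniqueness therefore buys you coincidence of laws at each time, which is the final sentence of the proposition, but not the ``same state-space image'' claim; the passage from marginal equality to trajectory equality is exactly the missing step, and a time-only rescaling cannot supply it for two non-parallel fields.

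For comparison, the paper's own proof follows the same route --- Lemma~\ref{lem:timechange} with the scalar prefactor for the $\mathbf{u}_t$--$\widetilde{\mathbf{u}}_t$ pair, then an appeal to uniqueness of the continuity equation plus ``standard ODE theory'' to align characteristics for the $\mathbf{v}_{\mathrm{PF}}$ pair --- so in spirit your proposal and the paper agree, and the paper is equally terse at the very point where your argument breaks. The difference is that you commit to a concrete mechanism (a second positive scalar $\lambda_2(t)$ fed into Lemma~\ref{lem:timechange}) which your own observation about the mismatched score coefficients rules out; a complete argument would have to use structural information beyond marginal equality, e.g.\ work along the straight-line characteristics of Proposition~\ref{prop:straightline-u} and Definition~\ref{def:rectified} where both velocities act within the span of $\mathbf{x}$ and the conditional noise, rather than treating the discrepancy as a removable time reparameterisation.
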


\begin{proof}
By Proposition~\ref{prop:interpolant-continuity} with the straight–line interpolant and $\rho(t)=\sqrt{1-\bar{\alpha}_t}$, $\dot{\mathbf{X}}_t=\mathbf{u}_t(\mathbf{X}_t,t)$ transports $p_t$. Lemma~\ref{lem:timechange} with $\kappa(t)=\dot{\bar{\alpha}}_t/(2\bar{\alpha}_t)>0$ shows that $\mathbf{u}_t$ and $\widetilde{\mathbf{u}}_t$ generate identical images of trajectories under a time change $s_1(t)=\int_0^t \kappa(\tau)\,\mathrm{d}\tau$. The VP probability–flow ODE also transports $p_t$ by Theorem~\ref{thm:pflow}, so its solutions share the same marginals. Uniqueness of solutions of the continuity equation for a given $p_t$ implies that any two such velocity fields yield flows whose distributions coincide at each time; by standard ODE theory one can re-parameterise time along characteristics to align their images, giving $s_2(t)$.
\end{proof}

Proposition~\ref{prop:traj-equivalence} justifies working with the \emph{rectified} field $\widetilde{\mathbf{u}}_t$, which admits a convenient $\boldsymbol\epsilon$–form. Substituting $\nabla_{\mathbf{x}}\log p_t(\mathbf{x}) = -\frac{1}{\sqrt{1-\bar{\alpha}_t}}\ \mathbb{E}[\boldsymbol\epsilon\mid \mathbf{X}_t=\mathbf{x}]$ into $\widetilde{\mathbf{u}}_t$ gives
\[
\widetilde{\mathbf{u}}_t(\mathbf{x})
\;=\; -\Big(\mathbf{x} - \mathbb{E}[\boldsymbol\epsilon\mid \mathbf{X}_t=\mathbf{x}]\Big).
\]
Replacing the conditional expectation by a learned predictor $\hat{\boldsymbol\epsilon}_\theta(\mathbf{x},t)$ yields the practical velocity $\widetilde{\mathbf{u}}^\theta_t(\mathbf{x})=-(\mathbf{x}-\hat{\boldsymbol\epsilon}_\theta(\mathbf{x},t))$.

The next result shows that a one–step integration of this ODE along any decreasing time grid exactly reproduces the deterministic DDIM update (with $\eta=0$), once we map back to the $\bar{\alpha}$ parameterisation.

\begin{theorem}[DDIM update as rectified flow integration]\label{thm:ddim-as-flow}
Fix a decreasing grid $T=\tau_0>\tau_1>\cdots>\tau_K=0$ and suppose $\mathbf{x}_{\tau_k}$ lies on a straight–line characteristic
\(
\mathbf{x}_{\tau_k}=\sqrt{\bar{\alpha}_{\tau_k}}\,\mathbf{x}_0+\sqrt{1-\bar{\alpha}_{\tau_k}}\,\boldsymbol\epsilon
\)
for some (time–invariant) endpoints $(\mathbf{x}_0,\boldsymbol\epsilon)$. Let $\hat{\boldsymbol\epsilon}_\theta$ be an estimator of $\mathbb{E}[\boldsymbol\epsilon\mid \mathbf{X}_{\tau_k}=\mathbf{x}_{\tau_k}]$ and define
\[
\hat{\mathbf{x}}_0(\mathbf{x}_{\tau_k},\tau_k)
:= \frac{\mathbf{x}_{\tau_k}-\sqrt{1-\bar{\alpha}_{\tau_k}}\ \hat{\boldsymbol\epsilon}_\theta(\mathbf{x}_{\tau_k},\tau_k)}{\sqrt{\bar{\alpha}_{\tau_k}}}.
\]
Then the unique point on the same characteristic at time $\tau_{k-1}$ is given by the \emph{DDIM update}
\begin{equation}\label{eq:ddim}
\mathbf{x}_{\tau_{k-1}}
\;=\; \sqrt{\bar{\alpha}_{\tau_{k-1}}}\ \hat{\mathbf{x}}_0(\mathbf{x}_{\tau_k},\tau_k)
\;+\; \sqrt{1-\bar{\alpha}_{\tau_{k-1}}}\ \hat{\boldsymbol\epsilon}_\theta(\mathbf{x}_{\tau_k},\tau_k).
\end{equation}
Moreover, \eqref{eq:ddim} is the one–step exact solution of the rectified ODE
\(
\frac{\mathrm{d}}{\mathrm{d}s}\mathbf{x} = -\big(\mathbf{x}-\hat{\boldsymbol\epsilon}_\theta(\mathbf{x},t(s))\big)
\)
between $s_k$ and $s_{k-1}$, for any strictly increasing re-parameterisation $s\!\mapsto\!t(s)$ with $t(s_k)=\tau_k$ and $t(s_{k-1})=\tau_{k-1}$.
\end{theorem}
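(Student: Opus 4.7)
The strategy is to prove the two assertions in turn: first, the algebraic identity that (\ref{eq:ddim}) lands on the characteristic at time $\tau_{k-1}$; and second, that this landing point coincides with the exact one-step integration of the rectified ODE restricted to that characteristic.

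For the algebraic part, I would use the characteristic hypothesis to pin down the two predictors. Since $\mathbf{x}_{\tau_k}=\sqrt{\bar{\alpha}_{\tau_k}}\mathbf{x}_0+\sqrt{1-\bar{\alpha}_{\tau_k}}\boldsymbol{\epsilon}$ with $(\mathbf{x}_0,\boldsymbol{\epsilon})$ time-invariant, applying the scaled Tweedie identity of Lemma~\ref{lem:scaled-tweedie} (with $\mathbf{U}=\sqrt{\bar{\alpha}_{\tau_k}}\mathbf{X}_0$, $\sigma=\sqrt{1-\bar{\alpha}_{\tau_k}}$) gives $\mathbb{E}[\boldsymbol{\epsilon}\mid\mathbf{X}_{\tau_k}=\mathbf{x}_{\tau_k}]=\boldsymbol{\epsilon}$, so the estimator hypothesis yields $\hat{\boldsymbol{\epsilon}}_\theta(\mathbf{x}_{\tau_k},\tau_k)=\boldsymbol{\epsilon}$. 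Substituting this into the definition of $\hat{\mathbf{x}}_0$ and cancelling the noise term against the noise part of $\mathbf{x}_{\tau_k}$ yields $\hat{\mathbf{x}}_0(\mathbf{x}_{\tau_k},\tau_k)=\mathbf{x}_0$. Inserting both predictors into the right-hand side of (\ref{eq:ddim}) collapses it to $\sqrt{\bar{\alpha}_{\tau_{k-1}}}\mathbf{x}_0+\sqrt{1-\bar{\alpha}_{\tau_{k-1}}}\boldsymbol{\epsilon}$, which is by definition the unique point on the characteristic at $\tau_{k-1}$; uniqueness follows from the injectivity of the linear map $(\mathbf{x}_0,\boldsymbol{\epsilon})\mapsto\sqrt{\bar{\alpha}_{\tau_{k-1}}}\mathbf{x}_0+\sqrt{1-\bar{\alpha}_{\tau_{k-1}}}\boldsymbol{\epsilon}$ at a fixed time.

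For the ODE part, the key is that the predictor is frozen along the characteristic. Once one knows $\hat{\boldsymbol{\epsilon}}_\theta(\mathbf{x}(s),t(s))=\boldsymbol{\epsilon}$ for every $s$ with $\mathbf{x}(s)$ on the characteristic, the rectified ODE degenerates to the linear autonomous equation $\tfrac{\mathrm{d}}{\mathrm{d}s}\mathbf{x}=-(\mathbf{x}-\boldsymbol{\epsilon})$, which integrates in closed form by variation of constants. I would then read off the endpoint of this integration over an interval $\Delta s=s_{k-1}-s_k$ determined by the reparameterisation $s\mapsto t(s)$, and compare with the $(\mathbf{x}_0,\boldsymbol{\epsilon})$-coordinates of the DDIM update. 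Time-change invariance (Lemma~\ref{lem:timechange} and Corollary~\ref{cor:rectified-equivalence}) then ensures that although $\Delta s$ depends on the choice of $t(s)$, the state-space endpoint is reparameterisation-invariant, which is why the formula does not depend on the specific $t(s)$ beyond its boundary values.

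The principal obstacle I anticipate is the compatibility between the exponential trajectory of the linearised ODE and the DDPM characteristic, which is not a straight segment in $\mathbb{R}^d$: one must verify that the single scalar degree of freedom in the one-step solve ($\Delta s$) is sufficient to simultaneously match the coefficients $\sqrt{\bar{\alpha}_{\tau_{k-1}}}$ and $\sqrt{1-\bar{\alpha}_{\tau_{k-1}}}$ prescribed by DDIM. The clean resolution is structural: the rectified field was constructed in Definition~\ref{def:rectified} precisely by dividing out the scalar $\kappa(t)=\dot{\rho}/(1-\rho)$ of Proposition~\ref{prop:straightline-u} that encodes the $\bar{\alpha}$-schedule, and the change of variables that absorbs this factor is the one whose exponential integrator recovers the two coefficients together. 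Once this observation is in place, the ODE step no longer carries independent content beyond the algebraic identity of the first part, and the theorem is established.
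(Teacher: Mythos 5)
Your first step rests on a false identification. Lemma~\ref{lem:scaled-tweedie} gives $\mathbb{E}[\boldsymbol\epsilon\mid \mathbf{X}_{\tau_k}=\mathbf{x}_{\tau_k}]=-\sqrt{1-\bar{\alpha}_{\tau_k}}\,\nabla_{\mathbf{x}}\log p_{\tau_k}(\mathbf{x}_{\tau_k})$, i.e.\ the \emph{posterior mean} of the noise as a function of the observation; it does not return the particular realisation $\boldsymbol\epsilon$ sitting on the characteristic, because infinitely many endpoint pairs $(\mathbf{x}_0,\boldsymbol\epsilon)$ are compatible with the same $\mathbf{x}_{\tau_k}$ and the posterior is non-degenerate unless $p_0$ is a point mass. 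Moreover the theorem only assumes $\hat{\boldsymbol\epsilon}_\theta$ is an \emph{estimator} of that conditional mean. So your chain $\hat{\boldsymbol\epsilon}_\theta(\mathbf{x}_{\tau_k},\tau_k)=\boldsymbol\epsilon$, hence $\hat{\mathbf{x}}_0=\mathbf{x}_0$, hence \eqref{eq:ddim} is the true characteristic point, proves the claim only in the idealised exact-prediction, degenerate-posterior case. The paper's argument needs none of this: it solves the characteristic relation at $\tau_k$ for the endpoints, notes that by the very definition of $\hat{\mathbf{x}}_0$ the plug-in pair satisfies $\mathbf{x}_{\tau_k}=\sqrt{\bar{\alpha}_{\tau_k}}\,\hat{\mathbf{x}}_0+\sqrt{1-\bar{\alpha}_{\tau_k}}\,\hat{\boldsymbol\epsilon}_\theta$ identically, and evaluates \emph{that} characteristic (the one through the estimated endpoints) at $\tau_{k-1}$; no claim that the estimator recovers the realised noise is made or needed.

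The ODE half is not established in your write-up. You correctly spot the obstruction: with the predictor frozen at its value at $\mathbf{x}_{\tau_k}$, the exact solution of $\frac{\mathrm{d}}{\mathrm{d}s}\mathbf{x}=-(\mathbf{x}-\hat{\boldsymbol\epsilon})$ is the affine contraction $\mathbf{x}(s)=\hat{\boldsymbol\epsilon}+e^{-(s-s_k)}(\mathbf{x}_{\tau_k}-\hat{\boldsymbol\epsilon})$, which stays on the segment through $\mathbf{x}_{\tau_k}$ and $\hat{\boldsymbol\epsilon}$; writing $\mu=e^{-(s_{k-1}-s_k)}$ and comparing with \eqref{eq:ddim} forces simultaneously $\mu=\sqrt{\bar{\alpha}_{\tau_{k-1}}/\bar{\alpha}_{\tau_k}}$ (from the $\hat{\mathbf{x}}_0$ coefficient) and $\mu=\bigl(1-\sqrt{1-\bar{\alpha}_{\tau_{k-1}}}\bigr)/\bigl(1-\sqrt{1-\bar{\alpha}_{\tau_k}}\bigr)$ (from the $\hat{\boldsymbol\epsilon}$ coefficient), which differ for generic schedules. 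Your proposed resolution — that rectification ``divides out'' $\kappa(t)$ and the right change of variables ``recovers the two coefficients together'' — is an assertion, not an argument: you exhibit no $t(s)$ and no computation, and Lemma~\ref{lem:timechange} can only rescale speed along a trajectory, it cannot move the one-step endpoint off that segment. The paper's proof closes this (tersely) by arguing at the level of characteristics: the ``one-step exact solution'' is interpreted as following the characteristic through the plugged-in endpoints, with the time reparameterisation chosen to match the boundary data at $s_k$ and $s_{k-1}$. To repair your proposal you must either carry out that boundary-matching construction explicitly or otherwise justify the coefficient reconciliation; as written, the second assertion of the theorem remains unproved.
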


\begin{proof}
Under the straight–line (equivalently, rectified) flow, characteristics are exactly the lines
\(
\gamma(t) = \sqrt{\bar{\alpha}_t}\,\mathbf{x}_0+\sqrt{1-\bar{\alpha}_t}\,\boldsymbol\epsilon
\)
for fixed $(\mathbf{x}_0,\boldsymbol\epsilon)$; this follows from Proposition~\ref{prop:interpolant-continuity} and Corollary~\ref{cor:rectified-equivalence}. Conditioning on $\mathbf{x}_{\tau_k}$,
\[
\mathbf{x}_0 = \frac{\mathbf{x}_{\tau_k}-\sqrt{1-\bar{\alpha}_{\tau_k}}\,\boldsymbol\epsilon}{\sqrt{\bar{\alpha}_{\tau_k}}},\qquad
\boldsymbol\epsilon = \frac{\mathbf{x}_{\tau_k}-\sqrt{\bar{\alpha}_{\tau_k}}\,\mathbf{x}_0}{\sqrt{1-\bar{\alpha}_{\tau_k}}}.
\]

Replacing the unknown endpoints by their conditional estimators $(\hat{\mathbf{x}}_0,\hat{\boldsymbol\epsilon}_\theta)$ and evaluating the same characteristic at $\tau_{k-1}$ gives \eqref{eq:ddim}. For the ODE statement, write the rectified dynamics in the $\boldsymbol\epsilon$-form
\[
\frac{\mathrm{d}}{\mathrm{d}s}\mathbf{x} = -\mathbf{x}+\hat{\boldsymbol\epsilon}_\theta(\mathbf{x},t(s)).
\]

Along a characteristic, $\hat{\boldsymbol\epsilon}_\theta(\mathbf{x},t)$ is held fixed at its value conditioned on $\mathbf{x}_{\tau_k}$ (piecewise–constant control between grid times), so the linear ODE solves exactly to an affine contraction towards $\hat{\boldsymbol\epsilon}_\theta$; matching boundary conditions at $s_k$ and $s_{k-1}$ and then mapping back to $t$ recovers \eqref{eq:ddim}.
\end{proof}

\begin{corollary}[Agreement with VP probability–flow DDIM ($\eta=0$)]\label{cor:ddim-equivalence}
Let $\mathbf{x}_{\tau_{k-1}}$ be generated from $\mathbf{x}_{\tau_k}$ by \eqref{eq:ddim}. This update coincides with the deterministic DDIM sampler ($\eta=0$) obtained by discretising the VP probability–flow ODE of Theorem~\ref{thm:pflow} in $\bar{\alpha}$–time, when the score is parameterised via $\hat{\boldsymbol\epsilon}_\theta$ using the identity $\nabla_{\mathbf{x}}\log p_t(\mathbf{x}) = -\,\hat{\boldsymbol\epsilon}_\theta(\mathbf{x},t)/\sqrt{1-\bar{\alpha}_t}$.
\end{corollary}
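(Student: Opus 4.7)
The plan is to chain three results of this subsection into a direct identification. First, I would invoke Theorem~\ref{thm:ddim-as-flow}, which already establishes that the update (\ref{eq:ddim}) is the exact one-step solution of the rectified straight-line ODE $\mathrm{d}\mathbf{x}/\mathrm{d}s = -(\mathbf{x} - \hat{\boldsymbol\epsilon}_\theta(\mathbf{x},t(s)))$ along a characteristic between the grid times $s_k$ and $s_{k-1}$, with the noise predictor held at its value at $(\mathbf{x}_{\tau_k},\tau_k)$. Second, I would invoke Proposition~\ref{prop:traj-equivalence}: the rectified flow and the VP probability--flow velocity $\mathbf{v}_{\mathrm{PF}}(\mathbf{x},t) = -\tfrac{\beta(t)}{2}\bigl(\mathbf{x} + \nabla_{\mathbf{x}}\log p_t(\mathbf{x})\bigr)$ share their state-space characteristics up to a strictly increasing $C^1$ time reparameterisation, and both transport the same marginals $p_t$. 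Substituting the given identity $\nabla_{\mathbf{x}}\log p_t(\mathbf{x}) = -\hat{\boldsymbol\epsilon}_\theta(\mathbf{x},t)/\sqrt{1-\bar{\alpha}_t}$ into $\mathbf{v}_{\mathrm{PF}}$ rewrites the VP probability--flow field in the $(\mathbf{x},\hat{\boldsymbol\epsilon}_\theta)$-basis as a positive time-only rescaling of the rectified field, so Lemma~\ref{lem:timechange} lets me integrate either ODE along the same characteristic and reach the same endpoint.

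Third, I would match formulas directly. Section~\ref{sec:DDIM} derives the deterministic ($\sigma_t^2{=}0$) DDIM sampler as
\[
\mathbf{x}_{\tau_{k-1}} = \sqrt{\bar{\alpha}_{\tau_{k-1}}}\,\hat{\mathbf{x}}_0 + \sqrt{\tfrac{1-\bar{\alpha}_{\tau_{k-1}}}{1-\bar{\alpha}_{\tau_k}}}\bigl(\mathbf{x}_{\tau_k} - \sqrt{\bar{\alpha}_{\tau_k}}\,\hat{\mathbf{x}}_0\bigr),
\]
and substituting $\mathbf{x}_{\tau_k} - \sqrt{\bar{\alpha}_{\tau_k}}\,\hat{\mathbf{x}}_0 = \sqrt{1-\bar{\alpha}_{\tau_k}}\,\hat{\boldsymbol\epsilon}_\theta(\mathbf{x}_{\tau_k},\tau_k)$ cancels the $\sqrt{1-\bar{\alpha}_{\tau_k}}$ factors and collapses the update to $\sqrt{\bar{\alpha}_{\tau_{k-1}}}\,\hat{\mathbf{x}}_0 + \sqrt{1-\bar{\alpha}_{\tau_{k-1}}}\,\hat{\boldsymbol\epsilon}_\theta$, which is exactly (\ref{eq:ddim}). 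Combining the three steps yields the claimed coincidence between the flow-matching update and the deterministic DDIM sampler.

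The main obstacle is a short piece of bookkeeping in the second step: to apply Lemma~\ref{lem:timechange} one needs the rectified field and $\mathbf{v}_{\mathrm{PF}}$ to differ by a strictly positive time-only factor. Using the VP scaling $\sqrt{\bar{\alpha}_t}=\exp\!\bigl(-\tfrac{1}{2}\int_0^t\beta(\tau)\,\mathrm{d}\tau\bigr)$, one obtains $\dot{\bar{\alpha}}_t/\bar{\alpha}_t = -\beta(t)$, so the straight-line scalar $\kappa(t) = -\dot{\bar{\alpha}}_t/(2\bar{\alpha}_t)$ equals $\beta(t)/2$, matching the coefficient of $\mathbf{v}_{\mathrm{PF}}$ and confirming that the time change supplied by Proposition~\ref{prop:traj-equivalence} lines up with the natural $\bar{\alpha}$-parameterised grid used in (\ref{eq:ddim}). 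With this check in hand, the integration-along-characteristic arguments of the first two steps give the same trajectory endpoint as the closed-form DDIM update of the third, and the corollary follows.
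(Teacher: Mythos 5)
Your step (3) is fine and is actually a useful explicit check the paper leaves implicit: substituting $\mathbf{x}_{\tau_k}-\sqrt{\bar{\alpha}_{\tau_k}}\,\hat{\mathbf{x}}_0=\sqrt{1-\bar{\alpha}_{\tau_k}}\,\hat{\boldsymbol\epsilon}_\theta$ into the $\sigma_t^2=0$ update of Section~\ref{sec:DDIM} does collapse it to \eqref{eq:ddim}. Step (1), citing Theorem~\ref{thm:ddim-as-flow}, is also how the paper proceeds. The gap is in your step (2), specifically in the final ``bookkeeping'' paragraph. After inserting $\nabla_{\mathbf{x}}\log p_t=-\hat{\boldsymbol\epsilon}_\theta/\sqrt{1-\bar{\alpha}_t}$, the VP probability--flow drift is $-\tfrac{\beta(t)}{2}\bigl(\mathbf{x}-\hat{\boldsymbol\epsilon}_\theta(\mathbf{x},t)/\sqrt{1-\bar{\alpha}_t}\bigr)$, whereas the rectified field is $-\bigl(\mathbf{x}-\hat{\boldsymbol\epsilon}_\theta(\mathbf{x},t)\bigr)$. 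These are \emph{not} related by a strictly positive time-only factor: matching the coefficient of $\mathbf{x}$ forces the factor $\beta(t)/2$, but the coefficient of $\hat{\boldsymbol\epsilon}_\theta$ then still carries the extra $1/\sqrt{1-\bar{\alpha}_t}$. Verifying $\kappa(t)=-\dot{\bar{\alpha}}_t/(2\bar{\alpha}_t)=\beta(t)/2$ only aligns the outer scalars, not the bracketed vector parts, so Lemma~\ref{lem:timechange} cannot be applied directly between $\widetilde{\mathbf{u}}_t$ and $\mathbf{v}_{\mathrm{PF}}$ as you claim.

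To repair this you have two options consistent with the paper. Either invoke Proposition~\ref{prop:traj-equivalence} as a black box for the trajectory-level identification of the rectified flow with the VP probability--flow ODE (this is effectively what the paper's own terse proof does: it substitutes the $\boldsymbol\epsilon$-identity into the probability--flow ODE and then asserts that discretising in $\bar{\alpha}$--time by matching endpoints of characteristics is exactly the endpoint relation of Theorem~\ref{thm:ddim-as-flow}), rather than re-deriving the link via a time-only rescaling that does not exist; or make the change of variables honest, which requires a \emph{state} rescaling in addition to a time change (e.g.\ passing to $\bar{\mathbf{x}}=\mathbf{x}/\sqrt{\bar{\alpha}_t}$, under which the VP probability--flow ODE becomes an ODE driven purely by $\hat{\boldsymbol\epsilon}_\theta$ whose exact integration between grid points reproduces \eqref{eq:ddim}). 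As written, the proposal's justification for the equivalence of the two vector fields is incorrect, even though the conclusion it is meant to support is the one the paper asserts.
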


\begin{proof}
Starting from $\dot{\mathbf{x}}=-\frac{\beta(t)}{2}\big(\mathbf{x}+\nabla\log p_t(\mathbf{x})\big)$ and substituting the identity above yields
\(
\dot{\mathbf{x}}=-\frac{\beta(t)}{2}\Big(\mathbf{x}-\frac{1}{\sqrt{1-\bar{\alpha}_t}}\ \hat{\boldsymbol\epsilon}_\theta(\mathbf{x},t)\Big).
\)
Discretising in $\bar{\alpha}$–time along the grid $\{\tau_k\}$ by matching endpoints of characteristics is equivalent to evaluating the closed–form endpoint relation of Theorem~\ref{thm:ddim-as-flow}, which is \eqref{eq:ddim}.
\end{proof}

Equation~\eqref{eq:ddim} is the standard deterministic DDIM step. In practice, when using the ancestral (DDPM) update with variance $\tilde{\beta}_t := \frac{1-\bar{\alpha}_{t-1}}{1-\bar{\alpha}_t}\,\beta_t$ (Section~4.1), the mean uses the corrected factor $\beta_t/\sqrt{1-\bar{\alpha}_t}$ inside
\[
\boldsymbol{\mu}_\theta(\mathbf{x}_t,t)
= \frac{1}{\sqrt{\alpha_t}}\Big(\mathbf{x}_t - \frac{\beta_t}{\sqrt{1-\bar{\alpha}_t}}\ \hat{\boldsymbol\epsilon}_\theta(\mathbf{x}_t,t)\Big),
\]
whereas DDIM corresponds to setting the per–step stochasticity to zero and using the endpoint formula \eqref{eq:ddim} on a chosen subset of times.

\subsection{Discretisation and Practical Training Notes}

The continuous-time flow–matching formulation leads to two discretisations in practice: a training discretisation (randomly sampling $t$ and regressing a velocity target) and a sampling discretisation (integrating an ODE along a finite grid). This subsection records invariances that justify common heuristics, links the velocity targets to the usual $\boldsymbol\epsilon$–prediction loss, and states standard error guarantees for numerical integration under mild regularity.

\begin{lemma}[Time–weighting invariance of the regression optimum]\label{lem:timeweight}
Let $w:[0,T]\to(0,\infty)$ be integrable. For either CFM or MFM, consider the weighted loss
\[
\mathcal{L}_w(\theta)
=\mathbb{E}\!\left[w(t)\,\big\|\mathbf{v}_\theta(\mathbf{X}_t,t)-\mathbf{U}_t^{\star}\big\|_2^2\right],
\]
where $\mathbf{U}_t^{\star}$ denotes the target (either $\dot{\psi}_t(\mathbf{X}_0,\mathbf{X}_T)$ for CFM or $\mathbf{u}_t(\mathbf{X}_t)$ for MFM). The unique $L^2$ minimiser of $\mathcal{L}_w$ over measurable $\mathbf{v}$ is the same regression function as for the unweighted loss, namely $\mathbf{v}^*(\mathbf{x},t)=\mathbb{E}[\mathbf{U}_t^{\star}\mid \mathbf{X}_t=\mathbf{x}]$ almost everywhere.
\end{lemma}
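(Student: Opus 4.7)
The plan is to reduce the weighted loss to a pointwise $L^2$ regression problem by conditioning on $(\mathbf{X}_t,t)$, and then to invoke the orthogonality principle together with the strict positivity of $w$. Concretely, I would condition inside the expectation:
\[
\mathcal{L}_w(\theta)
= \mathbb{E}\!\left[w(t)\,\mathbb{E}\!\left[\big\|\mathbf{v}_\theta(\mathbf{X}_t,t)-\mathbf{U}_t^\star\big\|_2^2\ \middle|\ \mathbf{X}_t,t\right]\right].
\]
The inner conditional mean-square error decomposes via the law of total variance as
\[
\mathbb{E}\!\left[\big\|\mathbf{v}_\theta(\mathbf{x},t)-\mathbf{U}_t^\star\big\|_2^2\ \middle|\ \mathbf{X}_t=\mathbf{x},t\right]
=\big\|\mathbf{v}_\theta(\mathbf{x},t)-\mathbf{m}_t^\star(\mathbf{x})\big\|_2^2
+\mathrm{Var}\!\left(\mathbf{U}_t^\star\ \middle|\ \mathbf{X}_t=\mathbf{x}\right),
\]
where $\mathbf{m}_t^\star(\mathbf{x}):=\mathbb{E}[\mathbf{U}_t^\star\mid \mathbf{X}_t=\mathbf{x}]$. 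The variance piece is independent of $\theta$, so it only acts as an additive constant in $\mathcal{L}_w(\theta)$.

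Next I would handle the weight. Substituting the decomposition gives
\[
\mathcal{L}_w(\theta)
=\mathbb{E}\!\left[w(t)\,\big\|\mathbf{v}_\theta(\mathbf{X}_t,t)-\mathbf{m}_t^\star(\mathbf{X}_t)\big\|_2^2\right] + C_w,
\]
with $C_w:=\mathbb{E}[w(t)\,\mathrm{Var}(\mathbf{U}_t^\star\mid \mathbf{X}_t)]$ finite by the moment hypotheses. The first term is a squared-distance integral with respect to the finite positive measure $\mathrm{d}\nu_w:=w(t)\,\mathrm{d}\mathbb{P}_{(\mathbf{X}_t,t)}$. Since $w(t)>0$ and $w$ is integrable, $\mathrm{d}\nu_w$ and $\mathrm{d}\mathbb{P}_{(\mathbf{X}_t,t)}$ have identical null sets, so their $L^2$ minimisers (over measurable $\mathbf{v}$) coincide almost everywhere. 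The unique such minimiser is $\mathbf{v}^*(\mathbf{x},t)=\mathbf{m}_t^\star(\mathbf{x})$.

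Finally I would specialise to each target. For CFM, $\mathbf{U}_t^\star=\dot{\psi}_t(\mathbf{X}_0,\mathbf{X}_T)$ and by Definition~\ref{def:marginal-velocity} one has $\mathbf{m}_t^\star(\mathbf{x})=\mathbb{E}[\dot{\psi}_t(\mathbf{X}_0,\mathbf{X}_T)\mid \mathbf{X}_t=\mathbf{x}]=\mathbf{u}_t(\mathbf{x})$, which is exactly the unweighted CFM optimum from Theorem~\ref{thm:cfm-mfm-equivalence}. For MFM, $\mathbf{U}_t^\star=\mathbf{u}_t(\mathbf{X}_t)$ is already $\sigma(\mathbf{X}_t,t)$-measurable, so $\mathbf{m}_t^\star(\mathbf{x})=\mathbf{u}_t(\mathbf{x})$ trivially. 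Either way the optimum is unchanged by $w$, completing the proof. The only delicate point I would need to double-check is the invariance of null sets under $w$, which is immediate from the strict positivity assumed in the hypothesis, so there is no serious obstacle—the argument reduces to the standard conditional-expectation projection theorem applied pointwise in $(\mathbf{x},t)$.
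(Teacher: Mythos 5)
Your proof is correct and follows essentially the same route as the paper's: condition on the time (and state), invoke the $L^2$ orthogonality principle, and observe that strict positivity of $w(t)$ leaves the pointwise minimiser $\mathbb{E}[\mathbf{U}_t^{\star}\mid \mathbf{X}_t=\mathbf{x}]$ unchanged. Your version merely spells out the bias--variance split and the equivalence of null sets under the weighted measure, which the paper's terser argument leaves implicit.
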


\begin{proof}
Condition on $t$ and apply the $L^2$ orthogonality principle inside the expectation: for fixed $t$, minimising $\mathbb{E}[w(t)\|\mathbf{v}(\mathbf{X}_t,t)-\mathbf{U}_t^{\star}\|_2^2\mid t]$ is equivalent to minimising $\mathbb{E}[\|\mathbf{v}(\mathbf{X}_t,t)-\mathbf{U}_t^{\star}\|_2^2\mid t]$ because $w(t)>0$ is a constant with respect to $\mathbf{X}_t$. The minimiser is $\mathbb{E}[\mathbf{U}_t^{\star}\mid \mathbf{X}_t=\cdot]$ for each $t$, hence also unconditionally.
\end{proof}

Lemma~\ref{lem:timeweight} formalises that non-uniform time sampling and per-time rescaling act as pre-conditioners that do not change the optimal predictor; they affect optimisation speed and variance, not consistency.

\begin{proposition}[Equivalence of $\boldsymbol\epsilon$–prediction and rectified velocity]\label{prop:eps-v-Equiv}
Under the straight–line (VP) parameterisation $\mathbf{X}_t=\sqrt{\bar{\alpha}_t}\,\mathbf{X}_0+\sqrt{1-\bar{\alpha}_t}\,\boldsymbol\epsilon$ with $\boldsymbol\epsilon\sim\mathcal{N}(\mathbf{0},\mathbf{I}_d)$ independent of $\mathbf{X}_0$, let $\boldsymbol\epsilon^*(\mathbf{x},t):=\mathbb{E}[\boldsymbol\epsilon\mid \mathbf{X}_t=\mathbf{x}]$. The Bayes–optimal rectified velocity (Definition~\ref{def:rectified}) satisfies
\[
\widetilde{\mathbf{u}}_t(\mathbf{x}) \;=\; -\Big(\mathbf{x}-\boldsymbol\epsilon^*(\mathbf{x},t)\Big).
\]
Consequently, the minimiser of the standard $\boldsymbol\epsilon$–loss $\mathbb{E}\|\hat{\boldsymbol\epsilon}_\theta(\mathbf{X}_t,t)-\boldsymbol\epsilon\|_2^2$ induces the minimiser of the rectified velocity loss via the invertible linear map
\(
\widetilde{\mathbf{u}}^\theta_t(\mathbf{x}) := -\big(\mathbf{x}-\hat{\boldsymbol\epsilon}_\theta(\mathbf{x},t)\big).
\)
\end{proposition}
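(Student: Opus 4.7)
The plan is to reduce the claim to two ingredients already established in the excerpt: the scaled Tweedie identity (Lemma~\ref{lem:scaled-tweedie}), which converts a conditional expectation of the noise into a score, and the standard $L^2$ orthogonality principle used in Theorem~\ref{thm:cfm-mfm-equivalence}. Concretely, I would first rewrite the rectified velocity from Definition~\ref{def:rectified} in terms of the conditional noise, and then translate the optimal $\boldsymbol\epsilon$–regressor into the optimal velocity via an affine invertible change of variables.

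First I would apply Lemma~\ref{lem:scaled-tweedie} to the decomposition $\mathbf{X}_t = \sqrt{\bar{\alpha}_t}\,\mathbf{X}_0 + \sqrt{1-\bar{\alpha}_t}\,\boldsymbol\epsilon$ by setting $\mathbf{U}=\sqrt{\bar{\alpha}_t}\,\mathbf{X}_0$ and $\sigma=\sqrt{1-\bar{\alpha}_t}$. The lemma's second identity yields
\[
\boldsymbol\epsilon^*(\mathbf{x},t) \;=\; \mathbb{E}[\boldsymbol\epsilon\mid \mathbf{X}_t=\mathbf{x}] \;=\; -\sqrt{1-\bar{\alpha}_t}\,\nabla_{\mathbf{x}}\log p_t(\mathbf{x}),
\]
so that $\sqrt{1-\bar{\alpha}_t}\,\nabla_{\mathbf{x}}\log p_t(\mathbf{x}) = -\boldsymbol\epsilon^*(\mathbf{x},t)$. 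Substituting into the rectified velocity $\widetilde{\mathbf{u}}_t(\mathbf{x}) = -\bigl(\mathbf{x} + \sqrt{1-\bar{\alpha}_t}\,\nabla_{\mathbf{x}}\log p_t(\mathbf{x})\bigr)$ from Definition~\ref{def:rectified} (specialised to $\rho(t)=\sqrt{1-\bar{\alpha}_t}$) immediately gives $\widetilde{\mathbf{u}}_t(\mathbf{x}) = -\bigl(\mathbf{x} - \boldsymbol\epsilon^*(\mathbf{x},t)\bigr)$, which is the first claim.

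For the second part, the $L^2$ orthogonality principle (used in Theorem~\ref{thm:cfm-mfm-equivalence}) tells us that the unique measurable minimiser of $\mathbb{E}\|\hat{\boldsymbol\epsilon}_\theta(\mathbf{X}_t,t)-\boldsymbol\epsilon\|_2^2$ is exactly the conditional expectation $\boldsymbol\epsilon^*(\mathbf{x},t)$. Since the map $\boldsymbol\epsilon \mapsto -(\mathbf{x}-\boldsymbol\epsilon)$ is affine and invertible at each fixed $\mathbf{x}$, applying it to $\hat{\boldsymbol\epsilon}_\theta$ yields the parameterisation $\widetilde{\mathbf{u}}^\theta_t(\mathbf{x}) := -(\mathbf{x}-\hat{\boldsymbol\epsilon}_\theta(\mathbf{x},t))$, and at the optimum this produces $-(\mathbf{x}-\boldsymbol\epsilon^*(\mathbf{x},t)) = \widetilde{\mathbf{u}}_t(\mathbf{x})$, the Bayes–optimal rectified velocity established in the first part. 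Invertibility of the map ensures the correspondence between minimisers is one–to–one.

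I do not anticipate a real obstacle here; the only subtlety is to be explicit about the almost–everywhere sense of the conditional expectation (so that equality of the two minimisers is an $L^2(p_t)$ statement) and to match the VP schedule $\rho(t)=\sqrt{1-\bar{\alpha}_t}$ consistently between Definition~\ref{def:rectified} and the parameterisation stated in the proposition. A brief mention that, by Lemma~\ref{lem:timeweight}, any positive time weighting inside either loss leaves the Bayes–optimal predictor unchanged would make the identification between the $\boldsymbol\epsilon$–loss and the velocity–loss optima fully rigorous in the setting actually used during training.
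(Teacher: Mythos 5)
Your proposal is correct and follows essentially the same route as the paper: apply the scaled Tweedie identity (Lemma~\ref{lem:scaled-tweedie}) with $\sigma=\sqrt{1-\bar{\alpha}_t}$ to get $\boldsymbol\epsilon^*(\mathbf{x},t)=-\sqrt{1-\bar{\alpha}_t}\,\nabla_{\mathbf{x}}\log p_t(\mathbf{x})$, substitute into the rectified velocity of Definition~\ref{def:rectified}, and invoke the $L^2$ projection characterisation of the Bayes predictor together with the invertible affine reparameterisation for the second claim. Your version merely spells out the substitution $\mathbf{U}=\sqrt{\bar{\alpha}_t}\,\mathbf{X}_0$ and the almost-everywhere caveats more explicitly than the paper does.
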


\begin{proof}
By Lemma~\ref{lem:scaled-tweedie} with $\sigma=\sqrt{1-\bar{\alpha}_t}$ and $\mathbf{Y}=\mathbf{X}_t$, $\mathbb{E}[\boldsymbol\epsilon\mid \mathbf{X}_t=\mathbf{x}]=-\sqrt{1-\bar{\alpha}_t}\,\nabla_{\mathbf{x}}\log p_t(\mathbf{x})$. Substituting this into $\widetilde{\mathbf{u}}_t(\mathbf{x})=-(\mathbf{x}+\sqrt{1-\bar{\alpha}_t}\,\nabla_{\mathbf{x}}\log p_t(\mathbf{x}))$ yields the identity. The $L^2$ projection characterisation of Bayes predictors gives the second claim.
\end{proof}

Proposition~\ref{prop:eps-v-Equiv} explains the practical convenience of training $\hat{\boldsymbol\epsilon}_\theta$: it is equivalent, up to an explicit linear transform, to learning the rectified velocity that drives the flow.

The second discretisation concerns ODE integration. We recall the standard error bound for one–step methods under Lipschitz velocities and then specialise it to rectified straight–line flows.

\begin{lemma}[Global error of first–order one–step methods]\label{lem:ode-error}
Let $\mathbf{f}(\mathbf{x},s)$ be locally Lipschitz in $\mathbf{x}$ uniformly in $s\in[s_0,s_1]$ with Lipschitz constant $L$ and bounded in norm by $M$. Consider the IVP $\frac{\mathrm{d}}{\mathrm{d}s}\mathbf{x}(s)=\mathbf{f}(\mathbf{x}(s),s)$, $\mathbf{x}(s_k)=\mathbf{x}_k$, and a first–order consistent one–step method (e.g., explicit Euler) with steps $\Delta s_k:=s_{k-1}-s_k$ and numerical states $\widehat{\mathbf{x}}_{k-1}=\widehat{\mathbf{x}}_k+\Delta s_k\,\Phi(\widehat{\mathbf{x}}_k,s_k,\Delta s_k)$ where $\Phi=\mathbf{f}+O(\Delta s_k)$. If $\max_k \Delta s_k \le h$ and $hL<1$, then there exists $C$ depending only on $(L,M)$ and the horizon such that the global error satisfies
\[
\max_k \ \big\|\widehat{\mathbf{x}}_k-\mathbf{x}(s_k)\big\|_2 \;\le\; C\,h.
\]
\end{lemma}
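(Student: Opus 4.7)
The plan is to bound the global error $e_k := \widehat{\mathbf{x}}_k - \mathbf{x}(s_k)$ by deriving a one-step error recursion that separates the local truncation error (LTE) from the Lipschitz propagation of the previous error, and then closing the recursion with a discrete Grönwall inequality.

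First, I would write the exact evolution as $\mathbf{x}(s_{k-1}) = \mathbf{x}(s_k) + \int_{s_k}^{s_{k-1}} \mathbf{f}(\mathbf{x}(s),s)\,\mathrm{d}s$ and compare it to $\widehat{\mathbf{x}}_{k-1} = \widehat{\mathbf{x}}_k + \Delta s_k\,\Phi(\widehat{\mathbf{x}}_k,s_k,\Delta s_k)$. Subtracting yields
\[
e_{k-1} \;=\; e_k + \Delta s_k \bigl[\Phi(\widehat{\mathbf{x}}_k,s_k,\Delta s_k) - \Phi(\mathbf{x}(s_k),s_k,\Delta s_k)\bigr] + \tau_k,
\]
where $\tau_k := \Delta s_k\,\Phi(\mathbf{x}(s_k),s_k,\Delta s_k) - \int_{s_k}^{s_{k-1}} \mathbf{f}(\mathbf{x}(s),s)\,\mathrm{d}s$ is the LTE. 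Since $\Phi = \mathbf{f} + O(\Delta s_k)$, bounded $\mathbf{f}$ makes $\mathbf{x}(\cdot)$ Lipschitz in $s$ with constant $M$, and $\mathbf{f}$ is Lipschitz in $\mathbf{x}$ with constant $L$, so $\mathbf{f}(\mathbf{x}(s),s) - \mathbf{f}(\mathbf{x}(s_k),s_k) = O(\Delta s_k)$ on the step; integrating and combining with the consistency residual yields $\|\tau_k\|_2 \le C_0\,\Delta s_k^2$ for a constant $C_0$ depending only on $L$ and $M$.

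Second, I would use the Lipschitz property of $\Phi(\cdot,s,\Delta s)$ --- inherited from $\mathbf{f}$ plus the uniformly controlled $O(\Delta s_k)$ correction, giving an effective constant $L' = L + O(h)$ --- to bound the propagation term by $L'\|e_k\|_2$. Combining gives
\[
\|e_{k-1}\|_2 \;\le\; (1 + L'\,\Delta s_k)\,\|e_k\|_2 + C_0\,\Delta s_k^2.
\]
Starting from $e_0 = 0$, a standard telescoping/discrete Grönwall argument then produces
\[
\|e_k\|_2 \;\le\; C_0 \sum_{j=1}^{k} \Delta s_j^2 \prod_{i<j} (1 + L'\,\Delta s_i) \;\le\; C_0\,h\,(s_1-s_0)\,e^{L'(s_1-s_0)},
\]
using $1+x \le e^x$ and $\sum_j \Delta s_j \le s_1 - s_0$. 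The hypothesis $hL<1$ keeps $L' \le 2L$ (say), so the exponential collapses to a constant $C$ depending only on $L$, $M$ and the horizon, giving $\max_k \|e_k\|_2 \le C\,h$.

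The main technical obstacle is making the LTE and $\Phi$-Lipschitz bounds precise given that the statement only specifies $\Phi = \mathbf{f} + O(\Delta s_k)$ in norm: for the propagation step one needs the $O(\Delta s_k)$ correction to itself be uniformly Lipschitz in $\mathbf{x}$ for $\Delta s_k \le h$, which is implicit in the usual definition of a one-step method but should be flagged; similarly, the integral remainder in $\tau_k$ requires at least continuity of $\mathbf{f}$ in $s$ to achieve the $O(\Delta s_k^2)$ bound. For the canonical explicit Euler case $\Phi = \mathbf{f}$, both issues are automatic with $L' = L$, and the argument reduces to the classical Euler error bound with an explicit constant $C = C_0\,(s_1-s_0)\,e^{L(s_1-s_0)}$.
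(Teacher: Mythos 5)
Your proposal is correct and follows essentially the same route as the paper's proof: the standard Dahlquist–Grönwall argument, bounding the local truncation error by $O(\Delta s_k^2)$ via consistency and Lipschitz/boundedness, establishing the stability recursion $\|e_{k-1}\|\le(1+L'\Delta s_k)\|e_k\|+C_0\Delta s_k^2$, and unfolding with a discrete Grönwall inequality to get the $O(h)$ global bound. You simply work out the details (and flag the implicit regularity of $\Phi$ and of $\mathbf{f}$ in $s$) that the paper's proof sketch leaves tacit.
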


\begin{proof}
This is the standard Dahlquist–Grönwall estimate. The local truncation error is $O(h^2)$ by first–order consistency. Stability of the one–step map under the Lipschitz assumption yields error recursion $\mathbf{e}_{k-1}\le (1+L\Delta s_k)\mathbf{e}_k + C' \Delta s_k^2$. Unfolding and applying Grönwall’s inequality gives a $O(h)$ bound on the maximum global error.
\end{proof}

\begin{corollary}[Sampling accuracy for rectified straight–line flows]\label{cor:rectified-accuracy}
Let $\widetilde{\mathbf{u}}_t(\mathbf{x})=-(\mathbf{x}-\boldsymbol\epsilon^*(\mathbf{x},t))$ and consider the rectified ODE in re-parameterised time $s$:
\(
\frac{\mathrm{d}}{\mathrm{d}s}\mathbf{x}=\widetilde{\mathbf{u}}_{t(s)}(\mathbf{x}).
\)
Assume $\boldsymbol\epsilon^*(\cdot,t)$ is locally Lipschitz uniformly in $t$. Then any first–order one–step method with maximum step size $h$ in $s$ incurs $O(h)$ global error in state, uniformly along the grid. In particular, choosing a grid that is uniform in $\bar{\alpha}$–time (so that $t\mapsto \bar{\alpha}_t$ is monotone and $s$ can be taken proportional to $-\log \bar{\alpha}_t$) ensures approximately uniform contraction rates and favourable constants in the bound.
\end{corollary}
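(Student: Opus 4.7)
The plan is to reduce the claim to a direct invocation of Lemma~\ref{lem:ode-error} applied to the right-hand side
\[
\mathbf{f}(\mathbf{x},s)\;:=\;\widetilde{\mathbf{u}}_{t(s)}(\mathbf{x})\;=\;-\mathbf{x}+\boldsymbol\epsilon^*(\mathbf{x},t(s)).
\]
I need to verify the two hypotheses of that lemma in the re-parameterised time variable $s$: (i) uniform-in-$s$ local Lipschitz continuity in $\mathbf{x}$, and (ii) a uniform-in-$s$ bound in norm on the region of interest. Condition (i) is immediate: the map $\mathbf{x}\mapsto -\mathbf{x}$ is $1$-Lipschitz, and the standing assumption on $\boldsymbol\epsilon^*$ furnishes a constant $L_\epsilon$ valid uniformly in $t$; since $t(s)$ is a strictly increasing $C^1$ re-parameterisation, the same constant applies uniformly in $s$. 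Hence $\mathbf{f}(\cdot,s)$ is $L$-Lipschitz on the relevant set with $L=1+L_\epsilon$.

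For condition (ii), I would fix a compact ball $B$ containing the exact trajectory $\{\mathbf{x}(s)\}_{s\in[s_0,s_1]}$ and enlarge it slightly so that a tube/bootstrap argument shows any first-order iterate $\widehat{\mathbf{x}}_k$ with maximum step size $h$ small enough that $hL<1$ also stays inside $B$. On the compact $B$ the continuous function $\boldsymbol\epsilon^*$ is bounded, so $M:=\sup_{\mathbf{x}\in B,\,s}\|\mathbf{f}(\mathbf{x},s)\|_2<\infty$. With $(L,M)$ in hand, Lemma~\ref{lem:ode-error} yields the global-error bound $\max_k \|\widehat{\mathbf{x}}_k-\mathbf{x}(s_k)\|_2 \le Ch$, proving the first claim.

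For the second claim, I would examine the Jacobian of $\mathbf{f}$: it equals $-\mathbf{I}+\partial_{\mathbf{x}}\boldsymbol\epsilon^*$, so the dominant contraction rate is the constant $1$ coming from the $-\mathbf{x}$ term. The point of choosing $s\propto -\log\bar{\alpha}_t$ is that the un-rectified field $\mathbf{u}_t$ of Proposition~\ref{prop:straightline-u} carries the scalar factor $\kappa(t)=\dot{\bar{\alpha}}_t/(2\bar{\alpha}_t)$, which blows up as $\bar{\alpha}_t\downarrow 0$; this time change absorbs exactly that factor through $\mathrm{d}s/\mathrm{d}t$, so the effective contraction rate of the rectified ODE becomes uniform across the grid. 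The Grönwall factor $\exp(L\cdot \text{horizon})$ appearing in the proof of Lemma~\ref{lem:ode-error} is then controlled by a horizon-independent constant, yielding the favourable constants advertised in the statement.

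The main obstacle is the tube/bootstrap step in condition (ii): the hypothesis "locally Lipschitz uniformly in $t$" does not per se bound $\|\boldsymbol\epsilon^*\|$ globally, so care is needed to guarantee that numerical iterates do not escape the compact region on which $M$ is finite. In practice this is handled either by an a priori moment bound on $p_t$ (inherited from finite second moment of $p_0$ along a marginal-preserving flow) or by restricting to compactly supported data and inflating $B$ by a $O(h)$ margin. Once that technicality is settled, the rest is a routine one-step ODE error analysis.
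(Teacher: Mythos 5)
Your proposal follows essentially the same route as the paper's proof: both invoke Lemma~\ref{lem:ode-error} with $\mathbf{f}(\mathbf{x},s)=-(\mathbf{x}-\boldsymbol\epsilon^*(\mathbf{x},t(s)))$, obtain the uniform Lipschitz constant from the assumption on $\boldsymbol\epsilon^*$ together with smoothness of $t(\cdot)$, and explain the favourable constants by the fact that rectification removes the scalar factor $\kappa(t)$ when the grid is taken uniform in $\bar{\alpha}$–time. You actually go beyond the paper by supplying the compact-tube bootstrap that justifies the norm bound $M$ required by Lemma~\ref{lem:ode-error} (a hypothesis the paper's one-line proof passes over silently); the only caveat is that your assertion of a horizon-independent Gr\"onwall constant would need a one-sided Lipschitz (log-norm) refinement rather than the plain Lipschitz estimate of Lemma~\ref{lem:ode-error}, though the corollary's informal ``favourable constants'' phrasing does not require that stronger claim.
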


\begin{proof}
Apply Lemma~\ref{lem:ode-error} to $\mathbf{f}(\mathbf{x},s)=-(\mathbf{x}-\boldsymbol\epsilon^*(\mathbf{x},t(s)))$. Uniform local Lipschitzness follows from the assumptions and smoothness of $t(\cdot)$. The remark on $\bar{\alpha}$–time uses that $\partial_t \bar{\alpha}_t<0$ in VP schedules and that rectification removes the scalar factor $\kappa(t)$, equalising the stiffness profile.
\end{proof}

The DDIM endpoint update (Theorem~\ref{thm:ddim-as-flow}) can be seen as an exact step of the rectified flow along straight–line characteristics when the predictor is frozen between two grid times. Lemma~\ref{lem:ode-error} then quantifies the deviation when the predictor varies with state and time: smaller steps in $s$ (e.g., nearly uniform in $\log(1-\bar{\alpha}_t)$ or in $\bar{\alpha}_t$) reduce the error, while rectification avoids the blow-up induced by the factor $\kappa(t)=\dot{\rho}(t)/(1-\rho(t))$ near terminal noise levels.

Finally, we record two implementation identities that ensure consistency with the DDPM family. First, when using the ancestral (stochastic) update, the mean must use the corrected factor
\[
\boldsymbol{\mu}_\theta(\mathbf{x}_t,t)
= \frac{1}{\sqrt{\alpha_t}}\Big(\mathbf{x}_t - \frac{\beta_t}{\sqrt{1-\bar{\alpha}_t}}\ \hat{\boldsymbol\epsilon}_\theta(\mathbf{x}_t,t)\Big),
\qquad
\tilde{\beta}_t \;:=\; \frac{1-\bar{\alpha}_{t-1}}{1-\bar{\alpha}_t}\,\beta_t,
\]
where $\tilde{\beta}_t$ is the posterior variance used for DDPM-style sampling. Second, for flow–matching with straight–line couplings, the raw CFM target is $\dot{\psi}_t=\dot{\rho}(t)(\mathbf{Z}-\mathbf{X}_0)$ with $\rho(t)=\sqrt{1-\bar{\alpha}_t}$, while the rectified target $-(\mathbf{X}_t-\boldsymbol\epsilon)$ follows from Proposition~\ref{prop:eps-v-Equiv}; either target leads to the same Bayes–optimal predictor up to the explicit linear map described there and is unaffected by any positive time weighting by Lemma~\ref{lem:timeweight}.

\section{Guided Diffusion}
\label{sec:guided_diffusion}

Guided diffusion augments a denoising diffusion model with an external \emph{control} signal so that sampling is steered towards user-specified goals (e.g., class-conditional or text-conditional generation). Mathematically, it modifies the reverse-time transitions while keeping the forward process and variational foundations from earlier sections. We use the same notation: $\alpha_t:=1-\beta_t$, $\bar{\alpha}_t=\prod_{i=1}^t\alpha_i$, and the DDPM/DDIM reverse parameterisation
\[
p_\theta(\mathbf{x}_{t-1}\mid \mathbf{x}_t,\mathbf{c})
=
\mathcal{N}\!\Big(
\mathbf{x}_{t-1};\;
\mu_\theta(\mathbf{x}_t,t,\mathbf{c}),\;
\sigma_t^2\,\mathbf{I}
\Big),\qquad
\mu_\theta(\mathbf{x}_t,t,\mathbf{c})
=
\frac{1}{\sqrt{\alpha_t}}
\!\left(\mathbf{x}_t-\frac{\beta_t}{\sqrt{1-\bar{\alpha}_t}}\,
\hat{\boldsymbol{\epsilon}}_\theta(\mathbf{x}_t,t,\mathbf{c})\right),
\]
with $\sigma_t^2=\tilde{\beta}_t:=\frac{1-\bar{\alpha}_{t-1}}{1-\bar{\alpha}_t}\beta_t$ (DDPM) or $\sigma_t^2=0$ (DDIM). We first consider classifier-based guidance and then classifier-free guidance.

\subsection{Classifier-Based Guidance: Posterior-Score Modification}

Assume an auxiliary classifier $p_\phi(y\mid \mathbf{x}_t,t)$ that predicts label $y$ on \emph{noisy} inputs $\mathbf{x}_t$ at timestep $t$ \cite{dhariwal-2021-beatgans}. The target posterior up to a normalising constant is
\[
p(\mathbf{x}_t\mid y)\;\propto\; p_\theta(\mathbf{x}_t)\,p_\phi(y\mid \mathbf{x}_t,t),
\]
whose gradient is the sum of scores
\begin{equation}\label{eq:guided_posterior_score_sum}
\nabla_{\mathbf{x}_t}\log p(\mathbf{x}_t\mid y)
=
\nabla_{\mathbf{x}_t}\log p_\theta(\mathbf{x}_t)
+
\nabla_{\mathbf{x}_t}\log p_\phi(y\mid \mathbf{x}_t,t).
\end{equation}
In discrete DDPM form, this yields a mean shift for the reverse transition:
\begin{equation}
\label{eq:classifier_guided_mean}
\tilde{\mu}_\theta(\mathbf{x}_t,t\mid y)
=
\mu_\theta(\mathbf{x}_t,t)
\;+\;
\lambda\,\sigma_t^2\,\nabla_{\mathbf{x}_t}\log p_\phi(y\mid \mathbf{x}_t,t),
\end{equation}
where $\lambda\!\ge\!0$ controls guidance strength and $\sigma_t^2$ scales the update to the step’s noise level. Sampling proceeds with
\[
\mathbf{x}_{t-1}
=
\tilde{\mu}_\theta(\mathbf{x}_t,t\mid y)
+
\sigma_t\,\mathbf{z},\qquad \mathbf{z}\sim\mathcal{N}(\mathbf{0},\mathbf{I}_d).
\]

\begin{lemma}[Classifier guidance mean shift]
\label{lemma:classifier_guidance_update_final}
Under a small-step (small $\beta_t$) approximation, sampling from the posterior
\(
p(\mathbf{x}_t\mid y)\propto p_\theta(\mathbf{x}_t)\,p_\phi(y\mid \mathbf{x}_t,t)
\)
is achieved to first order by the reverse update in \eqref{eq:classifier_guided_mean}; i.e., the classifier adds a score term scaled by $\lambda\,\sigma_t^2$ to the unguided mean $\mu_\theta(\mathbf{x}_t,t)$.
\end{lemma}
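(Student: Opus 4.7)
The plan is to combine Bayes' rule on the reverse transition with a first-order expansion of the log-classifier, and then read off the shifted Gaussian mean via the product-of-Gaussians identity (Proposition~\ref{prop:product-gaussians}). I first recast the statement "sample from $p(\mathbf{x}_t\mid y)$" into its per-step implementation, namely sampling from the guided reverse kernel $p_\theta(\mathbf{x}_{t-1}\mid\mathbf{x}_t,y)$ whose iteration over $t$ produces the conditional marginals. Under the standard conditional-independence assumption $y\perp\mathbf{x}_t\mid\mathbf{x}_{t-1}$ that underlies classifier guidance, Bayes' rule gives
\[
p_\theta(\mathbf{x}_{t-1}\mid\mathbf{x}_t,y)\;\propto\;p_\theta(\mathbf{x}_{t-1}\mid\mathbf{x}_t)\,p_\phi(y\mid\mathbf{x}_{t-1},t-1)^{\lambda},
\]
where the exponent $\lambda\ge 0$ is the (tempered) guidance scale. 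The first factor is the Gaussian kernel $\mathcal{N}(\mu,\sigma_t^2\mathbf{I})$ with $\mu:=\mu_\theta(\mathbf{x}_t,t)$, and the second factor carries all of the $y$-dependence.

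Next, I would Taylor-expand the log-classifier around $\mu$,
\[
\log p_\phi(y\mid\mathbf{x}_{t-1},t-1)=\log p_\phi(y\mid\mu,t-1)+(\mathbf{x}_{t-1}-\mu)^\top\mathbf{g}+O(\|\mathbf{x}_{t-1}-\mu\|_2^2),
\]
with $\mathbf{g}:=\nabla_{\mathbf{x}}\log p_\phi(y\mid\mathbf{x},t-1)\big|_{\mathbf{x}=\mu}$. Dropping the remainder and plugging into the product, the exponent in $\mathbf{x}_{t-1}$ becomes $-\tfrac{1}{2\sigma_t^2}\|\mathbf{x}_{t-1}-\mu\|_2^2+\lambda(\mathbf{x}_{t-1}-\mu)^\top\mathbf{g}$, which completing the square (Lemma~\ref{lem:complete-square}) rewrites as $-\tfrac{1}{2\sigma_t^2}\|\mathbf{x}_{t-1}-(\mu+\lambda\sigma_t^2\mathbf{g})\|_2^2$ up to an $\mathbf{x}_{t-1}$-independent constant. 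By Proposition~\ref{prop:product-gaussians}, this is the un-normalised density of $\mathcal{N}(\mu+\lambda\sigma_t^2\mathbf{g},\sigma_t^2\mathbf{I})$, so the guided kernel is Gaussian with the original covariance and mean shifted by $\lambda\sigma_t^2\mathbf{g}$. Under the small-$\beta_t$ hypothesis, $\mu$ is close to $\mathbf{x}_t$ and the index $t-1$ is close to $t$, so one may replace $\mathbf{g}$ by $\nabla_{\mathbf{x}_t}\log p_\phi(y\mid\mathbf{x}_t,t)$ to the same order, which is exactly \eqref{eq:classifier_guided_mean}.

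The main obstacle is controlling the two approximations just invoked. Because $\mathbf{x}_{t-1}-\mu$ concentrates at scale $\sigma_t$, the quadratic Taylor remainder contributes $O(\sigma_t^2)$ to the exponent, which after completing the square translates into an $o(\sigma_t^2)$ correction to the mean provided the Hessian of $\log p_\phi$ is uniformly bounded in a neighbourhood of $\mu$. Similarly, substituting the gradient at $\mathbf{x}_t$ for the gradient at $\mu$ incurs an error of the same order because $\|\mu-\mathbf{x}_t\|_2=O(\beta_t)$ by inspection of the DDPM mean. A fully rigorous version would therefore require mild smoothness assumptions on $p_\phi$ that the lemma leaves implicit; beyond that regularity, every remaining step is a direct algebraic application of the product-of-Gaussians identity already proved in the preliminaries.
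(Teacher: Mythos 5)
Your proposal is correct, and it takes a genuinely different route from the paper's own argument. The paper stays at the level of the \emph{marginal} posterior: it writes $\nabla_{\mathbf{x}_t}\log p(\mathbf{x}_t\mid y)$ as the sum of the model score and the classifier score (Eq.~\eqref{eq:guided_posterior_score_sum}), observes that the unguided reverse mean already encodes a drift consistent with the model score, and then asserts heuristically that adding the classifier score scaled by $\lambda\,\sigma_t^2$ is the consistent first-order perturbation of the step. You instead work at the level of the \emph{transition kernel}: Bayes' rule under $y\perp\mathbf{x}_t\mid\mathbf{x}_{t-1}$ gives the guided kernel as the unguided Gaussian times a (tempered) classifier factor, a first-order Taylor expansion of the log-classifier around the mean makes that factor log-linear, and completing the square yields a Gaussian with unchanged covariance and mean shifted by $\lambda\,\sigma_t^2\,\mathbf{g}$ -- this is the classical Sohl-Dickstein/Dhariwal--Nichol derivation. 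Your version buys more: it makes the Gaussian algebra explicit, it identifies exactly where the small-$\beta_t$ hypothesis enters (replacing the gradient at $\mu$ and time $t-1$ by the gradient at $\mathbf{x}_t$ and time $t$), and it names the regularity (bounded Hessian of $\log p_\phi$ near $\mu$) under which the neglected terms are genuinely higher order; the paper's version buys brevity and a direct bridge to the probability--flow/score view it uses immediately afterwards. One small slip: the linearised factor $\exp\!\big(\lambda(\mathbf{x}_{t-1}-\mu)^\top\mathbf{g}\big)$ is not itself a Gaussian density, so Proposition~\ref{prop:product-gaussians} does not literally apply; the step you need is exactly Lemma~\ref{lem:complete-square} (with $\mathbf{\Lambda}=\sigma_t^{-2}\mathbf{I}$), which you also cite, so this is a citation inaccuracy rather than a gap.
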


\begin{proof}
Write $\log p(\mathbf{x}_t\mid y)=\log p_\theta(\mathbf{x}_t)+\log p_\phi(y\mid \mathbf{x}_t,t)$. A single reverse step that increases this quantity can be approximated by a first-order (small-step) update in the direction of its gradient \eqref{eq:guided_posterior_score_sum}. The DDPM/DDIM reverse kernel already encodes a drift consistent with $\nabla_{\mathbf{x}_t}\log p_\theta(\mathbf{x}_t)$ via the mean $\mu_\theta(\mathbf{x}_t,t)$. Adding the classifier term as a small perturbation that is scaled to the step’s stochasticity gives
\(
\tilde{\mu}_\theta=\mu_\theta+\lambda\,\sigma_t^2\,\nabla_{\mathbf{x}_t}\log p_\phi(y\mid \mathbf{x}_t,t),
\)
which is precisely \eqref{eq:classifier_guided_mean}. The $\sigma_t^2$ factor ensures units and step size are consistent with the variance of the reverse transition.
\end{proof}

The same idea applies in the probability–flow ODE: replacing the score $s_t(\mathbf{x})=\nabla_{\mathbf{x}}\log p_t(\mathbf{x})$ by $s_t+\lambda\,\nabla_{\mathbf{x}}\log p_\phi(y\mid \mathbf{x},t)$ modifies the VP drift $\dot{\mathbf{x}}= -\frac{\beta(t)}{2}\big(\mathbf{x}+s_t(\mathbf{x})\big)$ to
\[
\dot{\mathbf{x}} \;=\; -\frac{\beta(t)}{2}\Big(\mathbf{x}+s_t(\mathbf{x})+\lambda\,\nabla_{\mathbf{x}}\log p_\phi(y\mid \mathbf{x},t)\Big),
\]
which matches the discrete mean shift \eqref{eq:classifier_guided_mean} in the small-step limit.

\subsection{Classifier-Free Guidance: Conditional–Unconditional Blend}

Classifier-free guidance removes the external classifier by training a \emph{single} diffusion model to operate in both conditional and unconditional modes. Concretely, the denoiser predicts noise with and without the condition $\mathbf{c}$; at sampling time, we blend the two predictions \cite{nichol-2022-glide}:
\begin{equation}
\label{eq:cfg_eps}
\hat{\boldsymbol{\epsilon}}_\lambda(\mathbf{x}_t,t,\mathbf{c})
=
\hat{\boldsymbol{\epsilon}}_{\text{uncond}}(\mathbf{x}_t,t)
+
\lambda\Big(
\hat{\boldsymbol{\epsilon}}_{\text{cond}}(\mathbf{x}_t,t,\mathbf{c})
-
\hat{\boldsymbol{\epsilon}}_{\text{uncond}}(\mathbf{x}_t,t)
\Big),\qquad \lambda\ge 0,
\end{equation}
and plug $\hat{\boldsymbol{\epsilon}}_\lambda$ into the reverse mean:
\begin{equation}
\label{eq:cfg_mean}
\mu_\theta^{(\lambda)}(\mathbf{x}_t,t,\mathbf{c})
=
\frac{1}{\sqrt{\alpha_t}}
\left(\mathbf{x}_t-\frac{\beta_t}{\sqrt{1-\bar{\alpha}_t}}\,
\hat{\boldsymbol{\epsilon}}_\lambda(\mathbf{x}_t,t,\mathbf{c})\right),
\qquad
\mathbf{x}_{t-1}
=
\mu_\theta^{(\lambda)}(\mathbf{x}_t,t,\mathbf{c})
+
\sigma_t\,\mathbf{z}.
\end{equation}

\begin{lemma}[Classifier-free guidance as an implicit gradient step]
\label{lemma:cfg_equiv_final}
If the noise predictor is locally linear in the conditioning embedding, then
\(
\Delta\hat{\boldsymbol{\epsilon}}
:=
\hat{\boldsymbol{\epsilon}}_{\text{cond}}-\hat{\boldsymbol{\epsilon}}_{\text{uncond}}
\)
acts as a learned direction that increases consistency with $\mathbf{c}$. The blend \eqref{eq:cfg_eps} is equivalent to taking a step of size $\lambda$ along this direction, and via \eqref{eq:cfg_mean} it induces a mean shift in $\mathbf{x}_t$ analogous to classifier guidance.
\end{lemma}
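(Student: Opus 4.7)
The plan is to route everything through the score function: use the scaled Tweedie identity (Lemma~\ref{lem:scaled-tweedie}) to translate between $\hat{\boldsymbol{\epsilon}}$--predictions and $\nabla_{\mathbf{x}}\log p_t$, then apply Bayes' rule to expose an \emph{implicit} classifier gradient inside the blend \eqref{eq:cfg_eps}. Under the VP parameterisation $\mathbf{x}_t=\sqrt{\bar{\alpha}_t}\,\mathbf{x}_0+\sqrt{1-\bar{\alpha}_t}\,\boldsymbol{\epsilon}$, applying Lemma~\ref{lem:scaled-tweedie} with $\sigma=\sqrt{1-\bar{\alpha}_t}$ to the unconditional and (per $\mathbf{c}$) conditional noisy densities gives the Bayes--optimal identities
\[
\hat{\boldsymbol{\epsilon}}^{\star}_{\text{uncond}}(\mathbf{x},t)=-\sqrt{1-\bar{\alpha}_t}\,\nabla_{\mathbf{x}}\log p_t(\mathbf{x}),\qquad
\hat{\boldsymbol{\epsilon}}^{\star}_{\text{cond}}(\mathbf{x},t,\mathbf{c})=-\sqrt{1-\bar{\alpha}_t}\,\nabla_{\mathbf{x}}\log p_t(\mathbf{x}\mid \mathbf{c}).
\]

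Next I would decompose the conditional score by Bayes: $\log p_t(\mathbf{x}\mid \mathbf{c})=\log p_t(\mathbf{x})+\log p_t(\mathbf{c}\mid \mathbf{x})-\log p(\mathbf{c})$, whose last term is constant in $\mathbf{x}$. Subtracting the two Tweedie identities therefore gives
\[
\Delta\hat{\boldsymbol{\epsilon}}\;:=\;\hat{\boldsymbol{\epsilon}}^{\star}_{\text{cond}}-\hat{\boldsymbol{\epsilon}}^{\star}_{\text{uncond}}\;=\;-\sqrt{1-\bar{\alpha}_t}\,\nabla_{\mathbf{x}}\log p_t(\mathbf{c}\mid \mathbf{x}),
\]
identifying $\Delta\hat{\boldsymbol{\epsilon}}$ as the (rescaled) score of an \emph{implicit} noisy classifier that the network represents by the difference of its two outputs. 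Substituting the blend \eqref{eq:cfg_eps} into the reverse mean \eqref{eq:cfg_mean} and isolating the $\lambda=0$ piece gives
\[
\mu_\theta^{(\lambda)}(\mathbf{x}_t,t,\mathbf{c})\;=\;\mu_\theta(\mathbf{x}_t,t)\;+\;\frac{\beta_t}{\sqrt{\alpha_t}}\,\lambda\,\nabla_{\mathbf{x}_t}\log p_t(\mathbf{c}\mid \mathbf{x}_t),
\]
which, under the small-$\beta_t$ identifications $\sqrt{\alpha_t}=1+O(\beta_t)$ and $\sigma_t^2=\tilde{\beta}_t=\beta_t+O(\beta_t^2)$, is exactly the classifier-guidance mean shift \eqref{eq:classifier_guided_mean} of Lemma~\ref{lemma:classifier_guidance_update_final}, with $p_\phi(y\mid \mathbf{x},t)$ replaced by the implicit $p_t(\mathbf{c}\mid \mathbf{x})$.

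To recover the ``locally linear'' hypothesis stated in the lemma when the denoiser is \emph{not} Bayes-optimal, I would Taylor expand $\hat{\boldsymbol{\epsilon}}_\theta(\mathbf{x},t,\mathbf{c})$ to first order in the conditioning embedding about the null condition; the finite difference $\Delta\hat{\boldsymbol{\epsilon}}$ then points along the learned linear direction and the blend \eqref{eq:cfg_eps} is, by construction, a step of size $\lambda$ along that direction, so the mean-shift computation above goes through with the implicit classifier score replaced by its learned estimate.

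The main obstacle I anticipate is that the ``locally linear'' assumption in the statement is informal rather than a crisp hypothesis: the cleanest rigorous route is to carry out the Bayes--Tweedie argument for the ideal predictors (where the identification with an implicit classifier score is exact) and then view the trained $\hat{\boldsymbol{\epsilon}}_\theta$ as a first-order approximation thereof. The final correspondence with Lemma~\ref{lemma:classifier_guidance_update_final} is then valid only to leading order in $\beta_t$, because of the residual mismatch between the factors $\sqrt{\alpha_t}$, $\sigma_t^2$, and $\tilde{\beta}_t$ that appear in the two derivations.
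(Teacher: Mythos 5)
Your proposal is correct, but it takes a genuinely different route from the paper's proof. The paper's argument stays at the level of the network: it invokes the stated local-linearity hypothesis to write $\hat{\boldsymbol{\epsilon}}_{\text{cond}}(\mathbf{x}_t,t,\mathbf{c})\approx\hat{\boldsymbol{\epsilon}}_{\text{uncond}}(\mathbf{x}_t,t)+\mathbf{J}_{\mathbf{c}}\,\mathbf{c}$, so that $\Delta\hat{\boldsymbol{\epsilon}}$ is the local sensitivity of the predictor to $\mathbf{c}$, then performs the same substitution of \eqref{eq:cfg_eps} into \eqref{eq:cfg_mean} to obtain the mean shift $-\tfrac{\beta_t}{\sqrt{\alpha_t}\sqrt{1-\bar{\alpha}_t}}\,\lambda\,\Delta\hat{\boldsymbol{\epsilon}}$, and finally \emph{interprets} $\Delta\hat{\boldsymbol{\epsilon}}$ as a learned proxy for a condition-driven score direction analogous to $\nabla_{\mathbf{x}_t}\log p(\mathbf{c}\mid\mathbf{x}_t)$, without deriving that identification. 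You instead make the identification exact for Bayes-optimal predictors: Lemma~\ref{lem:scaled-tweedie} plus Bayes' rule gives $\Delta\hat{\boldsymbol{\epsilon}}^{\star}=-\sqrt{1-\bar{\alpha}_t}\,\nabla_{\mathbf{x}}\log p_t(\mathbf{c}\mid\mathbf{x})$, after which the $\sqrt{1-\bar{\alpha}_t}$ cancels in the mean and you recover the classifier-guidance shift of Lemma~\ref{lemma:classifier_guidance_update_final} up to the $\sqrt{\alpha_t}$ versus $1$ and $\tilde{\beta}_t$ versus $\beta_t$ discrepancies, which you correctly flag as leading-order-in-$\beta_t$ agreements. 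The shared core is the algebraic substitution of the blend into the reverse mean; what your route buys is a rigorous, quantitative link to an implicit noisy classifier (at the price of assuming Bayes-optimality, with the Taylor step reattaching the paper's informal local-linearity hypothesis), while the paper's route matches the lemma's stated hypothesis more literally but leaves the score interpretation heuristic.
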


\begin{proof}
Assume that, for fixed $(\mathbf{x}_t,t)$, the map $\mathbf{c}\mapsto \hat{\boldsymbol{\epsilon}}_\theta(\mathbf{x}_t,t,\mathbf{c})$ is locally linear. Then for a baseline (dropped) condition we have
\[
\hat{\boldsymbol{\epsilon}}_{\text{cond}}(\mathbf{x}_t,t,\mathbf{c})
\approx
\hat{\boldsymbol{\epsilon}}_{\text{uncond}}(\mathbf{x}_t,t)
+
\mathbf{J}_{\mathbf{c}}\,\mathbf{c},
\]
so the difference
\(
\Delta\hat{\boldsymbol{\epsilon}}
:=
\hat{\boldsymbol{\epsilon}}_{\text{cond}}-\hat{\boldsymbol{\epsilon}}_{\text{uncond}}
\)
captures the local sensitivity of the predictor to $\mathbf{c}$. Substituting \eqref{eq:cfg_eps} into \eqref{eq:cfg_mean} yields
\[
\mu_\theta^{(\lambda)}(\mathbf{x}_t,t,\mathbf{c})
=
\mu_\theta(\mathbf{x}_t,t,\mathbf{c})\Big|_{\text{uncond}}
-
\frac{\beta_t}{\sqrt{\alpha_t}\sqrt{1-\bar{\alpha}_t}}\,
\lambda\,\Delta\hat{\boldsymbol{\epsilon}}(\mathbf{x}_t,t,\mathbf{c}),
\]
which is a mean shift proportional to $\lambda\,\Delta\hat{\boldsymbol{\epsilon}}$. Interpreting $\Delta\hat{\boldsymbol{\epsilon}}$ as a learned proxy for a condition-driven score direction (analogous to $\nabla_{\mathbf{x}_t}\log p(\mathbf{c}\mid \mathbf{x}_t)$) shows that classifier-free guidance effects a first-order step towards satisfying $\mathbf{c}$, in the same spirit as classifier-based guidance but without an explicit classifier.
\end{proof}

The score-form equivalence is immediate from $\nabla_{\mathbf{x}}\log p_t(\mathbf{x})=-\,\hat{\boldsymbol{\epsilon}}(\mathbf{x},t)/\sqrt{1-\bar{\alpha}_t}$:
\begin{equation}\label{eq:cfg_score_form}
s^{(\lambda)}_t(\mathbf{x})
\;:=\; -\,\frac{\hat{\boldsymbol{\epsilon}}_\lambda(\mathbf{x},t,\mathbf{c})}{\sqrt{1-\bar{\alpha}_t}}
\;=\; (1-\lambda)\,s^{\text{uncond}}_t(\mathbf{x}) \;+\; \lambda\,s^{\text{cond}}_t(\mathbf{x}).
\end{equation}
Thus classifier-free guidance replaces the score by a convex (or extrapolative, if $\lambda>1$) combination of unconditional and conditional scores; the VP probability–flow ODE uses $s^{(\lambda)}_t$ in place of $s_t$.

\subsection{Time-Varying Guidance Schedules}

A constant $\lambda$ may over- or under-steer at different noise levels. Let $\mathrm{SNR}_t:=\bar{\alpha}_t/(1-\bar{\alpha}_t)$ and $\ell_t:=\log \mathrm{SNR}_t$. A simple family of schedules is
\begin{equation}\label{eq:lambda_schedule}
\lambda_t \;=\; \lambda_{\max}\,\sigma\!\Big(a\,\ell_t+b\Big),
\end{equation}
where $\sigma(u)=1/(1+e^{-u})$, with $a>0$ increasing emphasis at late (high-SNR) times and $b$ setting the knee. Heuristically, small $\lambda_t$ early avoids fighting the noise, while larger $\lambda_t$ late sharpens attributes consistent with $\mathbf{c}$. Since the optimal regression under CFM/MFM is invariant to positive time weightings (cf.\ Lemma~\ref{lem:timeweight} in the flow section), $\lambda_t$ acts as a sampler-side preconditioner rather than changing the learned denoiser.

In DDIM ($\sigma_t^2=0$), guidance enters only through $\hat{\boldsymbol{\epsilon}}_\lambda$; in DDPM, the classifier-based step \eqref{eq:classifier_guided_mean} additionally scales with $\sigma_t^2=\tilde{\beta}_t$, naturally tempering guidance at low SNR.

\subsection{Stability: Failure Modes and Remedies}

Strong guidance can produce artefacts. We record common failure modes and practical mitigations; these are sampler-side and keep the learning objective unchanged.

\begin{itemize}
\item \emph{Over-saturation / washed textures.} Very large $\lambda$ may collapse contrast or wipe fine detail. \textbf{Fix:} dynamic thresholding on $\hat{\mathbf{x}}_0$ (clip per-sample to a percentile), or norm-rescaled guidance
$
\Delta\hat{\boldsymbol{\epsilon}}
\leftarrow
\Delta\hat{\boldsymbol{\epsilon}}\cdot
\frac{\|\hat{\boldsymbol{\epsilon}}_{\text{uncond}}\|_2}{\|\Delta\hat{\boldsymbol{\epsilon}}\|_2+\varepsilon}
$.
\item \emph{Exposure drift.} Accumulated positive or negative shifts change global brightness. \textbf{Fix:} zero-mean the guided increment in pixel space or employ a small noise floor $\sigma_t\leftarrow\max(\sigma_t,\sigma_{\min})$ to keep stochasticity.
\item \emph{Attribute leakage.} Guidance for one attribute disrupts others. \textbf{Fix:} time-varying schedule \eqref{eq:lambda_schedule} with moderate late-time $\lambda_{\max}$; optionally, project $\Delta\hat{\boldsymbol{\epsilon}}$ onto subspaces aligned with attribute tokens (when available).
\end{itemize}

\subsection{Guidance Distillation}

Guided sampling with large $\lambda$ is slower or unstable on-device. A student model can \emph{distil} a teacher that uses guidance $\lambda_{\mathrm{T}}>1$ by matching its predicted $\hat{\mathbf{x}}_0$ (or velocity/score) along teacher trajectories \cite{salimans-2022-progressive}.

\begin{definition}[Guidance distillation objective]
Given a teacher predictor $\hat{\boldsymbol{\epsilon}}^{\mathrm{T}}_{\lambda_{\mathrm{T}}}$ and a grid $\{\tau_k\}$, define the student loss
\[
\mathcal{L}_{\mathrm{distill}}
=
\sum_k
\mathbb{E}\Big[
\big\|
\hat{\mathbf{x}}^{\mathrm{S}}_0(\mathbf{x}_{\tau_k})
-
\hat{\mathbf{x}}^{\mathrm{T}}_0(\mathbf{x}_{\tau_k})
\big\|_2^2
\Big],
\qquad
\hat{\mathbf{x}}_0(\mathbf{x}_t)
=
\frac{\mathbf{x}_t-\sqrt{1-\bar{\alpha}_t}\,\hat{\boldsymbol{\epsilon}}(\mathbf{x}_t,t)}{\sqrt{\bar{\alpha}_t}}.
\]
\end{definition}

\begin{proposition}[Trajectory agreement under exact matching]
If the student matches the teacher’s $\hat{\mathbf{x}}_0$ exactly on the grid $\{\tau_k\}$, then DDIM sampling of the student reproduces the teacher’s guided trajectory on that grid.
\end{proposition}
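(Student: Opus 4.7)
The plan is to proceed by induction on the grid index $k$, using the fact that the DDIM update is a deterministic function of the current state $\mathbf{x}_{\tau_k}$ and the predictor $\hat{\mathbf{x}}_0(\mathbf{x}_{\tau_k},\tau_k)$ alone, once the schedule $\{\bar{\alpha}_{\tau_k}\}$ is fixed. First, I would rewrite the DDIM update \eqref{eq:ddim} purely in terms of $\hat{\mathbf{x}}_0$. Substituting the invertible relation
\[
\hat{\boldsymbol{\epsilon}}(\mathbf{x}_t,t) \;=\; \frac{\mathbf{x}_t - \sqrt{\bar{\alpha}_t}\,\hat{\mathbf{x}}_0(\mathbf{x}_t,t)}{\sqrt{1-\bar{\alpha}_t}}
\]
into \eqref{eq:ddim} expresses one step as an affine function
\[
\mathbf{x}_{\tau_{k-1}} \;=\; \Phi_{\tau_k,\tau_{k-1}}\!\big(\mathbf{x}_{\tau_k},\,\hat{\mathbf{x}}_0(\mathbf{x}_{\tau_k},\tau_k)\big),
\]
whose coefficients depend only on $\bar{\alpha}_{\tau_k}$ and $\bar{\alpha}_{\tau_{k-1}}$, shared by student and teacher. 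This encapsulates determinism of the DDIM stepper given the $\hat{\mathbf{x}}_0$-predictor.

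Next, I would set up induction on $k=0,1,\dots,K$. For the base case, both chains start from the same terminal noise sample $\mathbf{x}_{\tau_0}$, since the hypothesis is about matching the teacher's trajectory and the initial point is shared by construction. For the inductive step, assume $\mathbf{x}^{\mathrm{S}}_{\tau_k}=\mathbf{x}^{\mathrm{T}}_{\tau_k}=:\mathbf{x}_{\tau_k}$. By the exact-matching hypothesis, $\hat{\mathbf{x}}^{\mathrm{S}}_0(\mathbf{x}_{\tau_k},\tau_k)=\hat{\mathbf{x}}^{\mathrm{T}}_0(\mathbf{x}_{\tau_k},\tau_k)$, so applying the common map $\Phi_{\tau_k,\tau_{k-1}}$ to identical arguments yields $\mathbf{x}^{\mathrm{S}}_{\tau_{k-1}}=\mathbf{x}^{\mathrm{T}}_{\tau_{k-1}}$, closing the induction.

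The main obstacle is essentially notational rather than mathematical: one must check that the DDIM stepper never needs predictor evaluations off the grid $\{\tau_k\}$, otherwise the hypothesis (which only controls on-grid values) would be insufficient. This is automatic because the endpoint formula \eqref{eq:ddim}, established in Theorem~\ref{thm:ddim-as-flow}, queries $\hat{\boldsymbol{\epsilon}}$ (equivalently $\hat{\mathbf{x}}_0$) solely at the current grid time $\tau_k$. A minor caveat worth recording is that the conclusion is pointwise on the grid only: agreement of the underlying continuous-time probability-flow ODE solutions is neither claimed nor implied, since the student and teacher $\hat{\mathbf{x}}_0$-fields may differ arbitrarily away from the points actually visited during sampling.
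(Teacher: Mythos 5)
Your proposal is correct and follows essentially the same route as the paper: invoke the DDIM endpoint relation of Theorem~\ref{thm:ddim-as-flow} and induct over the grid index, noting that equal states and equal $\hat{\mathbf{x}}_0$ at $\tau_k$ force equal states at $\tau_{k-1}$. Your explicit observation that $\hat{\boldsymbol{\epsilon}}$ is recovered from $(\mathbf{x}_{\tau_k},\hat{\mathbf{x}}_0)$ via the invertible linear relation is a small but welcome detail that the paper's proof leaves implicit.
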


\begin{proof}
The DDIM endpoint relation (Theorem~\ref{thm:ddim-as-flow}) expresses $\mathbf{x}_{\tau_{k-1}}$ as a function of $\hat{\mathbf{x}}_0(\mathbf{x}_{\tau_k})$ and $\hat{\boldsymbol{\epsilon}}(\mathbf{x}_{\tau_k})$. Exact agreement of $\hat{\mathbf{x}}_0$ at $\tau_k$ implies equality of the computed endpoints at $\tau_{k-1}$; induction over $k$ completes the proof.
\end{proof}

\section{Conclusion}

This tutorial set out a self-contained, mathematically precise account of denoising diffusion models and their deterministic counterparts. We began from the forward construction, where the variance-preserving chain with $\alpha_t:=1-\beta_t$ and $\bar{\alpha}_t=\prod_{i=1}^t\alpha_i$ yields closed-form marginals $q(\mathbf{x}_t\mid\mathbf{x}_0)=\mathcal{N}(\sqrt{\bar{\alpha}_t}\,\mathbf{x}_0,(1-\bar{\alpha}_t)\mathbf{I}_d)$ and the reparameterisation $\mathbf{x}_t=\sqrt{\bar{\alpha}_t}\,\mathbf{x}_0+\sqrt{1-\bar{\alpha}_t}\,\boldsymbol{\epsilon}$. This representation underpins both analysis and optimisation: it justifies rewriting expectations over $\mathbf{x}_t$ as expectations over $\boldsymbol{\epsilon}$ and makes gradients through sampling straightforward.\medskip

Building on the forward process, we derived the true single-step posterior $q(\mathbf{x}_{t-1}\mid \mathbf{x}_t,\mathbf{x}_0)$ in full, including its mean and covariance, by completing the square. The mean admits equivalent $\mathbf{x}_0$– and $\boldsymbol{\epsilon}$–forms, and the posterior variance is $\tilde{\beta}_t=\frac{1-\bar{\alpha}_{t-1}}{1-\bar{\alpha}_t}\beta_t$. These identities lead directly to the standard reverse parameterisations used in practice. Under noise prediction, training reduces to weighted denoising in the forward reparameterisation, and the alternative parameterisations (predicting $\boldsymbol{\epsilon}$ or $\mathbf{x}_0$ or $v$) are consistent once weights are chosen coherently.\medskip

Acceleration methods were then developed in discrete time. In Section~\ref{sec:DDIM} we showed that DDIM defines a family of reverse conditionals that preserve the DDPM marginals while allowing a free per-step variance $\sigma_t^2$. Choosing $\sigma_t^2=\tilde{\beta}_t$ recovers DDPM; choosing $\sigma_t^2=0$ yields a deterministic sampler. The derivation made explicit how the mean decomposes into a component aligned with $\mathbf{x}_0$ and a rescaled noise component aligned with $\mathbf{x}_t-\sqrt{\bar{\alpha}_t}\mathbf{x}_0$, and why reduced step schedules remain consistent.\medskip

Section~\ref{subsec:stable-diffusion} translated the entire construction to latent space. An encoder maps data to latents on which diffusion operates; a decoder recovers samples at the end. All identities—forward marginals, true posterior, reverse mean with the factor $\beta_t/\sqrt{1-\bar{\alpha}_t}$, and the DDPM/DDIM variance choices—carry over after the replacement $\mathbf{x}\mapsto \mathbf{z}$. This explains why latent diffusion achieves substantial speedups without changing the underlying mathematics.\medskip

Section~\ref{sec:guided_diffusion} addressed controllable generation. Classifier-based guidance modifies the reverse mean by a score term $\nabla_{\mathbf{x}_t}\log p_\phi(y\mid \mathbf{x}_t,t)$ scaled to the step variance. Classifier-free guidance avoids an auxiliary classifier by blending conditional and unconditional noise predictions; we clarified how this acts as a learned first-order step in a condition-aligned direction. In both cases, the update is consistent with the same reverse parameterisation, and the trade-off between fidelity and diversity is governed by the guidance scale.\medskip

The flow-matching perspective re-expressed the diffusion family through a deterministic probability–flow ODE whose velocity uses the score $s_t=\nabla_{\mathbf{x}}\log p_t$. We proved that this ODE has the same time marginals as the forward SDE, thereby connecting DDIM-style deterministic sampling with the continuum view (Theorem~\ref{thm:pflow}). This equivalence clarifies why high-order ODE solvers, step-size control, and numerical-analysis tools are effective in diffusion sampling, and it motivates training objectives that fit a velocity field to prescribed marginals.\medskip

These pieces form a consistent picture. The forward chain supplies tractable marginals and a re-parameterisation for optimisation; the true posterior determines principled reverse steps; DDIM and probability–flow provide deterministic samplers; latent diffusion relocates computation without altering the theory; and guidance modifies the reverse mean in a controlled way. Throughout, careful bookkeeping—using $\bar{\alpha}_t$ consistently and the posterior-mean factor $\beta_t/\sqrt{1-\bar{\alpha}_t}$—avoids common algebraic errors and keeps derivations aligned.\medskip

There are natural next steps. One can push the numerical side (adaptive solvers, learned time parameterisations, distillation to very few steps); extend conditioning mechanisms (structured controls, adapters) within the same reverse-mean framework; and explore training directly in the flow-matching view. On the theoretical side, sharper guarantees for discretisation error, guidance robustness, and identifiability of parameterisations remain active topics. The hope is that the proofs and careful notation here make these directions easier to approach, and that the tutorial serves as a reliable reference for both implementation and analysis.


\end{document}